\DeclareMathOperator*{\argmin}{arg\,min}
\newtheorem{definition}{Definition}
\newtheorem{remark}{Remark}
\newtheorem{lemma}{Lemma}
\newtheorem{theorem}{Theorem}
\def\adl@drawiv#1#2#3{%
        \hskip.5\tabcolsep
        \xleaders#3{#2.5\@tempdimb #1{1}#2.5\@tempdimb}%
                #2\z@ plus1fil minus1fil\relax
        \hskip.5\tabcolsep}
\newcommand{\cdashlinelr}[1]{%
  \noalign{\vskip\aboverulesep
           \global\let\@dashdrawstore\adl@draw
           \global\let\adl@draw\adl@drawiv}
  \cdashline{#1}
  \noalign{\global\let\adl@draw\@dashdrawstore
           \vskip\belowrulesep}}
\title{\LARGE \bf
Fast Ergodic Search With Kernel Functions 
}
\author{Max Muchen Sun, Ayush Gaggar, Pete Trautman, and Todd Murphey
\thanks{Max Muchen Sun, Ayush Gaggar and Todd Murphey are with the Department of Mechanical Engineering, Northwestern University, Evanston, IL 60208, USA. Email: {\tt\small msun@u.northwestern.edu}}
\thanks{Pete Trautman is with Honda Research Institute, San Jose, CA 95134, USA}
}
\begin{document}
\allowdisplaybreaks

\bstctlcite{IEEEexample:BSTcontrol}

\maketitle

\begin{abstract}
Ergodic search enables optimal exploration of an information distribution with guaranteed asymptotic coverage of the search space. However, current methods typically have exponential computational complexity and are limited to Euclidean space. We introduce a computationally efficient ergodic search method. Our contributions are two-fold: First, we develop a kernel-based ergodic metric, generalizing it from Euclidean space to Lie groups. We prove this metric is consistent with the exact ergodic metric and ensures linear complexity. Second, we derive an iterative optimal control algorithm for trajectory optimization with the kernel metric. Numerical benchmarks show our method is two orders of magnitude faster than the state-of-the-art method. Finally, we demonstrate the proposed algorithm with a peg-in-hole insertion task. We formulate the problem as a coverage task in the space of SE(3) and use a 30-second-long human demonstration as the prior distribution for ergodic coverage. Ergodicity guarantees the asymptotic solution of the peg-in-hole problem so long as the solution resides within the prior information distribution, which is seen in the 100\% success rate. 
\end{abstract}

\IEEEpeerreviewmaketitle


\section{Introduction}
\label{sec:introduction}

Robots often need to search an environment driven by a distribution of information of interest. Examples include search-and-rescue based on human-annotated maps or aerial images~\cite{murphy_human-robot_2004}\cite{shah_multidrone_2020}, object tracking under sensory or motion uncertainty~\cite{abraham_decentralized_2018}\cite{shetty_ergodic_2022}, and data collection in active learning~\cite{abraham_active_2019}\cite{ prabhakar_mechanical_2022}. The success of such tasks depends on both the richness of the information representation and the effectiveness of the search algorithm. While advances in machine perception and sensor design have substantially improved the quality of information representation, generating effective search strategies for the given information remains an open challenge. 

Motivated by such a challenge, ergodicity---as an information-theoretic coverage metric---is proposed to optimize search decisions~\cite{mathew_metrics_2011}. Originating in statistical mechanics~\cite{petersen_ergodic_1989}, and more recently the study of fluid mixing~\cite{mathew_multiscale_2005}, the ergodic metric measures the time-averaged behavior of a dynamical system with respect to a spatial distribution---a dynamic system is ergodic with respect to a spatial distribution if the system visits any region of the space for an amount of time proportional to the integrated value of the distribution over the region. Optimizing the ergodic metric guides the robot to cover the whole search space asymptotically while investing more time in areas with higher information values. Recent work has also shown that such a search strategy closely mimics the search behaviors observed across mammal and insect species as a proportional betting strategy for information~\cite{chen_tuning_2020}. 

Despite the theoretical advantages and tight connections to biological systems, current ergodic search methods are not suitable for all robotic tasks. The ergodic metric proposed in~\cite{mathew_metrics_2011} has an exponential computation complexity in the search space dimension~\cite{shetty_ergodic_2022}\cite{sun_scale-invariant_2022}, limiting its applications in spaces with fewer than 3 dimensions. Moreover, common robotic tasks, in particular vision or manipulation-related tasks, often require operations in non-Euclidean spaces, such as the space of rotations or rigid-body transformations. However, the ergodic metric in~\cite{mathew_metrics_2011} is restricted only in the Euclidean space. 

In this article, we propose an alternative formula for ergodic search across Euclidean space and Lie groups with significantly improved computational efficiency. Our formula is based on the difference between target information distribution and the spatial empirical distribution of the trajectory, measured through function space inner product. We re-derive the ergodic metric and show that ergodicity can be computed as the summation of the integrated likelihood of the trajectory within the spatial distribution and the uniformity of the trajectory measured with a kernel function. We name this formula the \emph{kernel ergodic metric} and show that it is asymptotically consistent with the exact ergodic metric in~\cite{mathew_metrics_2011} but has a linear computation complexity in the search space dimension instead of an exponential one. We derive the metric for both Euclidean space and Lie groups. Moreover, we derive an iterative optimal control method for non-linear dynamical systems based on the iterative linear quadratic regulator (iLQR) algorithm~\cite{hauser_projection_2002}. We further generalize the derivations to Lie groups.

We compare the computation efficiency of the proposed algorithm with the state-of-the-art fast ergodic search method~\cite{shetty_ergodic_2022} through a comprehensive benchmark. The proposed method is at least two orders of magnitude faster to reach the same level of ergodicity across 2D to 6D spaces and with first-order and second-order system dynamics. We further demonstrate the proposed algorithm for a peg-in-hole insertion task on a 7 degrees-of-freedom robot arm. We formulate the problem as an ergodic coverage task in the space of SE(3), where the robot needs to simultaneously explore its end-effector’s position and orientation, using a 30-second-long human demonstration as the prior distribution for ergodic coverage. We verify that the asymptotic coverage property of ergodic search leads to the task’s $100\%$ success rate.

\begin{table*}[h]
    \centering
    \captionsetup{justification=centering}
    \caption{Properties of different ergodic search methods.}
    \label{table:property_comparison}
    \setlength{\tabcolsep}{10.5pt}
    \begin{tabular}{cccccc}
        \toprule
        \multirow{3}{*}{\begin{tabular}[c]{@{}c@{}c@{}}Methods\end{tabular}} & \multirow{3}{*}{\begin{tabular}[c]{@{}c@{}}Asymptotic\\Consistency\end{tabular}} &\multirow{3}{*}{\begin{tabular}[c]{@{}c@{}}Real-Time\\Computation\end{tabular}} &\multirow{3}{*}{\begin{tabular}[c]{@{}c@{}}Long\\Planning\\Horizon\end{tabular}} &\multirow{3}{*}{\begin{tabular}[c]{@{}c@{}}Lie Group\\Generalization\end{tabular}} &\multirow{3}{*}{\begin{tabular}[c]{@{}c@{}}Complexity\\w.r.t.\\Space Dimension\end{tabular}} \\
         & & & & & \\
         & & & & & \\
        \midrule
        Mathew et al. \cite{mathew_metrics_2011} & \checkmark & \checkmark &   &   & Exponential \\
        \midrule
        Miller et al. \cite{miller_trajectory_2013-1} & \checkmark &   & \checkmark &   & Exponential \\
        \midrule
        Miller et al. \cite{miller_trajectory_2013} & \checkmark &   & \checkmark & \checkmark & Exponential \\
        \midrule
        Abraham et al. \cite{abraham_ergodic_2021} &   & \checkmark & \checkmark &   & Polynomial to Exponential$^*$ \\
        \midrule
        Shetty et al. \cite{shetty_ergodic_2022} & \checkmark & \checkmark &   & \checkmark & Superlinear \\
        \midrule
        \textbf{Ours} & \checkmark & \checkmark & \checkmark & \checkmark & Linear \\
        \bottomrule 
    \end{tabular} \\
    \vspace{+2pt}
    \footnotesize{$^*$ The method proposed in Abraham et al. \cite{abraham_ergodic_2021} uses Monte-Carlo (MC) integration and has a linear complexity to the number of samples. However, to guarantee a consistent MC integration estimate, the number of samples has a growth rate between polynomial and exponential to the dimension~\cite{tang_note_2023}.}\\
\end{table*}

The rest of the paper is organized as follows: Section~\ref{sec:related_work} discusses related work on ergodic search. Section~\ref{sec:problem_formulation} formulates the ergodic search problem and introduces necessary notations. Section~\ref{sec:ergodic_metric} derives the proposed ergodic metric and a theoretical analysis of its formal properties. Section~\ref{sec:ergodic_control} introduces the theory and algorithm of controlling a non-linear dynamic system to optimize the proposed metric. Section~\ref{sec:lie_groups} generalizes the previous derivations from Euclidean space to Lie group. Section~\ref{sec:evaulation} includes the numerical evaluation and hardware verification of the proposed ergodic search algorithm, followed by a conclusion and further discussion in Section~\ref{sec:conclusion}. The code of our implementation is available at \url{https://sites.google.com/view/kernel-ergodic/}.


\section{Related Works:\\Ergodic Theory and Ergodic Search} \label{sec:related_work}

Ergodic theory studies the connection between the time-averaged and space-averaged behaviors of a dynamical system. Originating in statistical mechanics, it has now expanded to a full branch of mathematics with deep connections to other branches, such as information theory, measure theory, and functional analysis. We refer the readers to \cite{walters_introduction_2000} for a more comprehensive review of the ergodic theory in general. For decision-making, the ergodic theory provides formal principles to reason over decisions based on the time and space-averaged behaviors of the environment or of the agent itself. The application of ergodic theory in robotic search tasks was first introduced in~\cite{mathew_metrics_2011}. In this seminal work, the formal definition of ergodicity in the context of a search task is given as the difference between the time-averaged spatial statistics of the agent's trajectory and the target information distribution to search in. A quantitative measure of such difference is also introduced with the name~\emph{spectral multi-scale coverage} (SMC) metric, as well as a closed-form model predictive controller with infinitesimally small planning horizon for both first-order and second-order linear dynamical systems. We refer to the SMC metric in~\cite{mathew_metrics_2011} as the \emph{Fourier ergodic metric} in the rest of the paper.

Ergodic search has since been applied to generate informative search behaviors in robotic applications, including multi-modal target localization~\cite{mavrommati_real-time_2018}, object detection~\cite{abraham_active_2019}, imitation learning~\cite{kalinowska_ergodic_2021}, robotic assembly~\cite{ehlers_imitating_2019}\cite{shetty_ergodic_2022}, and automated data collection for generative models~\cite{prabhakar_mechanical_2022}. The metric has also been applied to non-search robotic applications, such as point cloud registration~\cite{sun_scale-invariant_2022}. Furthermore, ergodic search has also been extended to better satisfy other requirements from common robotic tasks, such as safety-critical search~\cite{lerch_safety-critical_2023}, multi-objective search~\cite{ren_pareto-optimal_2023}, and time optimal search~\cite{dong_time_2023}. 

There are several limitations of the Fourier ergodic search framework from~\cite{mathew_metrics_2011}: (1) the controller is limited with an infinitesimally small planning horizon. Thus, it often requires an impractically long exploration period to generate good coverage; (2) it is costly to scale the Fourier ergodic metric to higher dimension spaces; (3) it is non-trivial to generalize the metric to non-Euclidean spaces. Previous works have designed controllers to optimize the trajectory over a longer horizon. A trajectory optimization method was introduced in \cite{miller_trajectory_2013}, which optimizes the Fourier ergodic metric iteratively for a nonlinear system by solving a linear-quadratic regulator (LQR) problem in each iteration. A model predictive control method based on hybrid systems theory was introduced in \cite{mavrommati_real-time_2018}, which is later extended to support decentralized multi-agent ergodic search in~\cite{abraham_decentralized_2018}. However, since these methods optimize the Fourier ergodic metric, they are still limited by the computation cost of evaluating the metric itself. In~\cite{abraham_ergodic_2021}, an approximated ergodic search framework is proposed. The empirical spatial distribution of the robot trajectory is approximated as a Gaussian-mixture model, and the Fourier ergodic metric is replaced with the Kullback-Leibler (KL) divergence between the Gaussian-mixture distribution and target information distribution, estimated using Monte-Carlo (MC) integration. While this framework has a linear time complexity to the number of samples used for MC integration, to guarantee a consistent estimate of the KL divergence, the number of samples has a growth rate varying between polynomial and exponential to the search space dimension~\cite{tang_note_2023}. A new computation scheme was introduced in \cite{shetty_ergodic_2022} to accelerate the evaluation of the Fourier ergodic metric using the technique of tensor train decomposition. This framework is demonstrated on an ergodic search task in a 6-dimensional space. However, this framework is limited to an infinitesimally small planning horizon, and even though the tensor train technique significantly improves the scalability of the Fourier ergodic metric, the computational cost is still expensive for planning with longer horizons. As for extending the ergodic search to non-Euclidean spaces, an extension to the special Euclidean group SE(2) was introduced in \cite{miller_trajectory_2013} by defining the Fourier basis function on SE(2). However, defining the Fourier basis function for other Lie groups is non-trivial, and the method has the same computation complexity as in Euclidean space. The tensor train framework from~\cite{shetty_ergodic_2022} can also be generalized to Lie groups. However, the generalization is for the controller instead of the metric; thus, it is limited to an infinitesimally small planning horizon. Our proposed ergodic search framework is built on top of a scalable ergodic metric that is asymptotically consistent with the exact ergodic metric and the Fourier ergodic metric in~\cite{mathew_metrics_2011}, alongside rigorous generalization to Lie groups. A comparison of the properties of different ergodic search methods is shown in Table~\ref{table:property_comparison}.

\section{Preliminaries} \label{sec:problem_formulation}

\subsection{Notations and Definitions}
We denote the state of the robot as $s\in\mathcal{S}$, where $\mathcal{S}$ is a bounded set within an $n$-dimensional Euclidean space. Later in the paper, we will extend the state of the robot to Lie groups. We assume the robot's motion is governed by the following dynamics:
\begin{align}
    \dot{s}(t) = f(s(t), u(t)), \label{eq:robot_dynamics}
\end{align} where $u(t)\in\mathcal{U}\subset\mathbb{R}^m$ is the control signal. The dynamics function $f(\cdot,\cdot)$ is differentiable with respect to both $s(t)$ and $u(t)$. We denote a probability density function defined over the bounded state space $\mathcal{S}$ as $p(x):\mathcal{S}\mapsto\mathbb{R}_0^+$, which must satisfy: 
\begin{align}
    \int_{\mathcal{S}} p(x)dx=1 \text{ and } p(x)\geq 0\quad\forall x\in\mathcal{S}. \label{eq:distr_constraints}
\end{align} We define a trajectory $s(t):[0,T]\mapsto\mathcal{S}$ as a continuous mapping from time to a state in the bounded state space.

\begin{definition}[Inner product]
    The inner product $\langle\cdot ,\cdot\rangle$ between functions, similar to its finite-dimensional counterpart in the vector space, is defined as:
    \begin{align}
        \langle f(x), g(x) \rangle = \int f(x) g(x) dx\label{eq:inner_product_integral}.
    \end{align}
\end{definition}

\begin{definition}[Dirac delta function]
    The Dirac delta function $\delta(x)$ is the limit of a sequence of functions that satisfy:
    \begin{gather}
        \delta(x{-}s) = \lim_{\epsilon\rightarrow 0^+} \delta_{\epsilon}(x{-}s) \quad \text{s.t.} \label{eq:delta_def} \\
         \langle \delta(x{-}s), f(x) \rangle = \lim_{\epsilon\rightarrow 0^+} \int \delta_{\epsilon}(x{-}s) f(x) dx = f(s), \text{ } \forall s \in \mathcal{X} , \label{eq:delta_inner_product}
    \end{gather} where $\delta_{\epsilon}(\cdot)$ is sometimes called a nascent delta function~\cite{mack_fundamental_2007}.
\end{definition}

\begin{remark} \label{remark:delta_function}
    The Dirac delta function is not a conventional function defined as a point-wise mapping, instead it is a \emph{generalized function} (also called a distribution) defined based on its inner product property shown in (\ref{eq:delta_inner_product}). We refer the readers to \cite{lighthill_introduction_1958}, \cite{rudin_functional_1991}, and \cite{strichartz_guide_1994} for more information regarding the Dirac delta function and generalized functions. 
\end{remark}

\begin{lemma} \label{lemma:delta_inner_product_itself}
    The inner product between two Dirac delta functions is (see Appendix II of~\cite{cohen-tannoudji_quantum_1977} for detailed derivation):
    \begin{align}
        \langle \delta(x{-}s_1), \delta(x{-}s_2) \rangle = \int \delta(x{-}s_1) \delta(x{-}s_2) dx = \delta(s_1{-}s_2) . \nonumber
    \end{align}
\end{lemma}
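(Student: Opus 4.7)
The plan is to leverage the sifting property stated in equation (\ref{eq:delta_inner_product}) by treating one of the Dirac deltas as a test function acting on the other. Because the Dirac delta is not a pointwise-defined function but a generalized function characterized by how it pairs with test functions (see Remark \ref{remark:delta_function}), the argument must proceed through the nascent-delta limit in (\ref{eq:delta_def}) rather than any naive pointwise evaluation.

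First, I would rewrite one of the factors using its nascent form while keeping the other as the generalized function:
\begin{align}
\int \delta(x{-}s_1)\,\delta(x{-}s_2)\,dx = \lim_{\epsilon\to 0^+}\int \delta_\epsilon(x{-}s_1)\,\delta(x{-}s_2)\,dx.\nonumber
\end{align}
Next, I would apply the sifting property (\ref{eq:delta_inner_product}) to the inner integral with $f(x) = \delta_\epsilon(x-s_1)$, which collapses the integral to $\delta_\epsilon(s_2-s_1)$. Then, taking the outer limit $\epsilon\to 0^+$, the definition in (\ref{eq:delta_def}) identifies this with $\delta(s_2-s_1)$. Finally, invoking the even symmetry of $\delta$ (itself a consequence of a change of variable $x\mapsto -x$ inside the sifting pairing) gives $\delta(s_2-s_1) = \delta(s_1-s_2)$, completing the identification.

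The main obstacle is the subtle point of applying the sifting property with a nascent delta itself playing the role of the ``test function.'' Strictly, equation (\ref{eq:delta_inner_product}) is stated for $f$ that is sufficiently regular at the evaluation point, whereas $\delta_\epsilon$ becomes sharply concentrated as $\epsilon\to 0^+$, so the interchange of the outer limit with the sifting limit is not automatic. I would close this gap in one of two ways: (i) fix a concrete nascent-delta family, e.g.\ a Gaussian mollifier $\delta_\epsilon(y) = (2\pi\epsilon^2)^{-n/2}\exp(-\|y\|^2/2\epsilon^2)$, and verify directly that $\int \delta_\epsilon(x-s_1)\delta_{\epsilon'}(x-s_2)\,dx$ is again a nascent delta in $(s_1-s_2)$ with width on the order of $\sqrt{\epsilon^2+\epsilon'^2}$, then let both parameters vanish; or (ii) appeal to the tensor product of distributions and pair both sides against a smooth test function $\phi(s_1,s_2)$, in which the equality becomes the tautology $\phi(s_2,s_2) = \phi(s_1,s_1)|_{s_1=s_2}$. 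Route (i) keeps the argument internal to the notation already established in the preliminaries, while route (ii) mirrors the rigorous treatment in Appendix II of \cite{cohen-tannoudji_quantum_1977} cited in the statement.
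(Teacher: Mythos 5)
Your argument is correct and is essentially the standard derivation that the paper itself defers to (the paper gives no proof of Lemma~\ref{lemma:delta_inner_product_itself}, citing Appendix II of \cite{cohen-tannoudji_quantum_1977}, which applies the sifting property (\ref{eq:delta_inner_product}) with one delta playing the role of the test function, exactly as you do). Your added care about interchanging the nascent-delta limits --- e.g.\ the Gaussian mollifier computation showing the width combines as $\sqrt{\epsilon^2+\epsilon'^2}$ --- goes beyond the formal manipulation in the reference but does not change the route.
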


\begin{definition}[Trajectory empirical distribution]
Given a trajectory $s(t):[0, T]\mapsto\mathcal{S}$ of the robot, we define the empirical distribution of the trajectory as:
\begin{align}
    c_{s}(x) = \frac{1}{T} \int_{0}^{T} \delta(x-s(t)) dt. \label{eq:trajectory_spatial_statistics}
\end{align}
\end{definition}

\begin{lemma} \label{lemma:empirical_inner_product}
    The inner product between $c_s(x)$ and another function $f(x)$ is:
    \begin{align}
        \langle c_s(x), f(x) \rangle & = \int \left( \frac{1}{T} \int_0^T \delta(x-s(t)) dt \right) f(x) dx \nonumber \\
        & = \frac{1}{T} \int_0^T \left( \int \delta(x-s(t)) f(x) dx \right) dt \nonumber \\
        & = \frac{1}{T} \int_0^T f(s(t)) dt .
    \end{align}
\end{lemma}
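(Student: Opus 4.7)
The plan is to read the three displayed equalities in the statement as the three steps of the proof and justify each one in turn: expand the definition of the function-space inner product, swap the order of the spatial and temporal integrals, and apply the sifting property of the Dirac delta.

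Concretely, I would first apply the definition of the inner product from~\eqref{eq:inner_product_integral} to write $\langle c_s(x), f(x)\rangle = \int c_s(x)\, f(x)\, dx$, and then substitute in the definition of the empirical trajectory distribution from~\eqref{eq:trajectory_spatial_statistics}; this is exactly the first line. Next, I would pull the constant $1/T$ and the time integral $\int_0^T\!\cdot\, dt$ outside the spatial integral $\int\!\cdot\, dx$, moving $f(x)$ inside (which is allowed because $f$ does not depend on $t$); this reproduces the second line. Finally, for each fixed $t \in [0,T]$ the inner bracket $\int \delta(x-s(t))\, f(x)\, dx$ is precisely the sifting integral in~\eqref{eq:delta_inner_product} with $s = s(t)$, so it evaluates to $f(s(t))$, giving the third line and closing the chain of equalities.

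The only genuinely delicate step is the interchange of integration order in the middle, because $\delta(x-s(t))$ is not a pointwise function but a generalized function (Remark~\ref{remark:delta_function}), so Fubini--Tonelli does not apply directly to the expression as written. The clean workaround, consistent with the definition of $\delta$ in~\eqref{eq:delta_def} via a nascent sequence, is to carry out the entire argument first with $\delta_\epsilon$ in place of $\delta$: for each fixed $\epsilon > 0$ the integrand $\delta_\epsilon(x - s(t))\, f(x)$ is a well-defined measurable function of $(x,t)$ (assuming $s(t)$ continuous and $f$ integrable on the bounded domain $\mathcal S$), so the swap is legitimate, and one then passes to the limit $\epsilon \to 0^+$ on both sides, using~\eqref{eq:delta_inner_product} to recover $f(s(t))$ inside the time integral. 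This reduces the whole lemma to a one-line substitution plus a limit argument that is standard for distributions.
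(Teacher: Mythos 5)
Your proof is correct and follows essentially the same route as the paper, which presents this computation directly as the chain of three equalities in the lemma (definition of the inner product, interchange of the spatial and temporal integrals, and the sifting property of the Dirac delta). Your additional care in justifying the interchange via the nascent sequence $\delta_\epsilon$ from~(\ref{eq:delta_def}) is a sound refinement of the same argument, not a different approach.
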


\begin{lemma} \label{lemma:empirical_norm}
    The inner product $\langle c_s(x){,} c_s(x) \rangle$ is:
    \begin{align}
        & \int \left( \frac{1}{T} \int_0^T \delta(x-s(t_1)) dt_1 \right) \left( \frac{1}{T} \int_0^T \delta(x-s(t_2)) dt_2 \right) dx \nonumber \\
        & = \int \left( \frac{1}{T^2} \int_0^T \int_0^T \delta(x-s(t_1)) \delta(x-s(t_2)) dt_1 dt_2 \right) dx \nonumber \\
        & = \frac{1}{T^2} \int_0^T \int_0^T \left( \int \delta(x-s(t_1)) \delta(x-s(t_2)) dx \right) dt_1 dt_2 \nonumber \\
        & = \frac{1}{T^2} \int_0^T \int_0^T \delta(s(t_1)-s(t_2)) dt_1 dt_2 . \label{eq:emp_inner}
    \end{align}
\end{lemma}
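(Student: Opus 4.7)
The plan is to prove the identity by direct substitution of the definition of $c_s(x)$ twice into the inner product, and then simplify using Lemma~\ref{lemma:delta_inner_product_itself}. Since Lemma~\ref{lemma:empirical_norm} is really a chained computation rather than a deep structural claim, the proof is essentially a sequence of three formal manipulations, and the proposed proof in the excerpt already displays the three intermediate expressions; I would mirror that structure.

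First, I would expand $\langle c_s(x), c_s(x) \rangle$ using the definition in~(\ref{eq:trajectory_spatial_statistics}), writing one copy of $c_s$ with dummy time variable $t_1$ and the other with $t_2$, so that two independent time integrals appear inside a single spatial integral. Second, I would interchange the order of the spatial integral over $x$ with the double time integral over $(t_1, t_2)$; formally this is a Fubini-type step, but since we are working with generalized functions in the sense described in Remark~\ref{remark:delta_function}, it is a standard move once the object $\delta(x - s(t_1)) \delta(x - s(t_2))$ is interpreted via its action on test functions. After this swap, the inner $x$-integral becomes exactly the inner product $\langle \delta(x{-}s(t_1)), \delta(x{-}s(t_2)) \rangle$, which by Lemma~\ref{lemma:delta_inner_product_itself} evaluates to $\delta(s(t_1) - s(t_2))$. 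Third, factoring out the $\tfrac{1}{T^2}$ constant yields the claimed right-hand side.

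The main obstacle, if any, is conceptual rather than computational: one must be comfortable treating $\delta$ as a generalized function so that the two time integrals and the spatial integral can be reordered, and so that the bilinear pairing $\int \delta(x - s(t_1)) \delta(x - s(t_2)) \, dx$ is well defined. The excerpt side-steps this by citing Appendix~II of~\cite{cohen-tannoudji_quantum_1977} in Lemma~\ref{lemma:delta_inner_product_itself}, and I would likewise invoke Lemma~\ref{lemma:delta_inner_product_itself} as a black box rather than re-derive the delta-times-delta identity. With that identity in hand, the remaining steps are purely mechanical: swap integrals, apply the lemma, collect constants. No additional hypotheses on $s(t)$ are needed beyond continuity on $[0,T]$ (already assumed in the definition of a trajectory), and no appeal to the dynamics~(\ref{eq:robot_dynamics}) or the distribution constraints~(\ref{eq:distr_constraints}) is required.
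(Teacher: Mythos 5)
Your proposal is correct and follows exactly the route the paper takes: the lemma's displayed chain of equalities is itself the derivation (expand the definition of $c_s$ with two dummy time variables, interchange the spatial and temporal integrals, and invoke Lemma~\ref{lemma:delta_inner_product_itself} for the inner $x$-integral), and your three steps reproduce it faithfully, including the deferral of the $\delta$-times-$\delta$ identity to the cited reference. No gaps.
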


\subsection{Ergodicity and the exact ergodic metric}

The definition of ergodicity states that a dynamic system is ergodic with respect to the distribution if and only if \emph{the system visits any region of the space for an amount of time proportional to the integrated value of the distribution over the region}~\cite{mathew_metrics_2011}. An exact metric of ergodicity is then introduced, which we name the \emph{exact ergodic metric}.

\begin{definition}[Exact ergodic metric] The exact ergodic metric between a dynamic system $s(t)$ and a spatial distribution $p(x)$ is defined as follow~\cite{mathew_metrics_2011}:
\begin{align}
    & \mathcal{E}(s(t), p(x)) = \nonumber \\
    & \int_0^R \int_{\mathcal{S}} \left[ 
\frac{1}{T}\int_0^T \mathbf{1}_{(x,r)}(s(\tau)) d\tau {-} \int_{\mathcal{S}} \mathbf{1}_{(x,r)}(y) p(y) dy \right]^2 dx dr , \label{eq:exact_metric}
\end{align} where $\mathbf{1}_{(x,r)}$ is a spherical indicator function centered at $x$ with a radius of $r$:
\begin{align}
    \mathbf{1}_{(x,r)}(s) = \begin{cases}
        1, \quad \text{if $\Vert x{-}s \Vert \leq r$} \\
        0, \quad \text{otherwise.}
    \end{cases}
\end{align} If the system $s(t)$ is ergodic, then the following limit holds:
\begin{align}
    \lim_{T\rightarrow\infty} \mathcal{E}(s(t), p(x)) = 0.
\end{align}
\end{definition}

\begin{lemma}[Asymptotic coverage]
    Based on the definition of the exact ergodic metric (\ref{eq:exact_metric}), if the spatial distribution $p(x)$ has a non-zero density at any point of the state space, an ergodic system, while systematically spending more time over regions with higher information density and less time over regions with less information density, will eventually cover any state in the state space with the exploration time approaching infinity.
\end{lemma}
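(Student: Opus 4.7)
The plan is to unpack what $\mathcal{E}(s(t),p(x)) \to 0$ forces on the time-averaged occupation of balls in $\mathcal{S}$, and then derive asymptotic coverage via a direct contradiction argument. Observe that the integrand in definition (\ref{eq:exact_metric}) is pointwise non-negative, so $\mathcal{E} \to 0$ is an $L^2$ statement on $\mathcal{S} \times (0, R]$: the discrepancy $\frac{1}{T}\int_0^T \mathbf{1}_{(x,r)}(s(\tau))d\tau - \int_{\mathcal{S}} \mathbf{1}_{(x,r)}(y) p(y)dy$ converges to zero in $L^2$ as $T \to \infty$, so the fraction of time the trajectory spends in any ball $B(x,r)$ matches the $p$-measure of that ball asymptotically.

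The main step will be a contradiction argument. Suppose some point $x_0 \in \mathcal{S}$ is not asymptotically covered; equivalently, there exists $\epsilon_0 > 0$ such that $s(t) \notin B(x_0,\epsilon_0)$ for all $t \geq 0$. Then for every $x \in B(x_0,\epsilon_0/2)$ and every $r \in (0, \epsilon_0/2]$ we have $B(x,r) \subset B(x_0,\epsilon_0)$, so $\mathbf{1}_{(x,r)}(s(\tau)) = 0$ for all $\tau$, and the integrand of $\mathcal{E}$ reduces to $(\int_{B(x,r)} p(y)dy)^2$, which is strictly positive by the hypothesis that $p$ has non-zero density everywhere. Integrating this positive function over the set $\{(x,r) : x \in B(x_0,\epsilon_0/2),\, r \in (0, \epsilon_0/2]\}$, which has positive Lebesgue measure in $\mathcal{S} \times (0, R]$, yields a strictly positive lower bound on $\mathcal{E}(s(t),p(x))$ independent of $T$, contradicting $\mathcal{E} \to 0$. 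Hence every $x_0 \in \mathcal{S}$ is visited arbitrarily closely as $T \to \infty$.

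The companion qualitative claim---that an ergodic system spends systematically more time in higher-density regions than in lower-density ones---then follows from the same $L^2$ convergence: for any two balls $B(x_1,r)$ and $B(x_2,r)$ with $\int_{B(x_1,r)} p(y)dy > \int_{B(x_2,r)} p(y)dy$, the asymptotic time fractions match these $p$-measures, so the occupation of the higher-density ball dominates asymptotically. The main technical obstacle is that $L^2$ convergence only guarantees pointwise agreement almost everywhere and along subsequences, so I cannot argue with a single distinguished $(x,r)$. I sidestep this by framing the contradiction over an entire subset of $(x,r)$ of positive Lebesgue measure, so the lower bound on $\mathcal{E}$ is insensitive to whatever null set the a.e.\ convergence omits.
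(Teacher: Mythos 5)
The paper states this lemma without any proof---it is offered as an interpretive consequence of the definition of ergodicity, and no argument appears in the body or the appendix. Your proposal therefore supplies something the paper omits, and the argument you give is correct. The contradiction step is the right one and is carried out carefully: if some open ball $B(x_0,\epsilon_0)$ is never visited, then for every $x\in B(x_0,\epsilon_0/2)\cap\mathcal{S}$ and $r\in(0,\epsilon_0/2]$ the closed ball $\{y:\Vert y-x\Vert\le r\}$ sits inside $B(x_0,\epsilon_0)$, so the time-average term in (\ref{eq:exact_metric}) vanishes identically for every $T$ and the integrand equals $\bigl(\int_{\mathcal{S}}\mathbf{1}_{(x,r)}(y)p(y)dy\bigr)^2$, which is strictly positive by the everywhere-positivity of $p$ and continuous in $(x,r)$; integrating over that positive-measure set of $(x,r)$ gives a $T$-independent positive lower bound on $\mathcal{E}$, contradicting $\mathcal{E}\to 0$. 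Your decision to run the contradiction over a whole positive-measure set of $(x,r)$ rather than a single pair correctly neutralizes the fact that $L^2$ convergence only controls the discrepancy almost everywhere, which is the one technical trap in this statement. Two small points worth making explicit if this were written out in full: ``cover any state'' must be read as density of the trajectory's image in $\mathcal{S}$ (a differentiable curve cannot literally occupy every point of an $n\ge 2$ dimensional set), which is exactly the reading your negation uses; and the companion claim about time allocation being proportional to information density is, as you note, just the defining property restated, holding for a.e.\ ball rather than every ball---acceptable for a qualitative clause of the lemma.
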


Despite the asymptotic coverage property, calculating the metric and optimizing a trajectory with respect to it is infeasible in practice. This infeasibility motivates the research of ergodic control, including our work, to develop approximations of the \emph{exact} ergodic metric that are efficient to calculate and optimize in practice while preserving the non-myopic coverage behavior of an ergodic system. Below, we will review one of the most commonly used approximated ergodic metrics in practice, the Fourier ergodic metric.

\subsection{Fourier ergodic metric}

Motivated by the need for a practical measure of ergodicity for robotic search tasks, the Fourier ergodic metric was proposed in~\cite{mathew_metrics_2011}. We now briefly introduce the formula of the Fourier ergodic metric.

\begin{definition}[Fourier basis function]
    The Fourier ergodic metric assumes the robot operates in a $n$-dimensional rectangular Euclidean space, denoted as $\mathcal{S}=[0, L_1]\times\cdots\times[0, L_n]$. The Fourier basis function $f_{\mathbf{k}}(x) : \mathcal{S} \mapsto \mathbb{R}$ is defined as:
    \begin{align}
         f_{\mathbf{k}}(x) = \frac{1}{h_{\mathbf{k}}}\prod_{i=1}^{n} \cos\left( \frac{k_i\pi}{L_i} x_i \right), \label{eq:fourier_basis}
    \end{align} where 
    \begin{align}
        & x = [x_1, x_2, \cdots, x_n] \in \mathcal{S} \nonumber \\
        & \mathbf{k} = [k_1, \cdots, k_n] \in \mathcal{K} \subset \mathbb{N}^n \nonumber \\
        & \mathcal{K} = [0, 1, \cdots, K_1]\times\cdots\times [0, 1, \cdots, K_n], \nonumber
    \end{align} and $h_{\mathbf{k}}$ is the normalization term such that the inner product of each basis function with itself is $1$.
\end{definition}

\begin{lemma} \label{lemma:fourier_decomposition}
    Following~\cite{mathew_metrics_2011}, the set of Fourier basis functions (\ref{eq:fourier_basis}) forms a set of orthonormal basis functions:
    \begin{gather}
        \langle f_{\mathbf{k}}(x), f_{\mathbf{k}}(x) \rangle = 1, \quad \forall \mathbf{k} \in \mathcal{K} \\
        \langle f_{\mathbf{k}_1}(x), f_{\mathbf{k}_2}(x) \rangle = 0, \quad \forall \mathbf{k}_1 \neq \mathbf{k}_2 \in \mathcal{K}.
    \end{gather} Furthermore, any function $g(x)$ over the same domain as the basis functions can be represented as:
    \begin{align}
        g(x) = \lim_{\#\mathcal{K}\rightarrow\infty} \sum_{\mathbf{k}\in\mathcal{K}} \langle f_{\mathbf{k}}(x), g(x) \rangle \cdot f_{\mathbf{k}}(x), 
    \end{align} where $\#\mathcal{K}$ is the number of basis functions.
\end{lemma}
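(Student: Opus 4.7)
The plan is to split the lemma into two parts: (i) the orthonormality statements, and (ii) the completeness of the expansion of an arbitrary $g(x)$. Both parts reduce to classical one-dimensional Fourier series facts, leveraged through the product structure of the basis (\ref{eq:fourier_basis}).

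For the orthonormality part, I would expand $\langle f_{\mathbf{k}_1}(x), f_{\mathbf{k}_2}(x)\rangle$ using the integral definition (\ref{eq:inner_product_integral}) and substitute the product form of $f_{\mathbf{k}}$. By Fubini's theorem the integral over $\mathcal{S} = [0,L_1]\times\cdots\times[0,L_n]$ factors into a product of one-dimensional integrals of the form $\int_0^{L_i} \cos(\tfrac{k_i \pi}{L_i} x_i)\, \cos(\tfrac{k'_i \pi}{L_i} x_i)\, dx_i$. Applying the product-to-sum identity, each such one-dimensional integral evaluates to $0$ when $k_i \neq k'_i$, to $L_i/2$ when $k_i = k'_i > 0$, and to $L_i$ when $k_i = k'_i = 0$. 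Thus, if any component of $\mathbf{k}_1$ and $\mathbf{k}_2$ differs the entire product vanishes, giving the orthogonality claim; if all components agree, the product equals a positive constant depending only on $\mathbf{k}$, so choosing $h_{\mathbf{k}}$ as the square root of that constant normalizes $\langle f_{\mathbf{k}}, f_{\mathbf{k}} \rangle$ to $1$.

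For the completeness claim, I would appeal to the standard fact that $\{\cos(\tfrac{k\pi}{L} x)\}_{k\geq 0}$ forms a complete orthogonal system in $L^2([0, L])$, which follows from the classical Fourier theorem via even reflection of $g$ about the endpoints. Multidimensional completeness then follows from the tensor-product construction: since $L^2(\mathcal{S}) \cong \bigotimes_{i=1}^n L^2([0,L_i])$ as separable Hilbert spaces, products of complete orthonormal bases on each factor yield a complete orthonormal basis on the product space. The expansion coefficient of $g$ along $f_{\mathbf{k}}$ is then exactly $\langle f_{\mathbf{k}}, g \rangle$, and the partial sums converge to $g$ in the $L^2$ sense as $\#\mathcal{K}\to\infty$, recovering the displayed expansion.

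The main obstacle is that the completeness half is really a citation rather than a self-contained calculation: a fully rigorous write-up would need to fix the ambient function space (most naturally $L^2(\mathcal{S})$), specify the mode of convergence (which is $L^2$, not necessarily pointwise), and justify the tensor-product step. For the present paper's purposes, invoking the classical Fourier theorem together with the separability of $L^2(\mathcal{S})$ should be sufficient, so the only computation that truly needs to be exhibited is the one-dimensional cosine integral, which is routine.
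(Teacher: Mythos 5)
The paper offers no proof of this lemma at all---it is stated as a known fact imported from Mathew and Mezi\'{c} \cite{mathew_metrics_2011}, so there is nothing internal to compare your argument against. Your proof is the standard one and it is correct: the Fubini factorization into one-dimensional cosine integrals, the product-to-sum evaluation giving $0$, $L_i/2$, or $L_i$, and the identification of $h_{\mathbf{k}}$ as the square root of the resulting product all check out, and the completeness half via even reflection plus the tensor-product structure of $L^2(\mathcal{S})$ is the right citation-level argument. You have also correctly flagged the one point that deserves care: the displayed expansion only holds in the $L^2$ sense, not pointwise, and the paper is silent on this. That caveat matters slightly more than you let on, because the paper later applies this lemma to the trajectory empirical distribution $c_s(x)$, which is a generalized function built from Dirac deltas and need not lie in $L^2(\mathcal{S})$ at all (the paper's own Remark~1 and the remark following Lemma~\ref{lemma:quad_consistency} acknowledge this); a fully rigorous treatment would need to interpret the expansion distributionally in that setting. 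That is a gap in the paper's usage rather than in your proof of the lemma as stated.
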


\begin{definition}[Fourier ergodic metric]
    Given an $n$-dimensional spatial distribution $p(x)$ and a dynamical system $s(t)$ over a finite time horizon $[0, T]$, the Fourier ergodic metric, denoted as $\mathcal{E}$, is defined as:
    \begin{align}
        \mathcal{E}_f(s(t), p(x)) & =  \sum_{\mathbf{k}\in\mathcal{K}} \Lambda_{\mathbf{k}} \left( p_{\mathbf{k}} - c_{\mathbf{k}} \right)^2, \label{eq:fourier_metric} 
    \end{align} where the sequences of $\{p_{\mathbf{k}}\}_{\mathcal{K}}$ and $\{c_{\mathbf{k}}\}_{\mathcal{K}}$ are the sequences of Fourier decomposition coefficients of the target distribution and trajectory empirical distribution, respectively:
    \begin{align}
        p_{\mathbf{k}} = \langle p(x), f_{\mathbf{k}} (x) \rangle, \quad c_{\mathbf{k}} = \langle c_s(x), f_{\mathbf{k}} (x) \rangle.
    \end{align} The sequence of $\{\Lambda_{\mathbf{k}}\}$ is a convergent real sequence:
    \begin{align}
        \Lambda_{\mathbf{k}} = (1 + \Vert\mathbf{k}\Vert)^{-\frac{n+1}{2}} .\label{eq:fourier_lambda}
    \end{align} 
\end{definition}

\begin{lemma} \label{lemma:fourier_metric_bound}
    The Fourier ergodic metric asymptotically bounds the exact ergodic metric, as there exists two bound constants $\alpha_1, \alpha_2>0$ such that the following inequality holds with the time horizon and the number of Fourier basis functions approaching infinity:
    \begin{align}
        \alpha_1 \cdot \mathcal{E}_f(s(t), p(x)) \leq \mathcal{E}(s(t), p(x)) \leq \alpha_2 \cdot \mathcal{E}_f(s(t), p(x)). \nonumber
    \end{align}
\end{lemma}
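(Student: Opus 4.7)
The plan is to reduce both metrics to weighted sums over Fourier modes of the squared coefficients of the residual distribution $\phi(x) := c_s(x) - p(x)$ and then to compare the two weight sequences. The Fourier ergodic metric in (\ref{eq:fourier_metric}) is already manifestly of this form with weights $\Lambda_{\mathbf{k}}$, so the real work lies in re-expressing the exact ergodic metric (\ref{eq:exact_metric}) in the same way and then matching weights up to multiplicative constants.

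First I would rewrite the inner bracket in (\ref{eq:exact_metric}). Applying Lemma~\ref{lemma:empirical_inner_product} with $f = \mathbf{1}_{(x,r)}$ identifies the time-average as $\langle c_s, \mathbf{1}_{(x,r)}\rangle$, while the spatial integral equals $\langle p, \mathbf{1}_{(x,r)}\rangle$, so the bracket equals $\langle \phi, \mathbf{1}_{(x,r)}\rangle$. This in turn is the convolution $(K_r * \phi)(x)$, where $K_r := \mathbf{1}_{B(0,r)}$ is the ball kernel of radius $r$ (using an even reflection so that the cosine expansion of Lemma~\ref{lemma:fourier_decomposition} is honest on the rectangle $\mathcal{S}$). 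Applying Parseval's identity to the $x$-integral and then the $r$-integral produces
\begin{align}
\mathcal{E}(s(t),p(x)) \;=\; \sum_{\mathbf{k}\in\mathcal{K}} w(\mathbf{k})\,(p_{\mathbf{k}}-c_{\mathbf{k}})^2, \quad w(\mathbf{k}) := \int_0^R |\widehat{K_r}(\mathbf{k})|^2\, dr, \nonumber
\end{align}
so the exact metric is itself a weighted $\ell^2$ norm of the coefficient residuals, and the lemma reduces to showing $\alpha_1\Lambda_{\mathbf{k}} \le w(\mathbf{k}) \le \alpha_2 \Lambda_{\mathbf{k}}$ uniformly in $\mathbf{k}$.

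To establish this comparison, I would expand $\widehat{K_r}(\mathbf{k})$ as a product of one-dimensional cosine transforms of the ball's marginals, obtaining a Bessel-type expression in $r\|\mathbf{k}\|$. The large-$\|\mathbf{k}\|$ Bessel asymptotics give a decay rate that, after integrating $r\in[0,R]$ with its $r^{n-1}$ volume factor and averaging out the oscillation, matches the decay prescribed by (\ref{eq:fourier_lambda}). The finitely many low-$\|\mathbf{k}\|$ modes contribute strictly positive bounded values that can be absorbed into the constants $\alpha_1,\alpha_2$. Summing the sandwich against $(p_{\mathbf{k}}-c_{\mathbf{k}})^2$ and passing to the limits $T\to\infty$ and $\#\mathcal{K}\to\infty$ yields the claim.

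The main obstacle is the weight comparison $w(\mathbf{k})\asymp\Lambda_{\mathbf{k}}$: since $K_r$ is not smooth, its cosine coefficients decay only algebraically, and the precise exponent must be tracked carefully through the Bessel asymptotics together with the $r^{n-1}$ factor picked up during the radius integration, uniformly over the lattice of multi-indices. A secondary subtlety is that the cosine basis on the rectangle is not the full Fourier transform on $\mathbb{R}^n$, so the reflected extension of $K_r$ introduces boundary contributions that must be shown not to spoil the Bessel decay rate for modes of sufficiently large norm; once the uniform asymptotic is in hand, the two-sided bound and hence the lemma follow routinely.
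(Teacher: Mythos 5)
Your proposal is correct and follows essentially the same route as the paper's proof, which simply defers to Appendix A of the cited Mathew--M\'ezi\'c reference: there the exact metric is likewise rewritten via Parseval as $\sum_{\mathbf{k}} a_{\mathbf{k}}(p_{\mathbf{k}}-c_{\mathbf{k}})^2$ with $a_{\mathbf{k}}=\int_0^R |\widehat{K_r}(\mathbf{k})|^2\,dr$ (this is exactly the quantity the present paper calls (A.14)/(A.24)), and the two-sided weight comparison is carried out through the Bessel-function asymptotics of the ball indicator's transform, just as you outline. One small caution: that computation yields $a_{\mathbf{k}}\asymp(1+\Vert\mathbf{k}\Vert^2)^{-(n+1)/2}$, i.e.\ the Sobolev weight of the original reference, so your claim that it ``matches the decay prescribed by (\ref{eq:fourier_lambda})'' holds for that form rather than for the expression $(1+\Vert\mathbf{k}\Vert)^{-(n+1)/2}$ as literally printed here, which appears to drop the square on the norm.
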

\begin{proof}
    See Appendix A in~\cite{mathew_metrics_2011}.
\end{proof}

\begin{lemma}
    Based on Lemma~\ref{lemma:fourier_metric_bound}, the Fourier ergodic metric is asymptotically consistent with the exact ergodic metric:
    \begin{gather}
        s(t)^* = \argmin_{s(t)} \left[ \lim_{\#\mathcal{K}\rightarrow\infty} \lim_{T\rightarrow\infty} \mathcal{E}_f(s(t), p(x)) \right] \nonumber \\
        \text{ i.f.f. } s(t)^* = \argmin_{s(t)} \left[ \lim_{T\rightarrow\infty} \mathcal{E}(s(t), p(x)) \right] . \nonumber
    \end{gather} where $\#\mathcal{K}$ is the number of basis functions.
\end{lemma}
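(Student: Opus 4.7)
The plan is to reduce the claim to a sandwich argument on the asymptotic bound already established in Lemma~\ref{lemma:fourier_metric_bound}. First, I would observe that both $\mathcal{E}$ and $\mathcal{E}_f$ are nonnegative by construction (each is an integral or weighted sum of squares), so the minimum of either functional is bounded below by zero. I would then recall from the ergodicity definition and the asymptotic coverage lemma that any trajectory of a dynamical system which is ergodic with respect to $p(x)$ satisfies $\lim_{T\to\infty}\mathcal{E}(s(t),p(x))=0$, so the minimum of the asymptotic exact metric, when it exists, is indeed $0$, and the minimizers are precisely the ergodic trajectories.

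The core of the argument would then be a two-sided squeeze. Applying Lemma~\ref{lemma:fourier_metric_bound} in the asymptotic regime $T\to\infty$, $\#\mathcal{K}\to\infty$ gives constants $\alpha_1,\alpha_2>0$ with
\begin{align}
\alpha_1\cdot\mathcal{E}_f(s(t),p(x)) \leq \mathcal{E}(s(t),p(x)) \leq \alpha_2\cdot\mathcal{E}_f(s(t),p(x)). \nonumber
\end{align}
For the forward direction, suppose $s(t)^*$ minimizes the asymptotic Fourier metric. Since any ergodic trajectory drives $\mathcal{E}_f\to 0$ (by Lemma~\ref{lemma:fourier_decomposition} and the definition together with the exact metric vanishing for such trajectories), the infimum of the asymptotic $\mathcal{E}_f$ is $0$, so $\mathcal{E}_f(s(t)^*,p(x))\to 0$. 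The upper bound $\mathcal{E}\leq\alpha_2\mathcal{E}_f$ then forces $\mathcal{E}(s(t)^*,p(x))\to 0$, which coincides with the infimum of the asymptotic $\mathcal{E}$, so $s(t)^*$ also minimizes $\mathcal{E}$. For the converse, an analogous argument using the lower bound $\alpha_1\mathcal{E}_f\leq\mathcal{E}$ shows that any minimizer of the asymptotic $\mathcal{E}$ drives $\mathcal{E}_f$ to $0$ as well, hence minimizes the asymptotic $\mathcal{E}_f$.

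I expect the main subtlety, rather than outright obstacle, to be the careful handling of the order and existence of the two limits $T\to\infty$ and $\#\mathcal{K}\to\infty$ in the Fourier case: the bounds in Lemma~\ref{lemma:fourier_metric_bound} are asymptotic, so I would need to argue that the nested limits commute in the sense required, or more simply, that any sequence of trajectories driving $\mathcal{E}_f$ to its asymptotic infimum also drives $\mathcal{E}$ to its infimum and vice versa, via the same proportionality constants. A secondary point is the (standard) assumption that the argmin is understood set-theoretically, so that equality of infima together with the squeeze implies equality of minimizing sets; I would state this as a remark rather than belabor it. With those observations in place, the proof reduces to citing Lemma~\ref{lemma:fourier_metric_bound} and invoking the squeeze, giving a short formal argument.
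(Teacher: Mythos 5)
Your proposal is correct and follows essentially the same route the paper intends: the lemma is stated as a direct corollary of Lemma~\ref{lemma:fourier_metric_bound} (the paper gives no separate proof), and the intended argument is exactly your two-sided squeeze, using the fact that both metrics are nonnegative with infimum zero attained by ergodic trajectories so that $\mathcal{E}_f$ vanishes asymptotically iff $\mathcal{E}$ does. Your added care about the attainment of the zero infimum and the order of the nested limits is appropriate but does not change the approach.
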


In practice, by choosing a finite number of Fourier basis functions, we can approximate the ergodicity on a system using the Fourier ergodic metric (\ref{eq:fourier_metric}) with a finite time horizon. The number of the Fourier basis functions has a significant influence on the behavior of the resulting approximated ergodic system---more Fourier basis functions will lead to better approximation but also require more computation. Past studies have revealed that the sufficient number of the basis functions for practical applications grows \emph{exponentially} with the search space dimension~\cite{sun_scale-invariant_2022}\cite{shetty_ergodic_2022}, creating a significant challenge to apply the Fourier ergodic metric in higher-dimensional spaces. In principle, the Fourier basis functions can also be defined in non-Euclidean spaces such as Lie groups. However, deriving the Fourier basis function in these spaces is non-trivial, limiting the generalization of the Fourier ergodic metric. 

In the next section we introduce our kernel ergodic metric, which is also asymptotically consistent with the exact ergodic metric but has better computational efficiency compared to the Fourier ergodic metric.


\section{Kernel Ergodic Metric} \label{sec:ergodic_metric}

\subsection{Necessary consistency condition for exact ergodic metric}

The derivation of the kernel ergodic metric is based on the following necessary condition for a metric to be consistent with the exact ergodic metric (\ref{eq:exact_metric}). 

\begin{theorem} \label{theorem:exact_metric_equiv}
   With the time horizon $T\rightarrow\infty$, a dynamic system $s(t)$ is globally optimal under the exact ergodic metric (\ref{eq:exact_metric}) with respect to the spatial distribution $p(x)$, if and only if its trajectory empirical distribution $c_s(x)$ equals to $p(x)$.
\end{theorem}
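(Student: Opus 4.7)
The plan is to rewrite the squared integrand of (\ref{eq:exact_metric}) as a single linear functional of the difference $c_s(y) - p(y)$, after which non-negativity of a square plus a standard ball-averaging argument will deliver both directions.

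First I would apply Lemma~\ref{lemma:empirical_inner_product} (treating $\mathbf{1}_{(x,r)}(\cdot)$ as the test function in the $y$-variable) to observe that
\begin{align}
    \tfrac{1}{T}\int_0^T \mathbf{1}_{(x,r)}(s(\tau))\,d\tau = \int_{\mathcal{S}} \mathbf{1}_{(x,r)}(y)\, c_s(y)\, dy, \nonumber
\end{align}
so the bracket in (\ref{eq:exact_metric}) collapses to $\int_{\mathcal{S}} \mathbf{1}_{(x,r)}(y)\,[c_s(y) - p(y)]\,dy$, i.e., the signed mass of $c_s - p$ over the ball $B(x,r)$. The exact metric then reads as the squared $L^2$ norm (in the $(x,r)$ variables) of this ball-averaged residual, so $\mathcal{E}(s(t), p(x)) \geq 0$ always, with the global optimum $\mathcal{E} = 0$ attained precisely when the inner integral vanishes for almost every $(x,r)$.

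The ``if'' direction is then immediate: when $c_s \equiv p$ as generalized functions, every inner integral vanishes and $\mathcal{E} = 0$. For the ``only if'' direction, global optimality in the $T\to\infty$ regime forces $\int_{B(x,r)} [c_s(y) - p(y)]\,dy = 0$ for every $x \in \mathcal{S}$ and every $r \in [0,R]$; since the collection of balls forms a $\pi$-system generating the Borel $\sigma$-algebra of the bounded domain $\mathcal{S}$, the induced signed measure of $c_s - p$ must be identically zero on $\mathcal{S}$, i.e., $c_s = p$. Equivalently, dividing by $|B(x,r)|$ and letting $r\to 0^+$ recovers $c_s(x) - p(x) = 0$ almost everywhere by the Lebesgue differentiation theorem.

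The main obstacle I expect is the technical point that for any \emph{finite} $T$ the empirical distribution $c_s$ is singular (concentrated on the one-dimensional image of $s$), so ``$c_s = p$'' cannot be read pointwise. I would sidestep this by interpreting the equality in the sense of generalized functions (consistent with Remark~\ref{remark:delta_function}): $c_s = p$ means $\langle c_s, f\rangle = \langle p, f\rangle$ for every bounded measurable test $f$, which is exactly the content witnessed by the ball indicators above. The $T \to \infty$ hypothesis in the theorem is precisely what licenses this weak equality, since for finite $T$ the ball integrals of $c_s$ and $p$ generically disagree at small $r$; only in the limit does global optimality of $\mathcal{E}$ collapse the residual on every ball and therefore on every Borel set.
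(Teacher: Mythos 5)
Your proof is correct, but it takes a genuinely different route from the paper's. The paper does not work with the definition (\ref{eq:exact_metric}) directly: it invokes the Fourier decomposition of Lemma~\ref{lemma:fourier_decomposition} together with the identity (A.14) of \cite{mathew_metrics_2011}, which represents the exact metric as $\sum_{\mathbf{k}} a_{\mathbf{k}}(p_{\mathbf{k}}-c_{\mathbf{k}})^2$ with $a_{\mathbf{k}}>0$, so that vanishing of the metric is equivalent to equality of all Fourier coefficients and hence of the two distributions. You instead collapse the bracket in (\ref{eq:exact_metric}) to the ball-averaged residual $\int_{\mathcal{S}}\mathbf{1}_{(x,r)}(y)\,[c_s(y)-p(y)]\,dy$ via Lemma~\ref{lemma:empirical_inner_product} and argue measure-theoretically that a signed measure vanishing on all balls is zero. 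Your approach is more self-contained (it does not import the nontrivial cited representation of $\mathcal{E}$ as a weighted coefficient sum), and it makes the nonnegativity and the attainment condition of the metric transparent; the paper's approach is shorter given the citation and dovetails with the Fourier machinery it reuses in Lemma~\ref{lemma:fourier_metric_bound}. Two small caveats on your version: the family of balls is \emph{not} a $\pi$-system (the intersection of two balls is not a ball), so the Dynkin-style phrasing is not literally right --- but your alternative via the Lebesgue (Besicovitch) differentiation theorem does the job, so this is a repairable slip rather than a gap. Also, both you and the paper implicitly use that the global optimum value is $0$, i.e., that some trajectory realizes $c_s=p$ in the limit $T\to\infty$; since the paper makes the same tacit assumption, this is not held against you.
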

\begin{proof}
    Following Lemma~\ref{lemma:fourier_decomposition}, both the trajectory empirical distribution $c_s(x)$ and target spatial distribution $p(x)$ can be decomposed through the Fourier basis functions (\ref{eq:fourier_basis}):
    \begin{align}
        p(x) &= \lim_{\#\mathcal{K}\rightarrow\infty} \sum_{\mathbf{k}\in\mathcal{K}} p_{\mathbf{k}} \cdot f_{\mathbf{k}}(x), \quad p_{\mathbf{k}} = \langle p(x), f_{\mathbf{k}}(x) \rangle, \nonumber \\
        c_s(x) &= \lim_{\#\mathcal{K}\rightarrow\infty} \sum_{\mathbf{k}\in\mathcal{K}} c_{\mathbf{k}} \cdot f_{\mathbf{k}}(x), \quad c_{\mathbf{k}} = \langle c_s(x), f_{\mathbf{k}}(x) \rangle. \label{eq:equity_fourier_decomp}
    \end{align} From (A.14) in~\cite{mathew_metrics_2011}, the exact ergodic metric (\ref{eq:exact_metric}) can be represented as:
    \begin{align}
        \mathcal{E}(s(t), p(x)) = \lim_{\#\mathcal{K}\rightarrow\infty} \sum_{\mathbf{k}\in\mathcal{K}} a_{\mathbf{k}} \left( p_{\mathbf{k}} - c_{\mathbf{k}} \right)^2, \label{eq:exact_metric_equiv}
    \end{align} where $\{a_{\mathbf{k}}\}$ is a positive sequence defined in (A.24) of~\cite{mathew_metrics_2011}, and $\mathcal{K}$ is the number of basis functions. Based on (\ref{eq:exact_metric_equiv}), for any $s(t)$ that is globally optimal under (\ref{eq:exact_metric}), we have $p_{\mathbf{k}}{=}c_{\mathbf{k}}, \forall \mathbf{k}\in\mathcal{K}$. Therefore, we have $c_s(x) {=} p(x)$ based on (\ref{eq:equity_fourier_decomp}). 
    
    Similarly, if $p(x) {=} c_s(x)$, then we have $p_{\mathbf{k}}{=}c_{\mathbf{k}}, \forall \mathbf{k}\in\mathcal{K}$. Therefore $s(t)$ is globally optimal as $\mathcal{E}(s(t), p(x))=0$.
\end{proof}

\begin{theorem}[Necessary consistency condition] \label{theorem:metric_equiv}
    Any function $\mathcal{D}(c_s(x), p(x))$ that is globally minimized if and only if $c_s(x){=}p(x)$ is consistent with the exact ergodic metric (\ref{eq:exact_metric}). 
\end{theorem}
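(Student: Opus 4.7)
The plan is to reduce Theorem 2 directly to Theorem 1, since Theorem 1 has already characterized the global optima of the exact ergodic metric $\mathcal{E}(s(t),p(x))$ as precisely those trajectories whose empirical distribution satisfies $c_s(x)=p(x)$ in the limit $T\to\infty$. Interpreting ``consistency'' in the same sense as the earlier Fourier consistency lemma, namely equality of the $\argmin$ sets over trajectories $s(t)$, the theorem then becomes a statement about two functions on the space of trajectories that have the same global minimizers.

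First I would restate the conclusion of Theorem~\ref{theorem:exact_metric_equiv} as: as $T\to\infty$,
\begin{align}
s(t)\in\argmin_{s(t)} \mathcal{E}(s(t),p(x)) \iff c_s(x)=p(x). \nonumber
\end{align}
Next, I would invoke the hypothesis of Theorem~\ref{theorem:metric_equiv} on $\mathcal{D}$, namely
\begin{align}
s(t)\in\argmin_{s(t)} \mathcal{D}(c_s(x),p(x)) \iff c_s(x)=p(x). \nonumber
\end{align}
Chaining the two ``iff'' statements through the common condition $c_s(x)=p(x)$ gives
\begin{align}
\argmin_{s(t)} \mathcal{E}(s(t),p(x)) = \argmin_{s(t)} \mathcal{D}(c_s(x),p(x)), \nonumber
\end{align}
which is exactly the notion of asymptotic consistency used for the Fourier metric earlier in the paper.

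The argument is almost entirely bookkeeping; the only part that requires any care is the framing of what ``consistent with the exact ergodic metric'' means, since the paper uses the term informally before Theorem~\ref{theorem:metric_equiv}. I would therefore add a short sentence at the start of the proof making the convention explicit, so that the reader does not expect a stronger statement such as $\mathcal{D}$ and $\mathcal{E}$ bounding each other pointwise (which is \emph{not} being claimed and would in general be false without further structure on $\mathcal{D}$). The substantive content of the theorem has already been discharged by Theorem~\ref{theorem:exact_metric_equiv}, which relies on the Fourier decomposition of $c_s$ and $p$ and on the positivity of the coefficient sequence $\{a_{\mathbf{k}}\}$ from~\cite{mathew_metrics_2011}; no additional calculation is needed here. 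Thus I anticipate no real obstacle, and the whole proof will fit in a few lines.
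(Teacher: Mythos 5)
Your proposal is correct and matches the paper's intent: the paper gives no separate proof for this theorem, treating it as an immediate corollary of Theorem~\ref{theorem:exact_metric_equiv} obtained by exactly the chaining of the two ``iff'' characterizations through the common condition $c_s(x)=p(x)$ that you describe (the same reduction the paper later uses to prove Lemma~\ref{lemma:quad_consistency}). Your suggestion to state explicitly that ``consistency'' means equality of $\argmin$ sets, rather than a two-sided bound, is a reasonable clarification consistent with how the term is used for the Fourier metric.
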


For a function $d(v_1, v_2)$ in a finite-dimensional vector space, one such function that satisfies the condition of being globally optimal if and only if $v_1{=}v_2$ is the commonly used quadratic formula (squared $L^2$ distance):
\begin{align}
    d(v_1, v_2) = (v_1-v_2)^\top (v_1-v_2). \label{label:vector_quad}
\end{align} We can generalize the vector space quadratic formula (\ref{label:vector_quad}) to the infinite-dimensional function space based on inner product between functions (\ref{eq:inner_product_integral}):
\begin{align}
    & \mathcal{L}(c_s(x), p(x)) \nonumber \\
    & = \langle c_s(x){-}p(x), c_s(x){-}p(x) \rangle \label{eq:func_l2_distance} \\
        & = \langle c_s(x), c_s(x) \rangle  - 2 \langle c_s(x), p(x) \rangle + \langle p(x), p(x) \rangle. \label{eq:raw_l2_distance}
\end{align} 

\begin{lemma} \label{lemma:quad_consistency}
    $\mathcal{L}(c_s(x), p(x))$ is consistent with the exact ergodic metric (\ref{eq:exact_metric}).
\end{lemma}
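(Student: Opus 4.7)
The plan is to invoke Theorem~\ref{theorem:metric_equiv} directly: since it already certifies that any quantity which is globally minimized iff $c_s(x) = p(x)$ is consistent with the exact ergodic metric, the entire task reduces to verifying that $\mathcal{L}(c_s(x), p(x))$ has this single property. So I would frame the proof as a two-direction argument: (i) if $c_s(x) = p(x)$ then $\mathcal{L}=0$, which is immediate from (\ref{eq:func_l2_distance}) because the argument to the inner product vanishes; (ii) if $c_s(x) \neq p(x)$ then $\mathcal{L}>0$, together with the nonnegativity $\mathcal{L}\geq 0$ which certifies that $0$ is in fact the global minimum.

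For direction (ii) my tool of choice is the Fourier decomposition established in Lemma~\ref{lemma:fourier_decomposition}. Writing the coefficients $p_{\mathbf{k}}=\langle p,f_{\mathbf{k}}\rangle$ and $c_{\mathbf{k}}=\langle c_s,f_{\mathbf{k}}\rangle$ as in Theorem~\ref{theorem:exact_metric_equiv}, I would substitute the orthonormal expansions of $c_s(x)-p(x)$ into the inner product (\ref{eq:func_l2_distance}) and use bilinearity together with orthonormality of $\{f_{\mathbf{k}}\}$ to obtain the Parseval-style identity
\begin{align}
\mathcal{L}(c_s(x), p(x)) = \lim_{\#\mathcal{K}\rightarrow\infty}\sum_{\mathbf{k}\in\mathcal{K}}(c_{\mathbf{k}}-p_{\mathbf{k}})^2. \nonumber
\end{align}
Each term is a nonnegative real, so $\mathcal{L}\geq 0$, and $\mathcal{L}=0$ forces $c_{\mathbf{k}}=p_{\mathbf{k}}$ for all $\mathbf{k}$. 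Plugging this back into the Fourier expansions from (\ref{eq:equity_fourier_decomp}) yields $c_s(x)=p(x)$, closing direction (ii). Combining (i) and (ii) gives the desired iff, and Theorem~\ref{theorem:metric_equiv} finishes the proof.

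The main obstacle I anticipate is rigor around $c_s(x)$ being a generalized function rather than an ordinary $L^2$ element: the term $\langle c_s,c_s\rangle$ in the expansion (\ref{eq:raw_l2_distance}) evaluated naively via Lemma~\ref{lemma:empirical_norm} gives a singular double integral of $\delta(s(t_1)-s(t_2))$. My Fourier route sidesteps this because the coefficients $c_{\mathbf{k}}=\frac{1}{T}\int_0^T f_{\mathbf{k}}(s(t))\,dt$ (obtained via Lemma~\ref{lemma:empirical_inner_product}) are ordinary finite real numbers, so the series $\sum(c_{\mathbf{k}}-p_{\mathbf{k}})^2$ is a well-defined extended real and the argument proceeds inside the standard $L^2$ completion. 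The only remaining care is to interpret the equality $c_s(x)=p(x)$ in the sense of distributions (equivalence of all Fourier coefficients), which is precisely the sense already used throughout Theorem~\ref{theorem:exact_metric_equiv} and its proof, so no new machinery is needed.
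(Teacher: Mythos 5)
Your proposal is correct and follows the same overall skeleton as the paper's proof: reduce the claim to showing that $\mathcal{L}$ attains its global minimum of $0$ if and only if $c_s(x)=p(x)$, and then hand off to the consistency theorem (you cite Theorem~\ref{theorem:metric_equiv}, the paper cites Theorem~\ref{theorem:exact_metric_equiv}; the logical chain is the same). The one substantive difference is in how the key positivity step is justified. The paper simply asserts it from ``the positive-definite property of the inner product,'' i.e.\ $\langle f,f\rangle\geq 0$ with equality iff $f=0$. You instead expand $c_s(x)-p(x)$ in the orthonormal Fourier basis of Lemma~\ref{lemma:fourier_decomposition} and derive the Parseval identity $\mathcal{L}=\lim_{\#\mathcal{K}\to\infty}\sum_{\mathbf{k}}(c_{\mathbf{k}}-p_{\mathbf{k}})^2$, which makes the nonnegativity and the equality case explicit at the level of coefficients. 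This buys you something real: the coefficients $c_{\mathbf{k}}$ are ordinary finite numbers even when $c_s$ is a genuine distribution and $\langle c_s,c_s\rangle$ is singular (Lemma~\ref{lemma:empirical_norm}), so your route gives a concrete meaning to ``positive-definite'' in the distributional setting and to the sense in which $c_s=p$ is concluded (equality of all coefficients), whereas the paper's one-line assertion quietly relies on the reader accepting positive-definiteness for generalized functions. The cost is that you lean on Lemma~\ref{lemma:fourier_decomposition}'s completeness claim applied to a distribution, which the paper itself only justifies formally; your argument is therefore at the same level of rigor as the paper's proof of Theorem~\ref{theorem:exact_metric_equiv}, which performs the identical manipulation. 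Your handling of a possibly divergent sum as an extended real (so that $\mathcal{L}=+\infty>0$ off the minimizer) is consistent with the conclusion and does not create a gap.
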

\begin{proof}
    Based on the positive-definite property of the inner product, $\mathcal{L}(c_s(x), p(x))$ reaches the global minima $0$ if and only if $c_s(x){=}p(x), \forall x$. Thus, based on Theorem~\ref{theorem:exact_metric_equiv}, it is consistent with the exact ergodic metric.
\end{proof}

\begin{remark}
    Although the vector space quadratic formula (\ref{label:vector_quad}) is equivalent to the squared $L^2$ distance, the function space generalization (\ref{eq:func_l2_distance}) is not necessarily equivalent to a $L^2$ distance metric between functions, since the trajectory empirical distribution $c_s(x)$ might not be in the $L^p$ space. For example, with a stationary trajectory $s(t){=}s_0,\forall t{\in}[0,T]$, $c_s(x)$ becomes a Dirac delta function $\delta(x{-}s_0)$, which is not in the $L^p$ space. 

    However, both the function space formula $\mathcal{L}(c_s(x), p(x))$ in (\ref{eq:func_l2_distance}) and Lemma~\ref{lemma:quad_consistency} only rely on the inner product between functions, which does not require either $c_s(x)$ and $p(x)$ to be in the $L^p$ space. This is common among applications of the Dirac delta functions, such as in the analysis of the position operator in quantum mechanics~\cite{cohen-tannoudji_quantum_1977}.
    
    As an example, we can show that Lemma~\ref{lemma:quad_consistency} still holds with the above case of a stationary trajectory by applying Lemma~\ref{lemma:delta_inner_product_itself} to (\ref{eq:raw_l2_distance}):
    \begin{align}
        \mathcal{L}(c_s(x), p(x)) = \delta(0) + \langle p(x),p(x)\rangle - 2 p(s_0),
    \end{align} which evaluates to the global minima of $0$ if and only if $p(x){=}\delta(x{-}s_0)$. 
\end{remark}

\begin{remark}
    Note that a regularity condition that ensures the trajectory empirical distribution $c_s(x)$ will be in the $L^p$ space is that the image of the trajectory $s(t)$, which is the support of $c_s(x)$, is compact and has a positive measure as the time horizon $T$ approaches infinity---in this case $c_s(x)$ has a finite upper bound. With a finite time horizon, this condition reduces to the requirement that the trajectory has a finite number of intersections with itself. 
\end{remark}

\subsection{Derivation of kernel ergodic metric}

We start with simplifying (\ref{eq:raw_l2_distance}) using Lemma~\ref{lemma:empirical_inner_product} and Lemma~\ref{lemma:empirical_norm}:
\begin{align}
    & \langle c_s(x), c_s(x) \rangle  - 2 \langle c_s(x), p(x) \rangle \nonumber \\
    & = - \frac{2}{T} \int_0^T p(s(t)) dt + \frac{1}{T^2} \int_{0}^{T} \int_{0}^{T} \delta(s(t_1) {-} s(t_2)) dt_1 dt_2 \nonumber \\
    & = - \frac{2}{T} \int_0^T p(s(t)) dt + \frac{1}{T^2} \int_{0}^{T} \int_{0}^{T} \phi(s(t_1), s(t_2)) dt_1 dt_2, \label{eq:simplified_l2} 
\end{align} where $\phi(\cdot,\cdot)$ is a Dirac delta \emph{kernel} function defined similarly to the Dirac delta function, as the limit of a sequence of \emph{nascent delta kernel functions} $\phi_{\theta}(\cdot,\cdot)$:
\begin{align}
    \phi(s_1, s_2) & = \lim_{\theta\rightarrow 0^+} \phi_{\theta}(s_1, s_2), \text{ } \phi_{\theta}(s_1, s_2) = \delta_{\theta}(s_1 {-} s_2) \label{eq:nascent_kernel}.
\end{align} We now formally define the \emph{kernel ergodic metric} based on (\ref{eq:simplified_l2}) and the nascent delta kernel function.

\begin{definition}[Kernel ergodic metric]
    The kernel ergodic metric $\mathcal{E}_{\theta}(s(t), p(x))$ is defined as:
    \begin{align}
        \mathcal{E}_{\theta}(s(t), p(x)) & = \frac{1}{T^2} \int_{0}^{T} \int_{0}^{T} \phi_{\theta}(s(t_1), s(t_2)) dt_1 dt_2 \nonumber \\
        & \quad - \frac{2}{T} \int_0^T p(s(t)) dt + \int p(x)^2 dx.  \label{eq:proposed_metric}
    \end{align}
\end{definition}

\begin{theorem}
    The metric (\ref{eq:proposed_metric}) is asymptotically consistent with the exact ergodic metric (\ref{eq:exact_metric}):
    \begin{gather}
        s(t)^* {=} \argmin_{s(t)} \left[ \lim_{\theta\rightarrow 0^+} \lim_{T\rightarrow\infty} \mathcal{E}_{\theta}(s(t), p(x)) \right] \nonumber \\
        \text{i.f.f. } s(t)^* {=} \argmin_{s(t)} \left[ \lim_{T\rightarrow\infty} \mathcal{E}(s(t), p(x)) \right] , \nonumber \\
        \lim_{\theta\rightarrow 0^+} \lim_{T\rightarrow\infty} \mathcal{E}_{\theta}(s(t)^*, p(x)) = \lim_{T\rightarrow\infty} \mathcal{E}(s(t)^*, p(x)) = 0. \nonumber
    \end{gather}
\end{theorem}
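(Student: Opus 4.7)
The plan is to show that as $\theta\to 0^+$ the kernel ergodic metric $\mathcal{E}_{\theta}$ collapses to the function-space quadratic $\mathcal{L}(c_s(x),p(x))$ introduced in (\ref{eq:raw_l2_distance}), and then appeal to Lemma~\ref{lemma:quad_consistency} together with Theorem~\ref{theorem:exact_metric_equiv} to conclude consistency with the exact ergodic metric. The overall structure is a two-step reduction: first identify the $\theta\to 0^+$ limit of $\mathcal{E}_{\theta}$, then invoke the already-established necessary consistency condition.

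For the first step, I would handle the three terms of $\mathcal{E}_{\theta}$ separately. The middle term $-\frac{2}{T}\int_0^T p(s(t))dt$ is equal to $-2\langle c_s(x), p(x)\rangle$ by Lemma~\ref{lemma:empirical_inner_product}, and the last term $\int p(x)^2 dx$ is trivially $\langle p(x), p(x)\rangle$. The only real work is the double integral: using (\ref{eq:nascent_kernel}) and pushing the limit inside,
\begin{align}
&\lim_{\theta\to 0^+}\frac{1}{T^2}\int_0^T\!\!\int_0^T \phi_{\theta}(s(t_1),s(t_2))\,dt_1 dt_2 \nonumber \\
&\qquad = \frac{1}{T^2}\int_0^T\!\!\int_0^T \delta(s(t_1)-s(t_2))\,dt_1 dt_2 = \langle c_s(x), c_s(x)\rangle, \nonumber
\end{align}
where the second equality is Lemma~\ref{lemma:empirical_norm}. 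Summing the three pieces yields $\lim_{\theta\to 0^+}\mathcal{E}_{\theta}(s(t),p(x)) = \mathcal{L}(c_s(x),p(x))$.

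For the second step, Lemma~\ref{lemma:quad_consistency} states that $\mathcal{L}(c_s(x),p(x))$ attains its global minimum of $0$ if and only if $c_s(x){=}p(x)$, and by Theorem~\ref{theorem:exact_metric_equiv} this coincides in the $T\to\infty$ regime with the global minimizer of the exact ergodic metric. Thus the $\argmin$ identity follows immediately from composing the two reductions. For the value statement, note that at any such optimal trajectory $c_{s^*}(x){=}p(x)$, so every term in $\mathcal{L}$ cancels and $\mathcal{L}(c_{s^*},p){=}0$; likewise $\mathcal{E}(s^*(t),p(x)){=}0$ by the ergodicity definition, giving the claimed joint vanishing.

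The main obstacle is justifying the interchange of the limit $\theta\to 0^+$ with the double time integral, which is exactly the classical subtlety behind Dirac-delta-type limits. This is what Remark~\ref{remark:delta_function} and the accompanying regularity discussion already point to: one either works in the generalized-function sense (as in Lemma~\ref{lemma:delta_inner_product_itself} and the quantum-mechanics reference used there) or imposes a mild regularity condition on $s(t)$, such as finitely many self-intersections for finite $T$, to keep $c_s(x)$ in $L^p$. A secondary subtlety is that the $\argmin$ identity implicitly assumes the minimizing $s(t)^*$ can realize $c_{s^*}(x){=}p(x)$ as $T\to\infty$, which is really an ergodicity hypothesis on the dynamics rather than a metric-level claim; I would state this as the regime in which the two nested limits are actually attained.
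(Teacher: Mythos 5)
Your proposal follows essentially the same route as the paper's proof: identify the $\theta\rightarrow 0^+$ limit of the double time integral with $\langle c_s(x), c_s(x)\rangle$ via the nascent kernel definition (\ref{eq:nascent_kernel}) and Lemma~\ref{lemma:empirical_norm}, conclude that $\mathcal{E}_{\theta}$ converges to the quadratic formula (\ref{eq:raw_l2_distance}), and then invoke Lemma~\ref{lemma:quad_consistency}. Your additional term-by-term bookkeeping and the remarks on interchanging the limit with the integral are elaborations of the same argument rather than a different approach.
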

\begin{proof}
    Based on the delta kernel function definition (\ref{eq:nascent_kernel}) and Lemma~\ref{lemma:empirical_norm}, the following holds:
    \begin{align}
        \langle c_s(x){,} c_s(x) \rangle {=} {\lim_{\theta\rightarrow 0^+}} {\lim_{T\rightarrow\infty}} \frac{1}{T^2} {\int_{0}^{T}} {\int_{0}^{T}} \phi_{\theta}(s(t_1){,} s(t_2)) dt_1 dt_2. \nonumber
    \end{align} Therefore, the kernel ergodic metric (\ref{eq:proposed_metric}) asymptotically converges to the function space quadratic formula (\ref{eq:raw_l2_distance}). Based on Lemma~\ref{lemma:quad_consistency}, (\ref{eq:raw_l2_distance}) is consistent with the exact ergodic metric (\ref{eq:exact_metric}), thus the kernel ergodic metric is asymptotically consistent with the exact ergodic metric.
\end{proof}

There are several choices for the nascent delta kernel function (\ref{eq:nascent_kernel}) as discussed in~\cite{strichartz_guide_1994}\cite{cohen-tannoudji_quantum_1977}. For the rest of the paper, we choose the kernel function to be an isotropic Gaussian probability density function since it is differentiable and is identical to the commonly used squared exponential kernel in machine learning literature~\cite{rasmussen_gaussian_2006}:
\begin{align}
    \phi_{\theta}(s_1, s_2) = \mathcal{N}(s_1 \vert s_2, \mathbf{Id}_{\theta}), \label{eq:gaussian_nascent}
\end{align} where the covariance matrix $\mathbf{Id}_{\theta}$ is a diagonal matrix with diagonal entries specified by the parameter $\theta$. In this paper, we choose a single scalar $\theta$ for all the diagonal values for ergodic exploration in the Euclidean space. However, $\theta$ can also be a vector to specify the diagonal value for each dimension separately. Even though we choose the Gaussian kernel for experiment and illustration in this paper, all the derivations in the rest of the paper hold for any kernel function defined based on the nascent delta kernel function (\ref{eq:delta_inner_product}) that is symmetric and stationary.

\begin{figure*}[htbp]
    \centering
    \includegraphics[width=\textwidth]{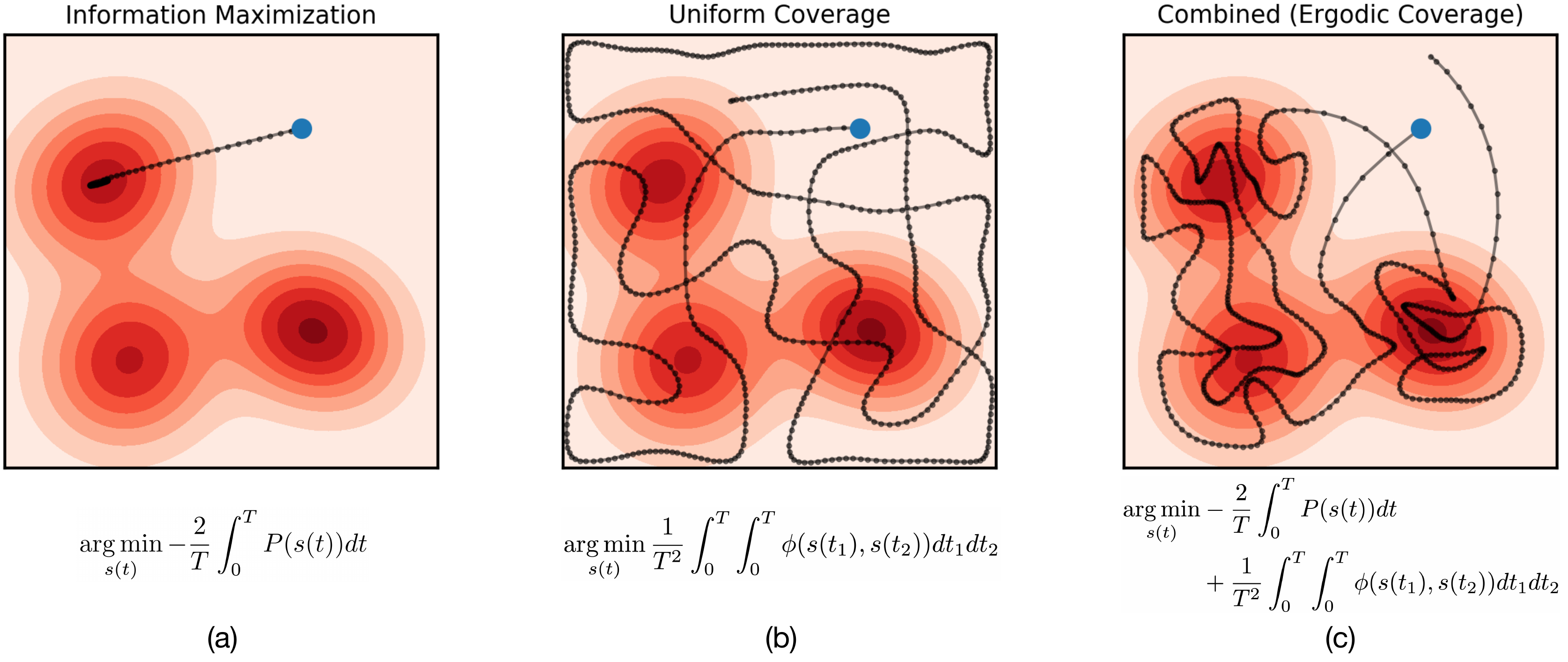}
    \caption{Trajectories when optimizing the individual and combined elements of the kernel ergodic metric (\ref{eq:proposed_metric}). (a) When only optimizing the maximum likelihood estimation term, the system is driven to a (local) maximum of the probability density; (b) When only optimizing the inner product of the trajectory empirical distribution with itself, the system uniformly covers the search space; (c) The kernel metric is the combination of the two elements, optimizing which drives the system to optimally cover the \emph{probability distribution}.}
    \label{fig:metric_elements_comparison}
\end{figure*}

\begin{figure*}[htbp]
    \centering
    \includegraphics[width=0.99\textwidth]{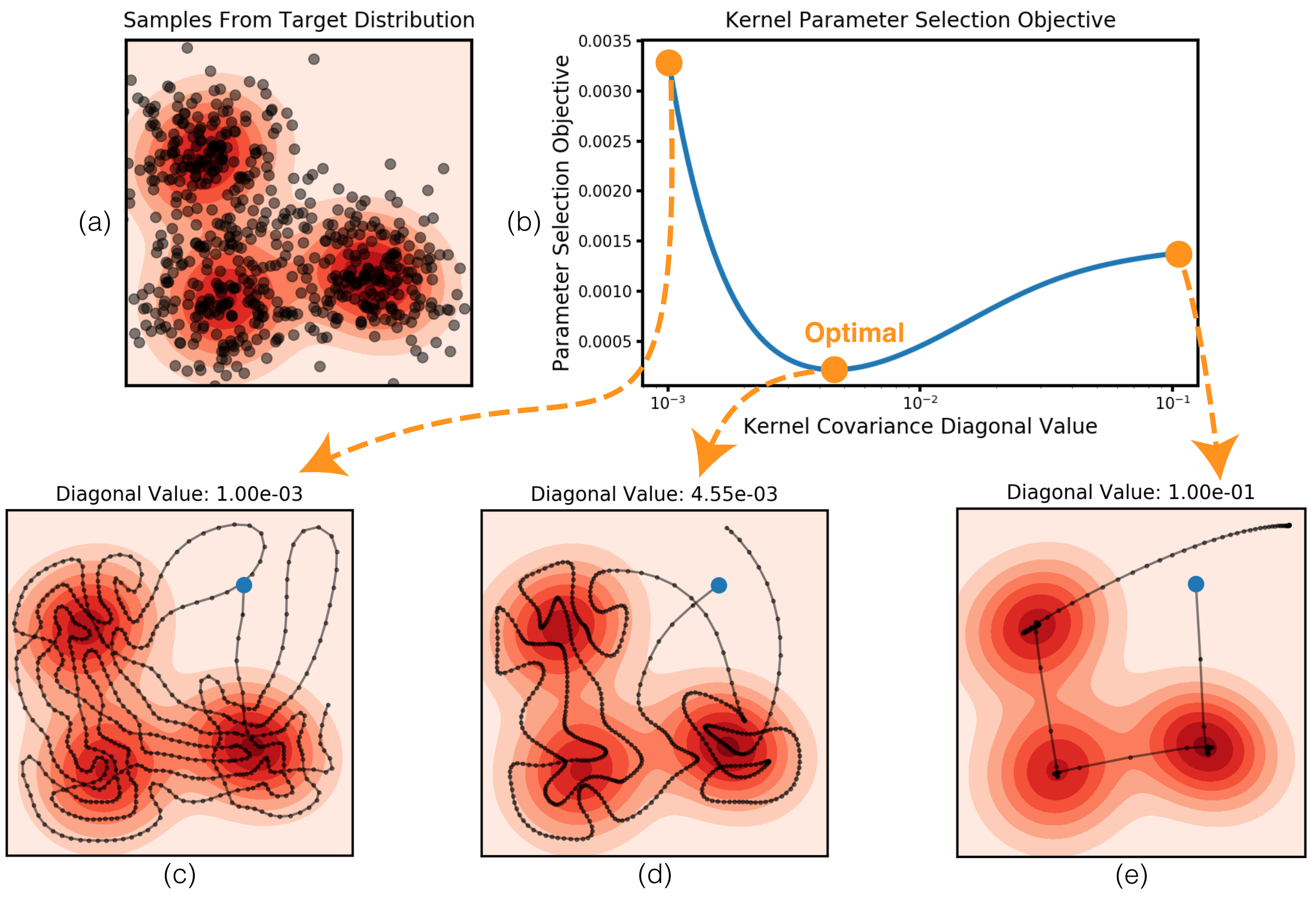}
    \caption{(a) Samples from a target distribution. (b) The kernel parameter selection objective function (\ref{eq:kernel_parameter_obj}) with the given samples. In this case, the kernel parameter is the value of the diagonal entry in the covariance. (c) A sub-optimal kernel parameter leads to an ``over-uniform'' coverage behavior. (d) The optimal kernel parameter generates an ergodic trajectory that allocates the time it spends in each region to be proportional to the integrated probability density of the region. (e) Another sub-optimal kernel parameter leads to an ``over-concentrated'' coverage behavior.}
    \label{fig:kernel_parameter_obj}
\end{figure*}

\subsection{Intuition behind kernel ergodic metric}

The kernel ergodic metric (\ref{eq:proposed_metric}) is based on and asymptotically converges to (\ref{eq:raw_l2_distance}), which involves the summation of $-2\langle c_s(x), p(x) \rangle$ and $\langle c_s(x), c_s(x) \rangle$. From Lemma~\ref{lemma:empirical_inner_product}, it is clear that minimizing $-2\langle c_s(x), p(x) \rangle$ is equivalent to maximum likelihood estimation (information maximization), thus drives the system to the state of maximum density. On the other hand, as shown below, minimizing $\langle c_s(x), c_s(x) \rangle$---the inner product of the trajectory empirical distribution with itself---drives the system to cover the search space uniformly.

\begin{lemma} \label{lemma:minimal_norm_uniformality}
A trajectory $s(t)$ that minimizes $\langle c_s(x), c_s(x) \rangle$ uniformly covers the search space $\mathcal{S}$.
\end{lemma}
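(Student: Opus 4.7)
The plan is to recognize $\langle c_s(x), c_s(x) \rangle$ as the squared $L^2$ norm of a probability density on the bounded domain $\mathcal{S}$, and then invoke a standard argument identifying the uniform density as the unique minimizer of this quantity under the probability-mass constraint.

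First, I would verify that $c_s(x)$ is in fact a probability density on $\mathcal{S}$. Applying Lemma~\ref{lemma:empirical_inner_product} with $f(x)\equiv 1$ yields $\int_{\mathcal{S}} c_s(x)\,dx = \tfrac{1}{T}\int_0^T 1\,dt = 1$, while non-negativity is inherited from the nascent delta representation (\ref{eq:nascent_kernel}) taken with any non-negative kernel such as the Gaussian choice (\ref{eq:gaussian_nascent}).

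Second, I would let $u(x)\equiv 1/|\mathcal{S}|$ denote the uniform density on $\mathcal{S}$ and decompose $c_s(x) = u(x) + (c_s(x) - u(x))$. Expanding the inner product gives
\begin{align*}
\langle c_s, c_s \rangle = \langle u, u \rangle + 2\langle u, c_s - u \rangle + \langle c_s - u, c_s - u \rangle.
\end{align*}
The middle term vanishes because $u$ is constant and $\int (c_s - u)\,dx = 1 - 1 = 0$, so
\begin{align*}
\langle c_s, c_s \rangle = \tfrac{1}{|\mathcal{S}|} + \langle c_s - u, c_s - u \rangle \;\geq\; \tfrac{1}{|\mathcal{S}|},
\end{align*}
with equality if and only if $c_s(x) = u(x)$, i.e.\ the trajectory empirical distribution coincides with the uniform density on $\mathcal{S}$, which is precisely uniform coverage. (Equivalently, one can read the same conclusion off of Cauchy--Schwarz applied to $1 = \langle c_s, 1\rangle \leq \sqrt{\langle c_s,c_s\rangle}\sqrt{|\mathcal{S}|}$, with equality exactly when $c_s \propto 1$.)

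The main subtlety I anticipate is that $c_s(x)$ is a generalized function supported on the (measure-zero) image of the trajectory, so a finite-time trajectory cannot literally equal $u(x)$, and the inner product $\langle c_s, c_s\rangle$ is only well-defined through the smoothing machinery developed above. I would resolve this in the same spirit as the rest of the paper by interpreting the inner product through the smoothed nascent kernel $\phi_\theta$ of (\ref{eq:nascent_kernel}) and taking the limits $T\to\infty$ and $\theta\to 0^+$; the lower bound $\langle c_s, c_s\rangle \geq 1/|\mathcal{S}|$ then holds asymptotically, and a sequence of minimizing trajectories drives the smoothed empirical distribution of $s(t)$ towards the uniform density on $\mathcal{S}$, which is the meaning of uniform coverage claimed in the lemma.
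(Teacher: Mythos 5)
Your proof is correct, and it takes a genuinely different route from the paper's. The paper casts the statement as a constrained functional optimization problem over densities $p(x)$ with $\int_{\mathcal{S}} p(x)\,dx = 1$, forms the Lagrangian, and reads off from the stationarity condition $2p^*(x) - \lambda = 0$ that the extremizer is the constant density $1/\vert\mathcal{S}\vert$; it then argues this extremum is a minimum by exhibiting a single competing density with larger norm. You instead complete the square around the uniform density $u = 1/\vert\mathcal{S}\vert$, using $\int(c_s - u)\,dx = 0$ to kill the cross term and obtain $\langle c_s, c_s\rangle = 1/\vert\mathcal{S}\vert + \Vert c_s - u\Vert^2$ directly. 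Your decomposition buys something real: it establishes \emph{global} minimality with the explicit minimum value in one line, whereas the paper's final step (comparing against one alternative density) only shows the stationary point is not a global maximum --- global minimality there actually rests on the convexity of the functional, which is precisely what your completed square makes explicit. The paper's variational route is more mechanical and would extend to other integral functionals, but as written it is the less airtight of the two. You also flag and handle a subtlety the paper passes over silently: $c_s$ is a generalized function supported on a measure-zero set, so the identification of the minimizing trajectory with the uniform density only makes sense through the nascent-kernel smoothing and the limits $T\to\infty$, $\theta\to 0^+$; your asymptotic reading of the conclusion is the right way to interpret the lemma, and it is consistent with how the paper uses the result.
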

\begin{proof}
See appendix.
\end{proof}

Based on the above result, we can see that the kernel ergodic metric combines---thus an ergodic system exhibits---two kinds of behavior that are both crucial for an exploration task: information maximization and uniform coverage. Figure~\ref{fig:metric_elements_comparison} showcases the trajectories from optimizing the kernel ergodic metric and each of the two terms separately.

\subsection{Automatic selection of optimal kernel parameter}
\label{subsec:kernel_selection}

In practice, the parameter $\theta$ of the Gaussian nascent delta kernel function (\ref{eq:gaussian_nascent}) plays an important role. In this section, we discuss the principle of choosing the optimal kernel parameter. Our principle is based on the observation that, i.i.d. samples from the target spatial distribution can be viewed as the trajectory of an ergodic system. We formally introduce this observation in the lemma below.

\begin{lemma}\label{lemma:large_number}
    Denote $\bar{s}=\{s_t\}$ as a discrete time series with $N$ time steps in total, where each state $s_t\sim p(x)$ is an i.i.d. sample from the target spatial distribution $p(x)$, then the system is ergodic:
    \begin{align}
        \bar{s} = \argmin_{s} \left[ \lim_{\theta\rightarrow 0^+} \lim_{N\rightarrow\infty} \mathcal{E}_\theta(s, p(x)) \right] .\label{eq:empirical_convergence}
    \end{align}
\end{lemma}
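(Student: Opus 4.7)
The plan is to write out the kernel ergodic metric for a discrete i.i.d. sequence, then apply the strong law of large numbers termwise to show that the metric evaluated on $\bar{s}$ converges to $0$, which has already been established as the global minimum of $\mathcal{E}_\theta$ in the asymptotic regime. Since $0$ is the minimum, any sequence achieving it is (almost surely) an element of $\argmin$, giving the claim.

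First I would write the discrete-time version of the kernel ergodic metric in (\ref{eq:proposed_metric}):
\begin{align*}
\mathcal{E}_\theta(\bar{s}, p(x))
&= \frac{1}{N^2}\sum_{i=1}^{N}\sum_{j=1}^{N} \phi_\theta(s_i, s_j) \\
&\quad - \frac{2}{N}\sum_{i=1}^{N} p(s_i) + \int p(x)^2 dx.
\end{align*}
For the second term, since $s_i \sim p$ i.i.d., the strong law of large numbers gives $\frac{1}{N}\sum_i p(s_i) \to \mathbb{E}_{s\sim p}[p(s)] = \int p(x)^2 dx$ almost surely. For the first term, split the double sum into the diagonal $i=j$ and off-diagonal $i\neq j$ parts. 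For any fixed $\theta>0$ the Gaussian kernel (\ref{eq:gaussian_nascent}) is bounded by $\phi_\theta(s,s)<\infty$, so the diagonal contribution is $O(1/N)$ and vanishes. The off-diagonal part is a U-statistic of order two with bounded kernel, so by the SLLN for U-statistics it converges almost surely to $\mathbb{E}\!\left[\phi_\theta(s_1, s_2)\right] = \iint \phi_\theta(x,y)\, p(x)\, p(y)\, dx\, dy$.

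Next I would take the outer limit $\theta \to 0^+$. By the defining property (\ref{eq:delta_inner_product}) of the Dirac delta and Lemma~\ref{lemma:delta_inner_product_itself} (applied via a dominated-convergence argument using the $L^1$-mass-one property of $\phi_\theta$), the iterated integral above satisfies
\begin{align*}
\lim_{\theta\to 0^+}\iint \phi_\theta(x,y)\, p(x)\, p(y)\, dx\, dy
&= \iint \delta(x-y)\, p(x)\, p(y)\, dx\, dy \\
&= \int p(x)^2 dx.
\end{align*}
Collecting the three terms in the order prescribed by $\lim_{\theta\to 0^+}\lim_{N\to\infty}$ yields $\int p^2 - 2\int p^2 + \int p^2 = 0$ almost surely. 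Because $\mathcal{E}_\theta$ is non-negative and, by the preceding consistency theorem, attains the value $0$ exactly on the global minimizers of the exact ergodic metric, $\bar{s}$ achieves the global minimum, which establishes (\ref{eq:empirical_convergence}).

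The main obstacle I anticipate is rigorously justifying the interchange of the two limits and the integral when passing $\theta\to 0^+$: one must check that $\phi_\theta$ forms a genuine approximation to the identity against $p\otimes p$ (which holds for the Gaussian kernel under mild regularity on $p$, e.g.\ boundedness, so that $(\phi_\theta * p) \to p$ in $L^2$), and that the U-statistic SLLN applies uniformly enough that the $N$-limit can be taken before the $\theta$-limit. A secondary subtlety is that the almost-sure convergence of the law of large numbers means the conclusion is an almost-sure statement over the i.i.d. draws $\bar s$, which should be stated explicitly; the expected-value form could alternatively be obtained in mean by Fubini plus dominated convergence.
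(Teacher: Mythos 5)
Your argument is correct and rests on the same engine as the paper's proof---the strong law of large numbers---but it takes a genuinely different route through the statement. The paper's proof is a two-line appeal: SLLN implies $\lim_{\theta\to 0^+}\lim_{N\to\infty}\frac{1}{N}\sum_t \delta_\theta(x-s_t) = p(x)$, i.e., the empirical distribution of $\bar{s}$ converges to $p(x)$, and then Theorem~\ref{theorem:exact_metric_equiv} (global optimality iff $c_s(x)=p(x)$) delivers ergodicity. You instead evaluate the metric itself term by term: the likelihood term converges a.s.\ to $\int p^2$ by the scalar SLLN, the double sum splits into an $O(1/N)$ diagonal and an off-diagonal U-statistic converging a.s.\ to $\iint\phi_\theta(x,y)p(x)p(y)\,dx\,dy$, and the $\theta\to 0^+$ limit recovers $\int p^2$ by the approximation-to-identity property, giving $\int p^2 - 2\int p^2 + \int p^2 = 0$. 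What your route buys is rigor: the paper's claim that $\frac{1}{N}\sum_t\delta_\theta(x-s_t)$ ``converges to $p(x)$'' is left in an unspecified mode of convergence for a generalized function, whereas your three limits are ordinary scalar limits with standard justifications (Hoeffding's SLLN for U-statistics, dominated convergence for the mollifier). What the paper's route buys is brevity and reuse of Theorem~\ref{theorem:exact_metric_equiv}. Your two flagged caveats are also well placed: the conclusion is almost-sure over the draws (the paper does not say this either), and one should note that $\mathcal{E}_\theta$ is only guaranteed non-negative in the double limit, where it coincides with $\langle c_s(x)-p(x), c_s(x)-p(x)\rangle$---you handle this correctly by identifying $0$ as the minimum of the limiting functional rather than of $\mathcal{E}_\theta$ at finite $\theta$.
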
 
\begin{proof}
    Based on the strong law of large numbers~\cite{vaart_asymptotic_1998}, the empirical distribution of $\bar{s}$ converges to the spatial distribution with the number of samples approaching infinity:
    \begin{align}
        \lim_{\theta\rightarrow 0^+}\lim_{N\rightarrow\infty} c_{\bar{s}}(x) = \lim_{\theta\rightarrow 0^+}\lim_{N\rightarrow\infty} \left[ \frac{1}{N} \sum_{t=1}^{N} \delta_{\theta}(x - s_t) \right] = p(x). \nonumber
    \end{align} Based on Theorem~\ref{theorem:exact_metric_equiv}, $\bar{s}$ is an ergodic system.
\end{proof}

Based on the observation in Lemma~\ref{lemma:large_number}, given a finite set of samples $\{s_i\}$ from the target spatial distribution $p(x)$, we can choose optimal nascent kernel parameter $\theta$ that minimizes the derivative of the samples $\{s_i\}$ under the kernel ergodic metric (\ref{eq:proposed_metric}), with the continuous-time integral replaced by discrete Monte-Carlo integration. In other words, we can define an optimization objective function to automatically select the optimal kernel parameters given the set of samples $\{s_i\}$.

\begin{definition}[Kernel parameter selection objective]
    Given a target spatial distribution $p(x)$, a vector set of i.i.d. samples from the distribution $\bar{s}=[s_i]$, and a parametric nascent delta kernel function $\phi_\theta(\cdot,\cdot)$, the optimal kernel parameter is selected by minimizing the following objective function:
    \begin{align}
        J(\theta) {=} \left\Vert \frac{d}{d\bar{s}} \left( {-}\frac{1}{N} \sum_{i=1}^{N} P(s_i) {+} \frac{1}{N^2} \sum_{i=1}^{N} \sum_{j=1}^{N} \phi_\theta(s_{i}, s_{j}) \right)  \right\Vert^2. \label{eq:kernel_parameter_obj}
    \end{align}
\end{definition}

\begin{remark}
    Even though we specify the nascent delta kernel to be a Gaussian kernel in this paper, the kernel parameter select objective function (\ref{eq:kernel_parameter_obj}) applies to any smooth parametric nascent delta kernel functions.
\end{remark}

In Figure~\ref{fig:kernel_parameter_obj}, an example objective function for kernel parameter selection is shown, as well as how different kernel parameters could influence the resulting ergodic trajectory. From Figure~\ref{fig:kernel_parameter_obj}, we can also see that the kernel parameter is an adjustable parameter for a practitioner to generate coverage trajectories that balance behaviors between uniform coverage and seeking information maximization. Thus, a kernel parameter could be sub-optimal under the parameter selection objective yet still generate valuable trajectories for practitioners depending on the specific requirements of a task.


\section{Optimal Control With Kernel Ergodic Metric} \label{sec:ergodic_control}

In this section, we will introduce the method to optimize the kernel ergodic metric when the trajectory is governed by a continuous-time dynamic system. Our optimal control formula is based on the continuous-time iterative linear quadratic regulator (iLQR) framework~\cite{hauser_projection_2002}, which is also used as the optimal control framework for the Fourier ergodic metric in~\cite{miller_trajectory_2013-1}\cite{miller_ergodic_2016}. We will first introduce the preliminaries of the iLQR algorithm.

\subsection{Preliminaries for iterative linear quadratic regulator}

The continuous-time iterative linear quadratic regulator (iLQR) method finds the local optimum of the following (nonlinear) optimal control problem:
\begin{align}
    u^* & = \argmin_{u} J(u) \\
        & = \argmin_{u} \int_0^T l(s(t), u(t)) dt \label{eq:oc_standard_obj} \\ 
    \text{s.t. } s(t) & = s_0 + \int_0^t f(s(\tau), u(\tau)) d\tau,
\end{align} where $l(s(t), u(t))$ is the runtime cost function. Both the cost function and the dynamics $f(s(t),u(t))$ can be nonlinear. 

In each iteration of the continuous-time iLQR framework, we find an optimal descent direction $v(t)$ of the current control $u(t)$ by solving the following optimal control problem:
\begin{align}
    v^* = \argmin_{v} DJ(u) {\cdot} v + \int_0^T \Vert z(t) \Vert^2_Q + \Vert v(t)\Vert^2_R dt, \label{eq:ilqr_subprobelm}
\end{align} where $Q$ and $R$ are user-specified regulation matrices, $z(t)$ is the corresponding perturbation on the system state $s(t)$ by applying the control perturbation $v(t)$, and $DJ(u) {\cdot} v$ is the Gateaux derivative of the objective function in the direction of $v(t)$ defined as: 
\begin{align}
    DJ(u) {\cdot} v = \lim_{\epsilon\rightarrow 0} \frac{d}{d\epsilon} J(u + \epsilon {\cdot} v).
\end{align} The following lemma show that this subproblem (\ref{eq:ilqr_subprobelm}) is a linear quadratic regulator (LQR) problem.

\begin{lemma} \label{lemma:lqr_subproblem}
    The Gateaux derivative of the cost function $J(u)$ defined in (\ref{eq:oc_standard_obj}) can be written as:
    \begin{align}
        DJ(u) {\cdot} v = \int_0^T a(t)^\top z(t) + b(t)^\top v(t) dt, 
    \end{align} where 
    \begin{align}
        a(t) = \frac{d}{d s(t)} l(s(t), u(t)), \text{ } b(t) = \frac{d}{d u(t)} l(s(t), u(t)). \label{eq:linearized_cost}
    \end{align} Furthermore, the perturbation $z(t)$ on the state trajectory $s(t)$ has a linear dynamics:
    \begin{align}
        z(t) = z_0 + \int_0^T A(\tau) z(\tau) + B(\tau) v(\tau) d\tau, \text{ } z_0 = 0, \label{eq:linearized_dynamics}
    \end{align} where
    \begin{align}
        A(\tau) & = \frac{d}{d s(\tau)} f(s(\tau), u(\tau)), \text{ } B(\tau) = \frac{d}{d u(\tau)} f(s(\tau), u(\tau)). \nonumber
    \end{align}
\end{lemma}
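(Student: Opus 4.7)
The plan is to compute $DJ(u) \cdot v$ directly from its definition by viewing both the state trajectory and the cost as functionals of the control, and then applying the chain rule carefully. First, I would introduce the perturbed control $u^{\epsilon}(t) = u(t) + \epsilon v(t)$ and let $s^{\epsilon}(t)$ denote the resulting state trajectory, so that $s^{0}(t) = s(t)$. The state perturbation is then defined as $z(t) = \frac{d}{d\epsilon} s^{\epsilon}(t) \big|_{\epsilon = 0}$, and the goal becomes expressing both the integrated cost derivative and the dynamics of $z(t)$ in terms of $z$ and $v$.

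For the first claim, I would write
\begin{align}
DJ(u) \cdot v = \lim_{\epsilon \to 0} \frac{d}{d\epsilon} \int_0^T l(s^{\epsilon}(t), u^{\epsilon}(t)) dt = \int_0^T \frac{d}{d\epsilon} l(s^{\epsilon}(t), u^{\epsilon}(t))\Big|_{\epsilon = 0} dt, \nonumber
\end{align}
where the exchange of the derivative and integral is justified by differentiability of $l$ together with boundedness on the compact interval $[0,T]$ (using continuity of $s^{\epsilon}(t)$ in $\epsilon$, which follows from standard ODE continuous-dependence results). Applying the multivariable chain rule to the integrand immediately produces $a(t)^\top z(t) + b(t)^\top v(t)$ with $a(t)$ and $b(t)$ as defined.

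For the linearized dynamics, I would start from the integral form of the state equation, $s^{\epsilon}(t) = s_0 + \int_0^t f(s^{\epsilon}(\tau), u^{\epsilon}(\tau)) d\tau$, differentiate both sides with respect to $\epsilon$ at $\epsilon = 0$, and again swap differentiation with the time integral under the same regularity justification. The chain rule then gives
\begin{align}
z(t) = \int_0^t \left[ \frac{\partial f}{\partial s}(s(\tau), u(\tau)) z(\tau) + \frac{\partial f}{\partial u}(s(\tau), u(\tau)) v(\tau) \right] d\tau, \nonumber
\end{align}
with $z(0) = 0$ because $s_0$ does not depend on $\epsilon$. Identifying the Jacobians with $A(\tau)$ and $B(\tau)$ yields the claimed linear dynamics.

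The main obstacle is the technical justification for interchanging $\frac{d}{d\epsilon}$ with the time integrals and for ensuring that $\epsilon \mapsto s^{\epsilon}(t)$ is differentiable in $\epsilon$ in the first place. Both rely on the standing assumption (stated earlier in Section~\ref{sec:problem_formulation}) that $f$ is differentiable in $s$ and $u$, together with a local Lipschitz / boundedness condition on the control set that ensures a smooth dependence of ODE solutions on parameters; once that is invoked, everything else is routine application of the chain rule. I would state this regularity as a minor caveat and then let the rest of the derivation proceed as above.
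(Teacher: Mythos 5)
Your derivation is correct and is essentially the standard variational argument that the paper defers to by citation (the paper's ``proof'' is simply ``See \cite{hauser_projection_2002} and \cite{miller_trajectory_2013-1}''), namely perturbing the control, defining $z(t)$ as the $\epsilon$-derivative of the perturbed state, and applying the chain rule to both the cost integral and the integral form of the dynamics. As a minor aside, your upper limit $\int_0^t$ in the linearized state equation is the correct one; the paper's statement of (\ref{eq:linearized_dynamics}) writes $\int_0^T$, which is a typo.
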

\begin{proof}
    See \cite{hauser_projection_2002} and \cite{miller_trajectory_2013-1}.
\end{proof} 

Since the subproblem in (\ref{eq:ilqr_subprobelm}) is a standard continuous-time LQR problem, we can find the optimal descent direction by solving the continuous-time Riccati equation. After solving the LQR subproblem (\ref{eq:ilqr_subprobelm}), we can update the control $u(t)$ along with the optimal descent direction, with a step size that can be found using the Armijo backtracking line search~\cite{nocedal_numerical_2006}. 

\subsection{Derive iLQR for kernel ergodic metric}

\begin{definition}[Kernel ergodic control]
    Given a target distribution $p(x)$ and system dynamics $\dot{s}(t) = f(s(t), u(t))$, the kernel ergodic control problem is defined as follow:
    \begin{align}
        u^* & = \argmin_{u} J(u) \\
        J(u(t)) & = \mathcal{E}_{\theta}(s(t),p(x)) + \int_0^T l(s(t), u(t)) dt \label{eq:ergodic_control_obj} \\
        \text{s.t. } & s(t) = s_0 + \int_{0}^{t} f(s(\tau), u(\tau)) d\tau,
    \end{align} where $\mathcal{E}_{\theta}(s(t),p(x))$ is the kernel ergodic metric (\ref{eq:proposed_metric}) and $l(s(t),u(t))$ is the additional run-time cost, such as the regulation cost on the control. 
\end{definition}

The only difference between the kernel ergodic control problem and the optimal control objective in (\ref{eq:oc_standard_obj}) is that the kernel ergodic metric (\ref{eq:proposed_metric}) has a double time integral instead of a single time integral. Therefore, we need to derive the Gateaux derivative of the kernel ergodic metric in order to apply iLQR to the kernel ergodic control problem.

\begin{lemma} \label{lemma:gateaux_kernel}
    The Gateaux derivative of the kernel ergodic metric is:
    \begin{gather}
        D \mathcal{E}_{\theta}(s(t),p(x)) \cdot z(t) = \int_0^T a_\theta(t) z(t) dt, \\
        a_\theta(t) = -\frac{2}{T} \frac{d}{ds(t)} p(s(t)) + \frac{2}{T^2} \int_{0}^{T} \frac{d}{d s(t)}\phi(s(t), s(\tau)) d\tau. \nonumber
    \end{gather}
\end{lemma}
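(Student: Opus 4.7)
The plan is to compute the Gateaux derivative term by term, applying the definition $D\mathcal{E}_{\theta}(s(t),p(x)) \cdot z(t) = \lim_{\epsilon\to 0} \tfrac{d}{d\epsilon} \mathcal{E}_{\theta}(s(t)+\epsilon z(t), p(x))$ directly to the three summands of the kernel ergodic metric (\ref{eq:proposed_metric}). The third term $\int p(x)^2 dx$ does not depend on $s(t)$, so it contributes $0$. The second term $-\tfrac{2}{T}\int_0^T p(s(t))dt$ is a single time integral, so standard chain rule (exchanging the $\epsilon$-derivative with the $t$-integral, which is justified since $p$ and $s$ are differentiable) yields $-\tfrac{2}{T}\int_0^T \tfrac{d}{ds(t)} p(s(t)) \, z(t) \, dt$, matching the first term of $a_\theta(t)$.

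The substantive step is the double-integral term $\tfrac{1}{T^2}\int_0^T\!\int_0^T \phi_\theta(s(t_1), s(t_2))\, dt_1 dt_2$. Here I would perturb the trajectory in \emph{both} arguments simultaneously, so that the $\epsilon$-derivative of $\phi_\theta(s(t_1){+}\epsilon z(t_1),\, s(t_2){+}\epsilon z(t_2))$ evaluated at $\epsilon=0$ produces two pieces via the chain rule: a partial with respect to the first slot times $z(t_1)$, and a partial with respect to the second slot times $z(t_2)$. After exchanging the derivative with the double integral, the integrand splits into a sum of two double integrals.

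Next I would invoke symmetry of the nascent delta kernel, $\phi_\theta(a,b) = \phi_\theta(b,a)$, noted just before the lemma. Renaming $t_1\leftrightarrow t_2$ in one of the two pieces shows the two double integrals are equal, collapsing them into a single expression with a factor of $2$. Fubini's theorem then lets me pull the integration in $t$ outside, giving
\begin{align}
\frac{2}{T^2}\int_0^T \left(\int_0^T \frac{d}{ds(t)} \phi_\theta(s(t), s(\tau))\, d\tau\right) z(t)\, dt.\nonumber
\end{align}
Adding this to the contribution of the likelihood term produces exactly the claimed $a_\theta(t)$.

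The main obstacle, in my view, is not computational but interpretive: carefully tracking the symmetry argument to recover the factor of $2$, and ensuring that the notation $\tfrac{d}{ds(t)}\phi(s(t), s(\tau))$ in the statement is unambiguous as the derivative with respect to the \emph{first} slot evaluated at the current trajectory. Standard regularity (differentiability of $\phi_\theta$, $p$, $s$, and boundedness on the compact interval $[0,T]$) provides the dominated-convergence justification for exchanging $\tfrac{d}{d\epsilon}$ with the (double) time integral, so no deeper analytic machinery is needed beyond what is implicit in the iLQR preliminaries cited from \cite{hauser_projection_2002}.
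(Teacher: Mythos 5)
Your proposal is correct and follows essentially the same route as the paper's own proof: apply the Gateaux-derivative definition term by term, use the chain rule on the double-integral kernel term to obtain two contributions, and invoke the symmetry of $\phi_\theta$ (with a $t_1\leftrightarrow t_2$ relabeling) to merge them into a single term with the factor of $2$. Your added remarks on dominated convergence and on disambiguating $\frac{d}{ds(t)}\phi(s(t),s(\tau))$ as the first-slot derivative are reasonable housekeeping that the paper leaves implicit.
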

\begin{proof}
    See appendix.
\end{proof} 

With Lemma~\ref{lemma:gateaux_kernel}, we specify the LQR subproblem to be solved in each iteration for kernel ergodic control.

\begin{definition}[LQR subproblem]
    The iLQR algorithm for kernel ergodic control (\ref{eq:ergodic_control_obj}) iteratively solves the following LQR problem through the continuous-time Riccati equation to compute the optimal descent direction to update the control:
    \begin{align}
        v^* & = \argmin_{v} \int_0^T \Vert z(t) \Vert^2_Q + \Vert v(t)\Vert^2_R \nonumber \\
        & \quad\quad\quad\quad\quad\quad + (a_\theta(t){+}a(t))^\top z(t) + b(t)^\top v(t) dt \label{eq:kernel_lqr_subproblem} \\
        \text{s.t. } & z(t) = z_0 + \int_0^T A(\tau) z(\tau) + B(\tau) v(\tau) d\tau, \text{ } z_0 {=} 0 .
    \end{align}
\end{definition} The pseudocode of the iLQR algorithm for kernel ergodic control is described in Algorithm~\ref{algo:traj_opt}.

\begin{algorithm} 
    \caption{Kernel-ergodic trajectory optimization}
    \label{algo:traj_opt}
    \begin{algorithmic}[1] 
        \Procedure{TrajOpt}{$s_0$, $\bar{u}(t)$}
        \State $k \gets 0$ \Comment{$k$ is the iteration index.}
        \State $u_k(t) \gets \bar{u}(t)$
        \While{termination criterion not met}
            \State Simulate $s_k(t)$ given $s_0$ and $u_k(t)$
            \State Compute descent direction $v_k(t)$ by solving  Eq(\ref{eq:kernel_lqr_subproblem})
            \State Find step size $\eta$ \Comment{E.g., apply line search}
            \State $u_{k+1}(t) \gets u_k(t) + \eta\cdot v_k(t)$
            \State $k \gets k+1$
        \EndWhile
        \State \textbf{return} $u_k(t)$
        \EndProcedure
    \end{algorithmic}
\end{algorithm}

\subsection{Accelerating optimization} 

We further introduce two approaches to accelerate the computation in Algorithm~\ref{algo:traj_opt}. 

\subsubsection{Bootstrap} \label{subsec:bootstrap}
The bootstrap step generates an initial trajectory roughly going through the target distribution. We formulate a trajectory tracking problem with the reference trajectory as an \emph{ordered} set of samples from the target distribution. The order of the samples is determined by approximating the solution of a traveling-salesman problem (TSP) through the nearest-neighbor approach~\cite{rosenkrantz_analysis_1977}, which has a maximum quadratic complexity. 

\subsubsection{Parallelization} 

The time integral term in the descent direction formula (\ref{eq:kernel_lqr_subproblem}) and the kernel ergodic metric itself (\ref{eq:proposed_metric}) can be computed using the Riemann sum formula, which can be computed in parallel.


\section{Kernel Ergodic Control on Lie groups} \label{sec:lie_groups}

So far, the derivation of the kernel ergodic metric and the trajectory optimization method assumes the robot state evolves in an Euclidean space. One of the advantages of the kernel ergodic metric is that it can be generalized to other Riemannian manifolds, particularly Lie groups. 

\subsection{Preliminaries}

A Lie group is a smooth manifold. Thus, any element on the Lie group locally resembles a linear space. However, unlike other manifolds, elements in a Lie group also satisfy the four group axioms equipped with a composition operation: closure, identity, inverse, and associativity. In robotics, the Lie group is often used to represent non-linear geometrical spaces, such as the space of rotations or rigid body transformations, while allowing analytical and numerical techniques in the Euclidean space to be applied. In particular, we are interested in the special orthogonal group SO(3) and the special Euclidean group SE(3), which are used extensively to model 3D rotation and 3D rigid body transformation (simultaneous rotation and translation), respectively. Below, we briefly introduce the key concepts of Lie groups that allow us to generalize the kernel ergodic control framework to Lie groups. For more information on Lie groups and their application in robotics, we refer the readers to~\cite{choset_principles_2005,chirikjian_stochastic_2009,lynch_modern_2017,sola_micro_2021,boumal_introduction_2023}.

\begin{definition}[SO(3) group]
    The special orthogonal group SO(3) is a matrix manifold in which each element is a 3-by-3 matrix satisfying the following property:
    \begin{align}
        g^\top g = g g^\top = I \text{ and } \det(g) = 1, \quad \forall g\in SO(3) \subset \mathbb{R}^{3\times 3}, \nonumber
    \end{align} where $I$ is a 3-by-3 identify matrix. The composition operator for SO(3) is the standard matrix multiplication. 
\end{definition}

\begin{definition}[SE(3) group]
    The special Euclidean group SE(3) is a matrix manifold. Each element of SE(3) is a 4-by-4 matrix that, when used as a transformation between two Euclidean space points, preserves the Euclidean distance between and the handedness of the points. Each element has the following structure:
    \begin{align}
        g = \begin{bmatrix}
            R & \mathbf{t} \\
            \mathbf{0} & 1
        \end{bmatrix}, R \in SO(3), \mathbf{t}, \mathbf{0}\in\mathbb{R}^3.
    \end{align} The composition operation in SE(3) is simply the standard matrix multiplication and it has the following structure:
    \begin{align}
        & g_1 \circ g_2 = \begin{bmatrix}
            R_1 R_2 & R_1 \mathbf{t}_2 + \mathbf{t}_1 \\
            \mathbf{0} & 1 
        \end{bmatrix} \\
        g_1 & = \begin{bmatrix}
            R_1 & \mathbf{t}_1 \\
            \mathbf{0} & 1
        \end{bmatrix}, \quad g_2 = \begin{bmatrix}
            R_2 & \mathbf{t}_2 \\
            \mathbf{0} & 1 
        \end{bmatrix}. \nonumber
    \end{align}
\end{definition}

The smooth manifold property of the Lie group means at every element in SO(3) and SE(3), we can locally define a linear matrix space. We call such space the \emph{tangent space} of the group.

\begin{definition}[Tangent space]
    For an element $g$ on a manifold $\mathcal{M}$, its tangent space $\mathcal{T}_g\mathcal{M}$ is a linear space consisting of all possible tangent vectors that pass through $g$.
\end{definition}
\begin{remark}
    Each element in the tangent space $\mathcal{T}_g\mathcal{M}$ can be considered as the time derivative of a temporal trajectory on the manifold $g(t)$ that passes through the $g$ at time $t$. Given the definition of a Lie group, the time derivative of such a trajectory is a vector.
\end{remark}

\begin{definition}[Lie algebra]
    For a Lie group $\mathcal{G}$, the tangent space at its identity element $\mathcal{I}$ is the Lie algebra of this group, denoted as $\mathfrak{g} = \mathcal{T}_{\mathcal{I}}\mathcal{G}$.
\end{definition}

Despite being a linear space, the tangent space on the Lie group and Lie algebra could still have non-trivial structures. For example, the Lie algebra of the SO(3) group is the linear space of skew-symmetric matrices. However, elements in Lie algebra can be expressed as a vector on top of a set of \emph{generators}, which are the derivatives of the tangent element in each direction. This key insight allows us to represent Lie algebra elements in the standard Euclidean vector space. We can transform the elements between the Lie algebra and the standard Euclidean space through two isomorphisms---the \emph{hat} and \emph{vee} operators---defined below. 

\begin{figure*}[htbp]
    \centering
    \includegraphics[width=0.9\textwidth]{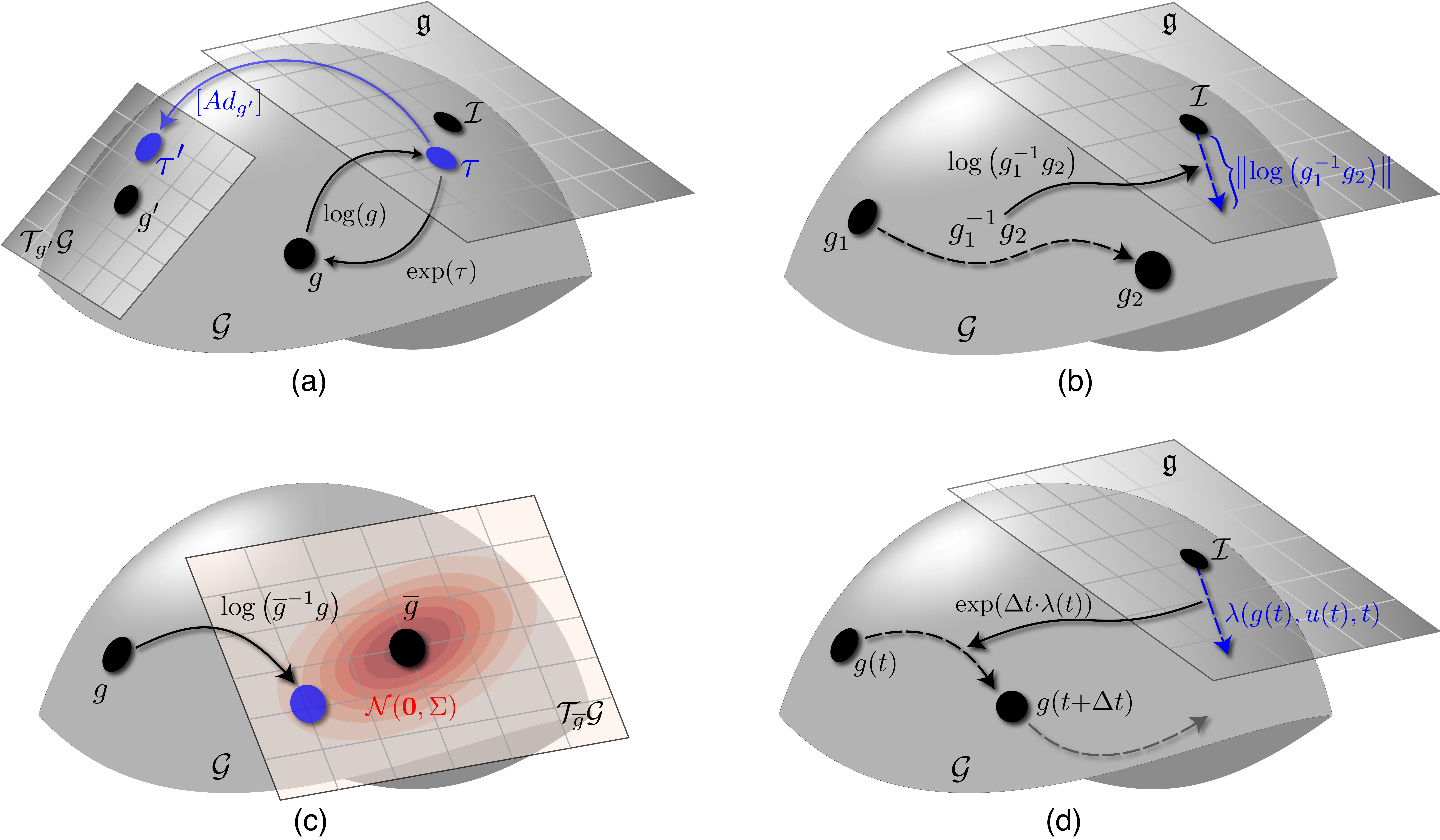}
    \caption{Illustration of key concepts in the Lie group ergodic search formula. (a) The exponential map maps a Lie algebra element $\tau\in\mathfrak{g}$ to a Lie group element $g\in\mathcal{G}$; The logarithm map is the inverse of the exponential map; The adjoint transformation maps an element from one tangent space (Lie algebra in this case) to an element in another tangent space $\mathcal{T}_{g^\prime}\mathcal{G}$. (b) The difference between two Lie group elements $g_1^{-1}g_2$ is mapped to the Lie algebra $\log(g_1^{-1}g_2)$ through the logarithm map, which allows us to use the Euclidean space formula to define the quadratic function on the Lie group. (c) The Lie group Gaussian distribution is defined in the tangent space of the mean $\bar{g}$. The probability density function is evaluated as a zero-mean Euclidean Gaussian distribution $\mathcal{N}(\mathbf{0},\Sigma)$ over the Lie group sample $g$ in the tangent space $\log(\bar{g}^{-1}g)$. (d) Dynamics is defined through the left-trivialization in the Lie algebra $\lambda:\mathcal{G}\times\mathbb{R}^n\times\mathbb{R}_0^+\mapsto\mathfrak{g}$, which is mapped back to propagate the Lie group system state through the exponential map $\exp(\Delta t{\cdot}\lambda(t))$. The dynamics is defined as continuous, but the Lie group trajectory is integrated piece-wise.}
    \label{fig:lie_group_intro}
\end{figure*}

\begin{definition}[Hat]
    The hat operator $\hat{(\cdot)}$ is an isomorphism from a $n$-dimensional Euclidean vector space to the Lie algebra with $n$ degress of freedom:
    \begin{align}
        \hat{(\cdot)} : \mathbb{R}^n \mapsto \mathfrak{g}; \quad \hat{\nu} = \sum_{i=1}^{n} \nu_i E_i \in \mathfrak{g}, \quad \nu\in\mathbb{R}^n,
    \end{align} where $E_i$ is the $i$-th generator of the Lie algebra. 
\end{definition}

\begin{definition}[Vee]
    The vee operator $^\vee{(\cdot)} : \mathfrak{g} \mapsto \mathbb{R}^n$ is the inverse mapping of the hat operator. 
\end{definition}

For the SO(3) group, the hat operator is defined as:
\begin{align}
    \hat{\omega} & = \begin{bmatrix}
        0 & -\omega_3 & \omega_2 \\
        \omega_3 & 0 & -\omega_1 \\
        -\omega_2 & \omega_1 & 0
    \end{bmatrix}, \omega \in \mathbb{R}^3.
\end{align} For the SE(3) group, the hat operator is defined as:
\begin{align}
    \hat{\tau} = \begin{bmatrix}
        \hat{\omega} & \nu \\
        \mathbf{0} & 0
    \end{bmatrix} \in \mathbb{R}^{4\times 4}, \quad \tau=\begin{bmatrix}
        \omega\\ \nu
    \end{bmatrix} \in \mathbb{R}^6, \omega,\nu\in\mathbb{R}^3.
\end{align}

\begin{definition}[Exponential map]
    The exponential map, denoted as $\text{exp}:\mathfrak{g}\mapsto\mathcal{G}$, maps an element from the Lie algebra to the Lie group. 
\end{definition}

\begin{definition}[Logarithm map]
    The logarithm map, denoted as $\log:\mathcal{G}\mapsto\mathfrak{g}$, maps an element from the Lie algebra to the Lie group. 
\end{definition}

The exponential and logarithm map for the SO(3) and SE(3) groups can be computed in practice through specific, case-by-case formulas. For example, the exponential map for the SO(3) group can be computed using the Rodrigues' rotation formula. More details regarding the formulas for exponential and logarithm map can be found in~\cite{lynch_modern_2017}.

\begin{definition}[Adjoint]
    The adjoint of a Lie group element $g$, denoted as $Ad_g:\mathfrak{g}\mapsto\mathfrak{g}$, transforms the vector in one tangent space to another. Given two tangent spaces, $\mathcal{T}_{g_1}\mathcal{G}$ and $\mathcal{T}_{g_2}\mathcal{G}$, from two elements of the Lie group $\mathcal{G}$, the adjoint enables the following transformation:
    \begin{align}
        v_1 = Ad_{g_1^{-1}g_2} (v_2).
    \end{align} 
\end{definition}

Since the adjoint is a linear transformation, it can be represented as a matrix denoted as $[Ad_g]$. The adjoint matrix for a SO(3) matrix is itself, the adjoint matrix for a SE(3) matrix is:
\begin{align}
    [Ad_g] = \begin{bmatrix}
        R & \hat{\mathbf{t}} R \\
        \mathbf{0} & R
    \end{bmatrix} \in \mathbb{R}^{6\times 6}, \quad g = \begin{bmatrix}
        R & \mathbf{t} \\
        \mathbf{0} & 1
    \end{bmatrix}.
\end{align} Visual illustrations of the exponential map, logarithm map, and adjoint are shown in Figure~\ref{fig:lie_group_intro}(a).

\subsection{Kernel on Lie groups}

The definition of a Gaussian kernel is built on top of the notion of ``distance''---a quadratic function of the ``difference''---between the two inputs. While the definition of distance in Euclidean space is trivial, its counterpart in Lie groups will have different definitions and properties. Thus, to define a kernel in a Lie group, we start with defining quadratic functions in Lie groups~\cite{fan_online_2016}.

\begin{definition}[Quadratic function]
Given two elements $g_1, g_2$ on the Lie group $\mathcal{G}$, we can define the quadratic function as:
\begin{align}
    C(g_1, g_2) = \frac{1}{2} \Vert \log(g_2^{-1} g_1) \Vert_{M}^2, \label{eq:lie_quadratic}
\end{align} where $M$ is the weight matrix and $\log$ denotes Lie group logarithm. 
\end{definition}

The visual illustration of the quadratic function on Lie groups is shown in Figure~\ref{fig:lie_group_intro}(b). Since the quadratic function is defined on top of Lie algebra, it has similar numerical properties to regular Euclidean space quadratic functions, such as symmetry.

The derivatives of the quadratic function, following the derivation in~\cite{fan_online_2016}, are as follows:
\begin{align}
    D_1 C(g_1, g_2) & = \text{d} \exp^{-1}\left(-\log(g_2^{-1}g_1)\right)^\top M \log(g_2^{-1}g_1) \\
    D_2 C(g_1, g_2) & = -[\mathit{Ad}_{g_1^{-1}g_2}]^\top D_1 C(g_1, g_2),
\end{align} where $\text{d}\exp^{-1}$ denotes the trivialized
tangent inverse of the exponential map, its specification on SO(3) and SE(3) are derived in~\cite{fan_online_2016}.

Given (\ref{eq:lie_quadratic}), we now define the squared exponential kernel on Lie groups.

\begin{definition}
    The squared exponential kernel on Lie groups is defined as:
    \begin{align}
        \Phi(g_1, g_2) = \alpha \cdot \exp\left( \frac{1}{2} \Vert \log(g_2^{-1} g_1) \Vert_{M}^2 \right).
    \end{align}
\end{definition}

\subsection{Probability distribution on Lie groups}

Probability distributions in Euclidean space need to be generalized to Lie groups case by case; thus, we primarily focus on generalizing Gaussian and Gaussian-mixture distributions to the Lie group as the target distribution. The results here also apply to other probability distributions, such as the Cauchy distribution and Laplace distribution.

Our formula follows the commonly used \emph{concentrated Gaussian} formula~\cite{yunfeng_wang_error_2006,wang_nonparametric_2008,chirikjian_gaussian_2014}, which has been widely used for probabilistic state estimation on Lie groups~\cite{chauchat_invariant_2018,mangelson_characterizing_2020, hartley_contact-aided_2020}. 

\begin{definition}[Gaussian distribution]
    Given a Lie group mean $\bar{g}\in\mathcal{G}$ and a covariance matrix $\Sigma$ whose dimension matches the degrees of freedom of the Lie group (thus the dimension of a tangent space on the group), we can define a Gaussian distribution, denoted as $\mathcal{N}_{\mathcal{G}}(\bar{g}, \Sigma)$, with the following probability density function:
    \begin{align}
        \mathcal{N}_{\mathcal{G}}(g\vert \bar{g}, \Sigma) & = \mathcal{N}(\log(\bar{g}^{-1} \circ g) \vert \mathbf{0}, \Sigma) \label{eq:concentrated_gaussian},
    \end{align} where $\mathcal{N}(\mathbf{0}, \Sigma)$ is a zero-mean Euclidean Gaussian distribution in the tangent space of the mean $\mathcal{T}_{\bar{g}}\mathcal{G}$.
\end{definition} 

Given the above definition, in order to generate a sample $g{\sim}\mathcal{N}_{\mathcal{G}}(\bar{g}, \Sigma)$ from the distribution, we first generate a perturbation from the distribution the tangent space $\epsilon{\sim}\mathcal{N}(\mathbf{0}, \Sigma)$, which will perturb the Lie group mean to generate the sample:
\begin{align}
    g = \bar{g} \circ \exp(\epsilon) & \sim \mathcal{N}_{\mathcal{G}}(\bar{g}, \Sigma). \label{eq:perturb_sample_gen} \\
    \epsilon & \sim \mathcal{N}(\mathbf{0}, \Sigma)
\end{align} Following this relation, we can verify that the Lie group Gaussian distribution and the tangent space Gaussian distribution share the same covariance matrix through the following equation:
\begin{align}
    \Sigma & = \mathbb{E}\left[ \epsilon\epsilon^\top \right] \\
     & = \mathbb{E}\left[ \log(\bar{g}^{-1}\circ g) \log(\bar{g}^{-1}\circ g)^\top \right].
\end{align}

Since the optimal control formula requires the derivative of the target probability density function with respect to the state, we now give the full expression of the probability density function and derive its derivative:
\begin{align}
    P(g) & = \mathcal{N}_{\mathcal{G}}(g\vert \bar{g}, \Sigma) \nonumber \\
    & = \mathcal{N}(\log(\bar{g}^{-1}\circ g) \vert \mathbf{0}, \Sigma) \nonumber \\
    & = \eta \cdot \exp\left( -\frac{1}{2}\log\left( \bar{g}^{-1}g\right)^\top\Sigma^{-1}\log\left( \bar{g}^{-1}g\right) \right), 
\end{align} where $\eta$ is the normalization term defined as:
\begin{align}
    \eta = \frac{1}{\sqrt{(2\pi)^n\det(\Sigma)}}. 
\end{align} The derivative of $P(g)$ is:
\begin{align}
    D P(g) = P(g) \cdot -\left( \frac{d}{dg} \log\left(\bar{g}^{-1}g\right) \right)^\top \Sigma^{-1}\log\left(\bar{g}^{-1}g\right), 
\end{align} where the derivative $\frac{d}{dg} \log\left(\bar{g}^{-1}g\right)$ can be further expanded as:
\begin{align}
    \frac{d}{dg} \log\left(\bar{g}^{-1}g\right) & = \mathbf{d}\exp\left(-\log\left(\bar{g}^{-1}g\right)\right) \cdot \frac{d}{dg}\left(\bar{g}^{-1}g \right) \\
    & = \mathbf{d}\exp\left(-\log\left(\bar{g}^{-1}g\right)\right), 
\end{align} where $\text{d}\exp$ and $\text{d}\exp^{-1}$ denote the trivialized
tangent of the exponential map and the inverse of the exponential map, the specification of two on SO(3) and SE(3) are derived in~\cite{fan_online_2016}.

\begin{remark}
    Our formula of concentrated Gaussian distribution on Lie groups perturbs the Lie group mean on the right side (\ref{eq:perturb_sample_gen}). Another formula is to perturb the mean on the left side. The Lie group derivation of the kernel ergodic metric holds for both formulas. As discussed in~\cite{mangelson_characterizing_2020}, the primary difference between the two formulas is the frame in which the perturbation is applied.
\end{remark}

\begin{remark} \label{remark:compact_gaussian}
    Although commonly used in robotics, the concentrated Gaussian distribution formula (\ref{eq:concentrated_gaussian}) has one limitation on compact Lie groups, such as SO(3), compared to the standard Euclidean space Gaussian formula. The eigenvalues of the covariance matrix need to be sufficiently small such that the probability density function diminishes to zero on a small sphere centered around the mean (hence the name ``concentrated''), in which case the global topological properties of the group (e.g., compactness) are not relevant~\cite{chirikjian_gaussian_2014}. 
\end{remark}

\begin{figure*}[htbp]
    \centering
    \includegraphics[width=0.9\textwidth]{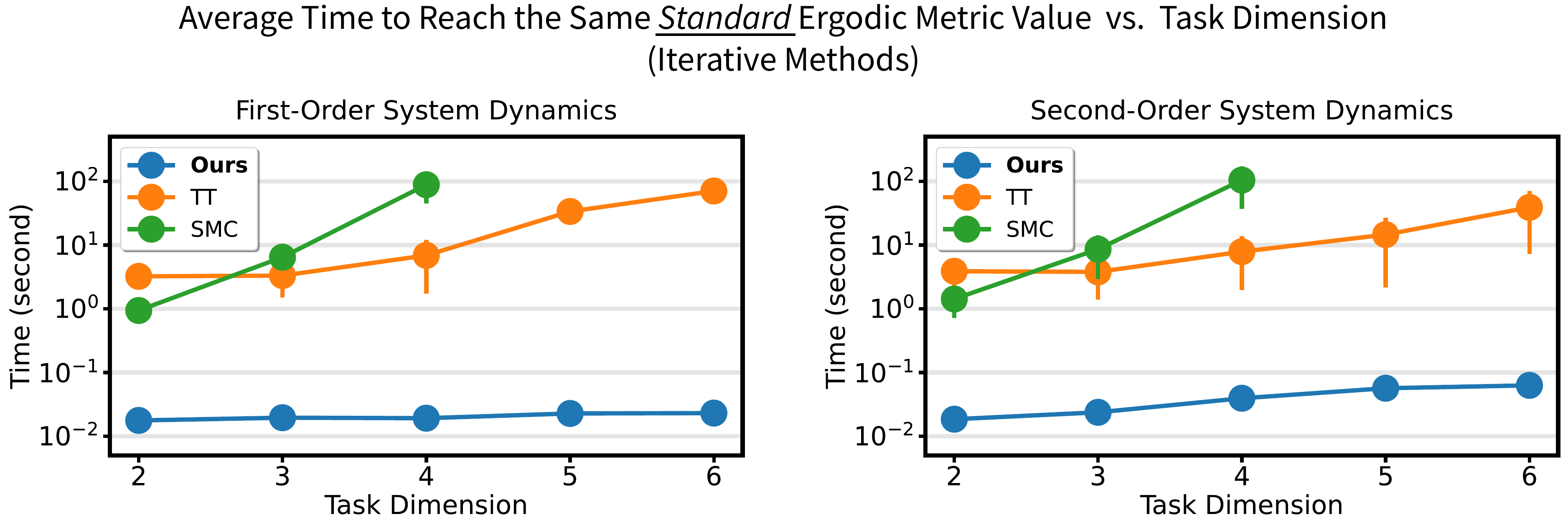}
    \caption{Comparison of the scalability of different methods. The proposed method exhibits a linear complexity across 2 to 6-dimensional spaces, while the Fourier metric-based methods, even if accelerated by tensor-train, exhibit a close-to-exponential complexity.}
    \label{fig:scalability_plot}
    \vspace{-1em}
\end{figure*}

\subsection{Dynamics on Lie groups}

Given a trajectory evolving on the Lie group $g(t):[0,T]\mapsto\mathcal{G}$, we define its dynamics through a control vector field~\cite{saccon_optimal_2013}:
\begin{align}
    \dot{g}(t) = f(g(t), u(t), t) \in \mathfrak{g}.
\end{align} In order to linearize the dynamics as required by the trajectory optimization algorithm in (\ref{eq:linearized_dynamics}), we follow the derivation in~\cite{saccon_optimal_2013} to model the dynamics through the \emph{left trivialization} of the control vector field:
\begin{align}
    \lambda(g(t), u(t), t) = g(t)^{-1} f(g(t), u(t), t) \in \mathfrak{g},
\end{align} which allows us to write down the dynamics instead as:
\begin{align}
    \dot{g}(t) = g(t) \lambda(g(t), u(t), t).
\end{align} {To propagate the Lie group state between discrete time steps $t$ and $t{+}dt$, we have~\cite{kobilarov_discrete_2011}:
\begin{align}
    g(t{+}dt) = g(t) \exp\Big(dt {\cdot} f(g(t), u(t), t)\Big). \label{eq:lie_time_integral}
\end{align} The resulting trajectory is a piece-wise linearized approximation of the continuous Lie group dynamic system~\cite{kobilarov_discrete_2011}.

Denote a perturbation on the control $u(t)$ as $v(t)$ and the resulting tangent space perturbation on the Lie group state as $z(t)\in\mathfrak{g}$, $z(t)$ exhibits a similar linear dynamics as its Euclidean space counterpart, as shown in Lemma~\ref{lemma:lqr_subproblem}:
\begin{align}
    \dot{z}(t) & = A(t) z(t) + B(t) v(t) \\
    A(t) & = D_1 \lambda(g(t), u(t), t) - [Ad_{\lambda(g(t), u(t), t)}] \\
    B(t) & = D_2 \lambda(g(t), u(t), t).
\end{align} Since the linearization of the dynamics is in the tangent space, this allows us to directly apply Algorithm~\ref{algo:traj_opt} to optimize the control, where the descent direction is computed by solving a continuous-time Riccati equation using standard approaches in the Euclidean space. We refer readers to~\cite{kobilarov_discrete_2011,saccon_optimal_2013} for more details on the dynamics and optimal control of Lie groups.


\section{Evaluation} \label{sec:evaulation}

\subsection{Overview}

We first evaluate the numerical efficiency of our algorithm compared to existing ergodic search algorithms through a simulated benchmark. We then demonstrate our algorithm, specifically the Lie group SE(3) variant, with a peg-in-hole insertion task.

\subsection{Numerical Benchmark}

\begin{figure*}[htbp]
    \includegraphics[width=0.99\textwidth]{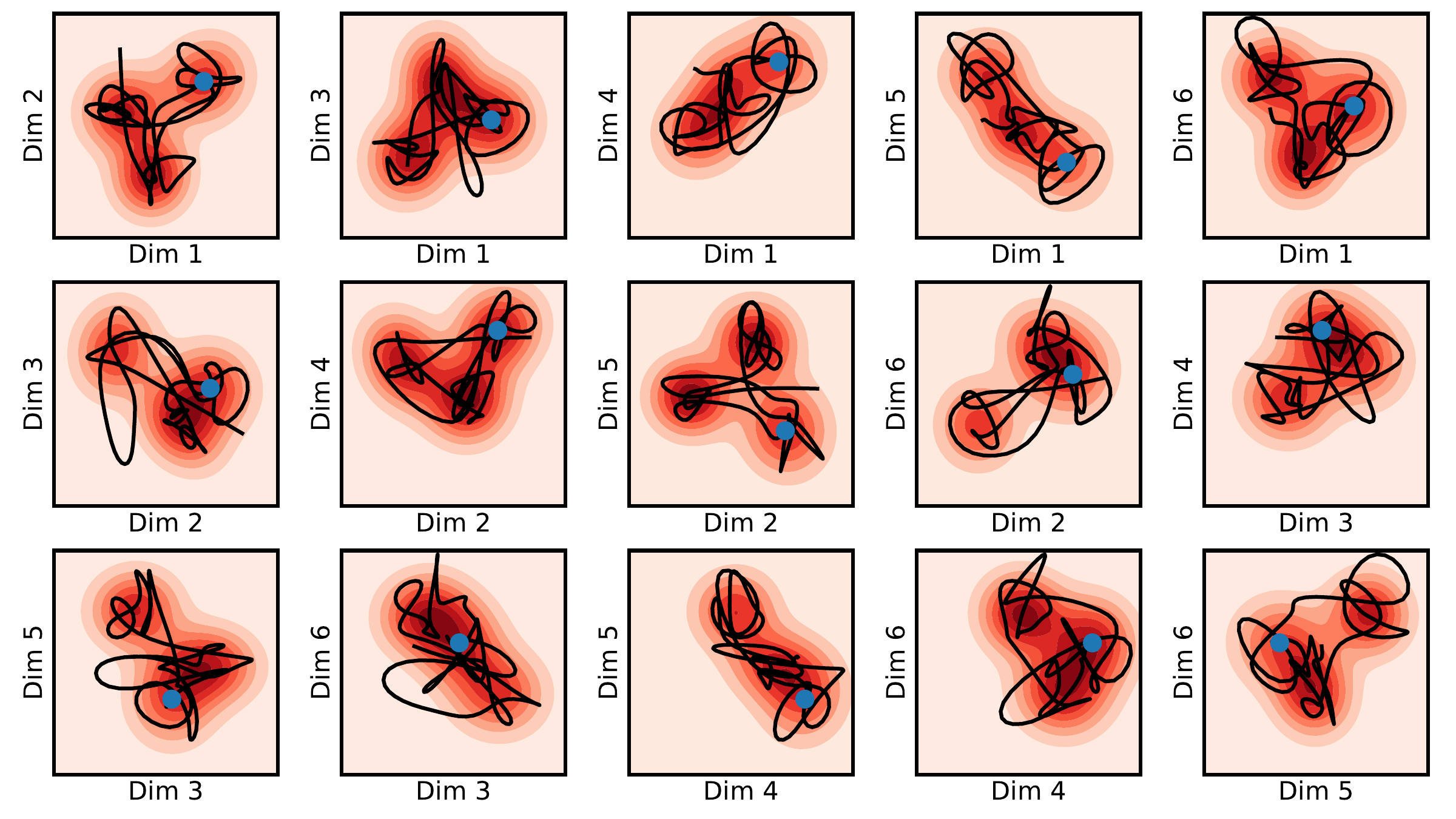}
    \caption{Example ergodic trajectory generated by the proposed algorithm in 6-dimensional space, with second-order system dynamics. The trajectory overlaps the marginalized target distribution.}
    \label{fig:sim_traj_6d}
\end{figure*}

\begin{table*}[htbp]
    \centering
    \captionsetup{justification=centering}
    \caption{Average time required for \textbf{\textcolor{blue}{iterative}} methods to reach the same ergodic metric value\\(\textbf{first-order} system dynamics).}
    \setlength{\tabcolsep}{20.0pt}
    \begin{tabular}{cccccc}
        \toprule
        \midrule
        \multirow{3}{*}{\begin{tabular}[c]{@{}c@{}}Task\\Dim.\end{tabular}} & \multirow{3}{*}{\begin{tabular}[c]{@{}c@{}}System\\Dim.\end{tabular}} & \multirow{3}{*}{\begin{tabular}[c]{@{}c@{}}Average Target\\Ergodic Metric Value\end{tabular}} & \multicolumn{3}{c}{Average Elapsed Time (second)}  \\
        \cmidrule(lr){4-6} 
        & & & \multirow{2}{*}{\begin{tabular}[c]{@{}c@{}}\textsf{Ours}\\(Iterative)\end{tabular}} & \multirow{2}{*}{\begin{tabular}[c]{@{}c@{}}\textsf{TT}\\(\textcolor{blue}{Iterative})\end{tabular}} & \multirow{2}{*}{\begin{tabular}[c]{@{}c@{}}\textsf{SMC}\\(\textcolor{blue}{Iterative})\end{tabular}} \\
         & & & & & \\
        \midrule
        \midrule
        2 & 2 & $1.77{\times}10^{-3}$ & $\bf 1.77{\times}10^{-2}$ & $3.22{\times}10^{0}$ & $9.39{\times}10^{-1}$ \\
        \midrule
        3 & 3 & $2.24{\times}10^{-3}$ & $\bf 1.95{\times}10^{-2}$ & $3.32{\times}10^{0}$ & $6.45{\times}10^{0}$ \\
        \midrule
        4 & 4 & $1.86{\times}10^{-3}$ & $\bf 1.92{\times}10^{-2}$ & $6.88{\times}10^{0}$ & $8.84{\times}10^{1}$ \\
        \midrule
        5 & 5 & $1.20{\times}10^{-3}$ & $\bf 2.27{\times}10^{-2}$ & $3.36{\times}10^{1}$ & N/A \\
        \midrule
        6 & 6 & $8.47{\times}10^{-4}$ & $\bf 2.31{\times}10^{-2}$ & $7.04{\times}10^{1}$ & N/A \\
        \midrule
        \bottomrule
    \end{tabular}
    \label{table:first_order_benchmark_iterative}
    
    \vspace{+1em}
    
    \centering
    \captionsetup{justification=centering}
    \caption{Average time required for \textbf{\textcolor{blue}{iterative}} methods to reach the same ergodic metric value\\(\textbf{second-order} system dynamics).}
    \setlength{\tabcolsep}{20.0pt}
    \begin{tabular}{cccccc}
        \toprule
        \midrule
        \multirow{3}{*}{\begin{tabular}[c]{@{}c@{}}Task\\Dim.\end{tabular}} & \multirow{3}{*}{\begin{tabular}[c]{@{}c@{}}System\\Dim.\end{tabular}} & \multirow{3}{*}{\begin{tabular}[c]{@{}c@{}}Average Target\\Ergodic Metric Value\end{tabular}} & \multicolumn{3}{c}{Average Elapsed Time (second)}  \\
        \cmidrule(lr){4-6} 
        & & & \multirow{2}{*}{\begin{tabular}[c]{@{}c@{}}\textsf{Ours}\\(Iterative)\end{tabular}} & \multirow{2}{*}{\begin{tabular}[c]{@{}c@{}}\textsf{TT}\\(\textcolor{blue}{Iterative})\end{tabular}} & \multirow{2}{*}{\begin{tabular}[c]{@{}c@{}}\textsf{SMC}\\(\textcolor{blue}{Iterative})\end{tabular}} \\
        & & & & & \\
        \midrule
        \midrule
        2 & 4 & $3.06{\times}10^{-3}$ & $\bf 1.85{\times}10^{-2}$ & $3.88{\times}10^{0}$ & $1.42{\times}10^{0}$ \\
        \midrule
        3 & 6 & $3.35{\times}10^{-3}$ & $\bf 2.37{\times}10^{-2}$ & $3.79{\times}10^{0}$ & $8.61{\times}10^{0}$ \\
        \midrule
        4 & 8 & $2.12{\times}10^{-3}$ & $\bf 3.94{\times}10^{-2}$ & $7.86{\times}10^{0}$ & $1.04{\times}10^{2}$ \\
        \midrule
        5 & 10 & $2.19{\times}10^{-3}$ & $\bf 5.66{\times}10^{-2}$ & $1.46{\times}10^{1}$ & N/A \\
        \midrule
        6 & 12 & $1.13{\times}10^{-3}$ & $\bf 6.28{\times}10^{-2}$ & $3.90{\times}10^{1}$ & N/A \\
        \midrule
        \bottomrule
    \end{tabular}
    \label{table:second_order_benchmark_iterative}
\end{table*}

\begin{table*}[htbp]
    \centering
    \captionsetup{justification=centering}
    \caption{Benchmark results of the proposed method and \textbf{\textcolor{Green}{greedy}} baselines\\(\textbf{first-order} system dynamics).}
    \setlength{\tabcolsep}{18.0pt}
    \begin{tabular}{cccccc}
        \toprule
        \midrule
        \multirow{3}{*}{\begin{tabular}[c]{@{}c@{}}Task\\Dim.\end{tabular}} & \multirow{3}{*}{\begin{tabular}[c]{@{}c@{}}System\\Dim.\end{tabular}} & \multirow{3}{*}{\begin{tabular}[c]{@{}c@{}}Metrics (Average)\end{tabular}} & \multicolumn{3}{c}{Results (Average)}  \\
        \cmidrule(lr){4-6} 
        & & & \multirow{2}{*}{\begin{tabular}[c]{@{}c@{}}\textsf{Ours}\\(Iterative)\end{tabular}} & \multirow{2}{*}{\begin{tabular}[c]{@{}c@{}}\textsf{TT}\\\textit{(\textcolor{Green}{Greedy})}\end{tabular}} & \multirow{2}{*}{\begin{tabular}[c]{@{}c@{}}\textsf{SMC}\\\textit{(\textcolor{Green}{Greedy})}\end{tabular}} \\
         & & & & & \\
        \midrule
        \midrule
        \multirow{2}{*}{2} & \multirow{2}{*}{2} & Ergodic Metric & $1.77{\times}10^{-3}$ & $\bf 1.70{\times}10^{-3}$ & $2.25{\times}10^{-3}$ \\
        \cmidrule(lr){3-6} 
        & & Elapsed Time (second) & $\bf 1.77{\times}10^{-2}$ & $1.39{\times}10^{-1}$ & $1.93{\times}10^{-2}$ \\
        \midrule
        \midrule
        \multirow{2}{*}{3} & \multirow{2}{*}{3} & Ergodic Metric & $\bf 2.24{\times}10^{-3}$ & $4.24{\times}10^{-3}$ & $6.55{\times}10^{-3}$ \\
        \cmidrule(lr){3-6}
        & & Elapsed Time (second) & $\bf 1.95{\times}10^{-2}$ & $4.50{\times}10^{-1}$ & $1.01{\times}10^{-1}$ \\
        \midrule
        \midrule
        \multirow{2}{*}{4} & \multirow{2}{*}{4} & Ergodic Metric & $\bf 1.86{\times}10^{-3}$ & $3.69{\times}10^{-3}$ & $3.26{\times}10^{-3}$ \\
        \cmidrule(lr){3-6}
        & & Elapsed Time (second) & $\bf 1.92{\times}10^{-2}$ & $1.18{\times}10^{0}$ & $4.76{\times}10^{0}$ \\
        \midrule
        \midrule
        \multirow{2}{*}{5} & \multirow{2}{*}{5} & Ergodic Metric & $\bf 1.20{\times}10^{-3}$ & $4.32{\times}10^{-3}$ & N/A \\
        \cmidrule(lr){3-6}
        & & Elapsed Time (second) & $\bf 2.27{\times}10^{-2}$ & $4.03{\times}10^{0}$ & N/A \\
        \midrule
        \midrule
        \multirow{2}{*}{6} & \multirow{2}{*}{6} & Ergodic Metric & $\bf 8.47{\times}10^{-4}$ & $2.80{\times}10^{-3}$ & N/A \\
        \cmidrule(lr){3-6}
        & & Elapsed Time (second) & $\bf 2.31{\times}10^{-2}$ & $9.53{\times}10^{0}$ & N/A \\
        \midrule
        \bottomrule
    \end{tabular}
    \label{table:first_order_benchmark_greedy}

    \vspace{+1em}

    \centering
    \captionsetup{justification=centering}
    \caption{Benchmark results of the proposed method and \textbf{\textcolor{Green}{greedy}} baselines\\(\textbf{second-order} system dynamics).}
    \setlength{\tabcolsep}{18.0pt}
    \begin{tabular}{cccccc}
        \toprule
        \midrule
        \multirow{3}{*}{\begin{tabular}[c]{@{}c@{}}Task\\Dim.\end{tabular}} & \multirow{3}{*}{\begin{tabular}[c]{@{}c@{}}System\\Dim.\end{tabular}} & \multirow{3}{*}{\begin{tabular}[c]{@{}c@{}}Metrics (Average)\end{tabular}} & \multicolumn{3}{c}{Results (Average)}  \\
        \cmidrule(lr){4-6} 
        & & & \multirow{2}{*}{\begin{tabular}[c]{@{}c@{}}\textsf{Ours}\\(Iterative)\end{tabular}} & \multirow{2}{*}{\begin{tabular}[c]{@{}c@{}}\textsf{TT}\\\textit{(\textcolor{Green}{Greedy})}\end{tabular}} & \multirow{2}{*}{\begin{tabular}[c]{@{}c@{}}\textsf{SMC}\\\textit{(\textcolor{Green}{Greedy})}\end{tabular}} \\
         & & & & & \\
        \midrule
        \midrule
        \multirow{2}{*}{2} & \multirow{2}{*}{4} & Ergodic Metric & $\bf 3.06{\times}10^{-3}$ & $1.42{\times}10^{-2}$ & $1.45{\times}10^{-2}$ \\
        \cmidrule(lr){3-6} 
        & & Elapsed Time (second) & $\bf 1.85{\times}10^{-2}$ & $1.12{\times}10^{-1}$ & $2.37{\times}10^{-2}$ \\
        \midrule
        \midrule
        \multirow{2}{*}{3} & \multirow{2}{*}{6} & Ergodic Metric & $\bf 1.35{\times}10^{-3}$ & $1.45{\times}10^{-2}$ & $1.52{\times}10^{-2}$ \\
        \cmidrule(lr){3-6}
        & & Elapsed Time (second) & $\bf 2.37{\times}10^{-2}$ & $3.07{\times}10^{-1}$ & $1.08{\times}10^{-1}$ \\
        \midrule
        \midrule
        \multirow{2}{*}{4} & \multirow{2}{*}{8} & Ergodic Metric & $\bf 2.12{\times}10^{-3}$ & $1.70{\times}10^{-2}$ & $1.78{\times}10^{-2}$ \\
        \cmidrule(lr){3-6}
        & & Elapsed Time (second) & $\bf 3.94{\times}10^{-2}$ & $1.10{\times}10^{0}$ & $5.28{\times}10^{0}$ \\
        \midrule
        \midrule
        \multirow{2}{*}{5} & \multirow{2}{*}{10} & Ergodic Metric & $\bf 2.19{\times}10^{-3}$ & $1.94{\times}10^{-2}$ & N/A \\
        \cmidrule(lr){3-6}
        & & Elapsed Time (second) & $\bf 5.66{\times}10^{-2}$ & $2.40{\times}10^{0}$ & N/A \\
        \midrule
        \midrule
        \multirow{2}{*}{6} & \multirow{2}{*}{12} & Ergodic Metric & $\bf 1.13{\times}10^{-3}$ & $1.97{\times}10^{-2}$ & N/A \\
        \cmidrule(lr){3-6}
        & & Elapsed Time (second) & $\bf 6.28{\times}10^{-2}$ & $6.03{\times}10^{0}$ & N/A \\
        \midrule
        \bottomrule
    \end{tabular}
    \label{table:second_order_benchmark_greedy}
\end{table*}

\noindent\textbf{[Rationale of baseline selection] }  We compare the proposed algorithm with methods that optimize the Fourier ergodic metric with four baseline methods:
\begin{itemize}
    \item \textsf{SMC}(Greedy): The ergodic search algorithm proposed in~\cite{mathew_metrics_2011} that optimizes the Fourier ergodic metric. It is essentially a greedy receding-horizon planning algorithm with the planning horizon being infinitesimally small. 
    \item \textsf{SMC}(iterative): An iterative trajectory optimization algorithm proposed in~\cite{miller_trajectory_2013} that optimizes the Fourier ergodic metric. It follows a similar derivation as Algorithm~\ref{algo:traj_opt}, as it iteratively solves an LQR problem. 
    \item \textsf{TT}(Greedy): An algorithm that shares the same formula of \textsf{SMC}(Greedy), but it accelerates the computation of the Fourier ergodic metric through tensor-train decomposition. Proposed in~\cite{shetty_ergodic_2022}, this algorithm is the state-of-the-art fast ergodic search algorithm with a greedy receding-horizon planning formula.
    \item \textsf{TT}(Iterative): We accelerate the computation of the iterative trajectory optimization algorithm for the Fourier ergodic metric---\textsf{SMC}(Iterative)---through the same tensor-training decomposition technique used in~\cite{shetty_ergodic_2022}. This method is the state-of-the-art trajectory optimization method for ergodic search.
\end{itemize} 
We choose \textsf{SMC}(Greedy) since it is one of the most commonly used algorithms in robotic applications. For the same reason, we choose \textsf{TT}(Greedy) as it further accelerates the computation of \textsf{SMC}(Greedy), thus serving as the state-of-the-art fast ergodic search baseline. We choose \textsf{SMC}(Iterative), as well as \textsf{TT}(Iterative), since the algorithms are conceptually similar to our proposed algorithm, given both methods use the same iterative optimization scheme as in Algorithm~\ref{algo:traj_opt}. Iterative methods generate better ergodic trajectories with the same number of time steps since they optimize over the whole trajectory, while the greedy methods only myopically optimize one time step at a time. However, for the same reason, iterative methods, in general, are less computationally efficient. We do not include~\cite{abraham_ergodic_2021} for comparison because it does not generalize to Lie groups. The computation of the Fourier ergodic metric in \textsf{SMC} methods is implemented in Python with vectorization. We use the implementation from \cite{shetty_ergodic_2022} for the tensor-training accelerated Fourier ergodic metric, which is implemented with the Tensor-Train Toobox~\cite{oseledets_tensor-train_2024} with key tensor train operations implemented in Fortran with multi-thread CPU parallelization. We implement our algorithm in C++ with OpenMP~\cite{dagum_openmp_1998} for multi-thread CPU parallelization. All methods are tested on a server with an AMD 5995WX CPU. No GPU acceleration is used in the experiments. The code of our implementation is available at \url{https://sites.google.com/view/kernel-ergodic/}.

\noindent\textbf{[Experiment design] } We test each of the four baseline methods and the proposed kernel ergodic search method across 2-dimensional to 6-dimensional spaces, which cover the majority of the state spaces used in robotics. Each search space is defined as a squared space, where each dimension has a boundary of $[0, 1]$ meters. For each number of dimensions, we randomize 100 test trials, with each trial consisting of a randomized three-mode Gaussian-mixture distribution (with full covariance) as the target distribution. The means of each Gaussian mixture distribution are sampled uniformly within the search space, and the covariance matrices are sampled uniformly from the space of positive definite matrices using the approach from~\cite{shetty_ergodic_2022}, with the diagonal entries varying from $0.01$ to $0.02$. In each trial, all the algorithms will explore the same target distribution with the same initial position; all the iterative methods (including ours) will start with the same initial trajectory generated from the proposed bootstrap approach (see Section~\ref{subsec:bootstrap}) and will run for a same number of iterations. We test all the algorithms with both the first-order and second-order point-mass dynamical systems. All methods have a time horizon of 200 time steps with a time step interval being $0.1$ second.

\noindent\textbf{[Metric selection] } The benchmark takes two measurements: (1) the \emph{Fourier ergodic metric} of the generated trajectory from each algorithm and (2) the computation time efficiency of each algorithm. We choose the Fourier ergodic metric as it is ubiquitous for all existing ergodic search methods and the optimization objective for all four baselines. Our proposed method optimizes the kernel ergodic metric. Still, we have shown that it is asymptotic and consistent with the Fourier ergodic metric, making the Fourier ergodic metric a suitable metric to use in evaluating our method as well. For the greedy baselines, we measure the elapsed time of the single-shot trajectory generation process and the Fourier ergodic metric of the final trajectory. For iterative baselines and our algorithm, we compute the Fourier ergodic metric of our proposed method at convergence and measure the time each method takes to reach the same level of ergodicity. We measure the computation time efficiency for the iterative methods this way because the primary constraint for all iterative methods is not the quality of the ergodic trajectory, as all iterative methods will eventually converge to (at least) a local optimum of the ergodic metric, but instead to generate trajectory with sufficient ergodicity within limited time. 

\noindent\textbf{[Results] } Table~\ref{table:first_order_benchmark_iterative} and Table~\ref{table:second_order_benchmark_iterative} show the averaged time required for each iterative method (including ours) to reach the same level of Fourier ergodic metric value from 2D to 6D space and across first-order and second-order system dynamics. We can see the proposed method is at least two orders of magnitude faster than the baselines, particularly when the search space dimension is higher than three and with second-order system dynamics. We evaluate the \textsf{SMC}(iterative) baseline only up to 4-dimensional space as the memory consumption of computing the Fourier ergodic metric beyond 4D space exceeds the computer's limit, leading to prohibitively long computation time (we record a memory consumption larger than 120 GB and the elapsed time longer than 6 minutes for a single iteration in a 5D space; the excessive computation resource consumption of the Fourier ergodic metric in high-dimensional spaces is discussed in~\cite{shetty_ergodic_2022}). Figure~\ref{fig:scalability_plot} further shows the better scalability of the proposed method, as it exhibits a linear time complexity in search space dimension while the \textsf{SMC}(iterative) method exhibits an exponential time complexity and the \textsf{TT}(iterative) method exhibits a super-linear time complexity and much slower speed with the same computation resources. Table~\ref{table:first_order_benchmark_greedy} and Table~\ref{table:second_order_benchmark_greedy} show the comparison between the proposed and non-iterative greedy baseline methods. Despite the improvement in computation efficiency of the non-iterative baselines, the proposed method is still at least two orders of magnitude faster and generates trajectories with better ergodicity. Lastly, Figure~\ref{fig:sim_traj_6d} shows an example ergodic trajectory generated by our method in a 6-dimensional space with second-order system dynamics.

\subsection{Ergodic Coverage for Peg-in-Hole Insertion in SE(3)} 

\noindent\textbf{[Motivation] } Given the complexity of robotic manipulation tasks, using human demonstration for robots to acquire manipulation skills is becoming increasingly popular~\cite{ravichandar_recent_2020}, in particular for robotic insertion tasks, which are critical for applications including autonomous assembly~\cite{wu_prim-lafd_2023} and household robots~\cite{zhang_vision-based_2023}. Most approaches for acquiring insertion skills from demonstrations are learning-based, where the goal is to learn a control policy~\cite{wen_you_2022} or task objective from the demonstrations~\cite{englert_learning_2018}. One common strategy to learn insertion skills from demonstration is to learn motion primitives, such as dynamic movement primitives (DMPs), from the demonstrations as control policies, which could dramatically reduce the search space for learning~\cite{saveriano_dynamic_2023}. Furthermore, to address the potential mismatch between the demonstration and the task (e.g., the location of insertion during task execution may differ from the demonstration), the learned policies are often explicitly enhanced with local exploration policies, for example, through hand-crafted exploratory motion primitive~\cite{wu_prim-lafd_2023}, programmed compliance control with torque-force feedback~\cite{jha_imitation_2022} and residual correction policy~\cite{davchev_residual_2022}. Another common strategy is to use human demonstrations to bootstrap reinforcement learning (RL) training in simulation~\cite{luo_robust_2021,ahn_robotic_2023,guo_reinforcement_2023}, where the demonstrations could address the sparsity of the reward function, thus accelerate the convergence of the policy. Instead of using learning-from-demonstration methods, our motivation is to provide an alternative learning-free framework to obtain manipulation skills from human demonstrations. We formulate the peg-in-hole insertion task as a coverage problem, where the robot must find the successful insertion configuration using the human demonstration as the prior distribution. We show that combining this search-based problem formulation with ergodic coverage leads to reliable insertion performance while avoiding the typical limitations of learning-from-demonstration methods, such as the limited demonstration data, limited sensor measurement, and the need for additional offline training. Nevertheless, each new task attempt could be incorporated into a learning workflow.

\begin{figure}[t]
    \centering
    \begin{subfigure}[b]{0.24\textwidth}
        \includegraphics[width=\textwidth]{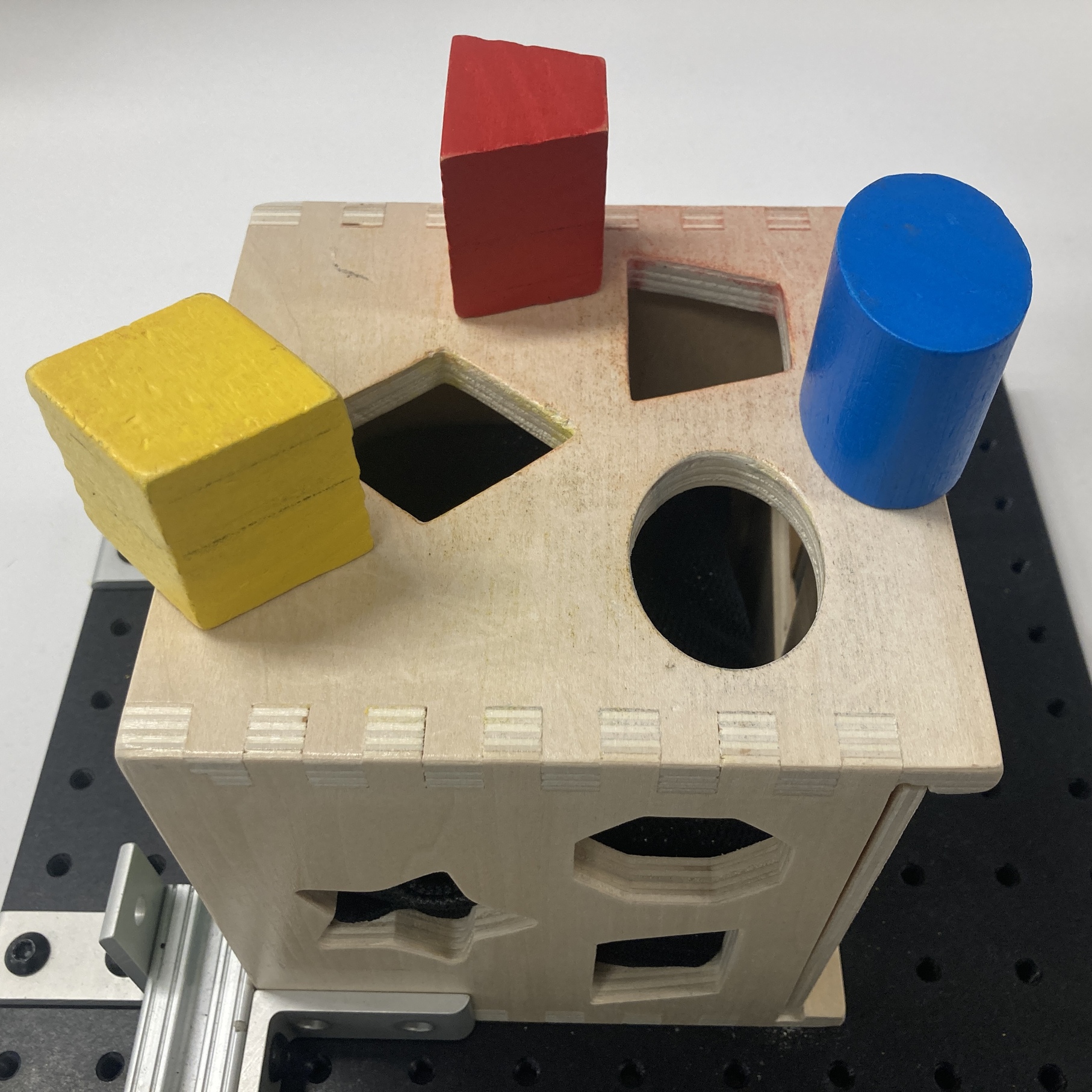}
    \end{subfigure}
    \hfill
    \begin{subfigure}[b]{0.24\textwidth}
        \includegraphics[width=\textwidth]{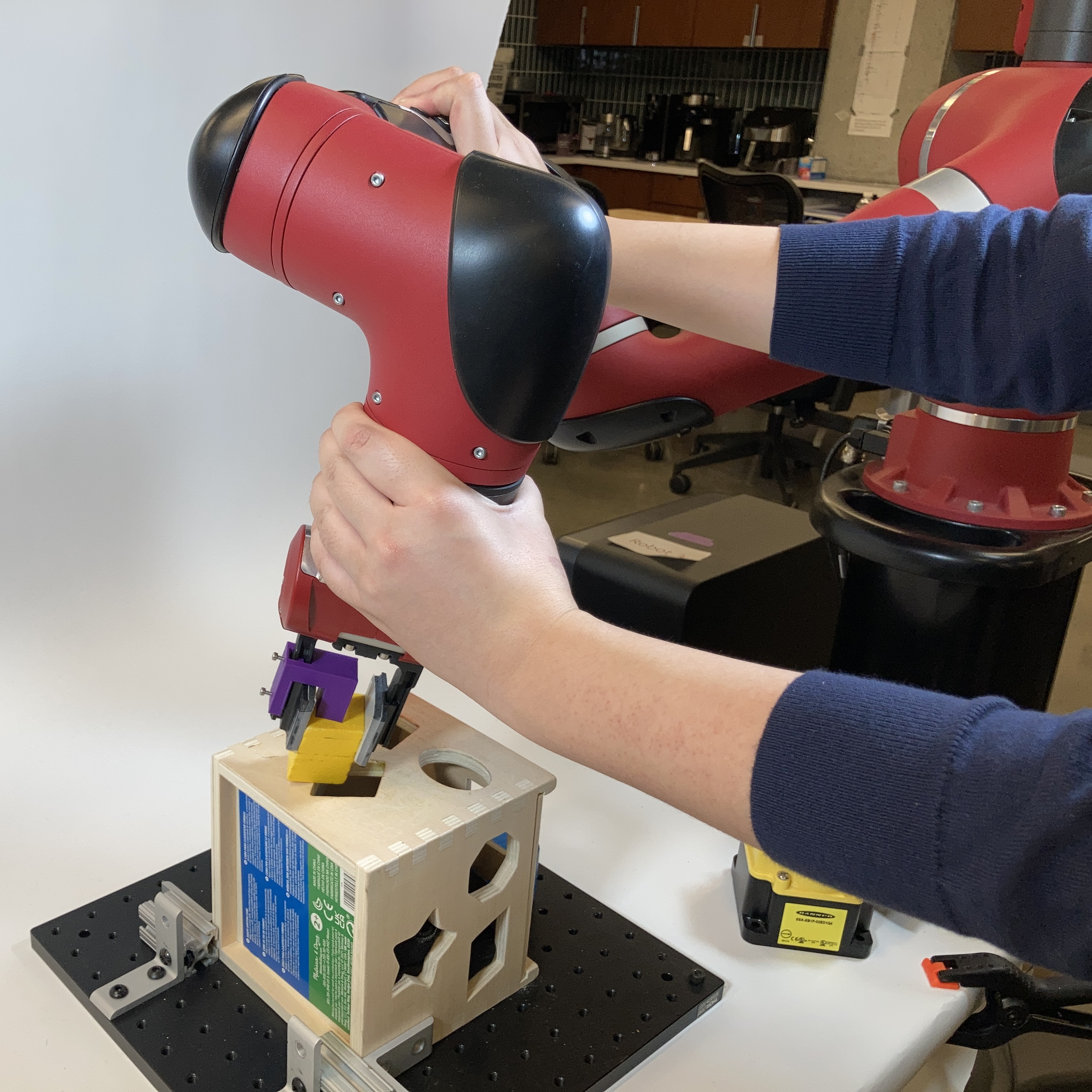}
    \end{subfigure}
    \caption{Setup of the hardware experiment.}
    \label{fig:sawyer_setup}
  
    \vspace{+1em}
  
    \centering
    \includegraphics[width=0.49\textwidth]{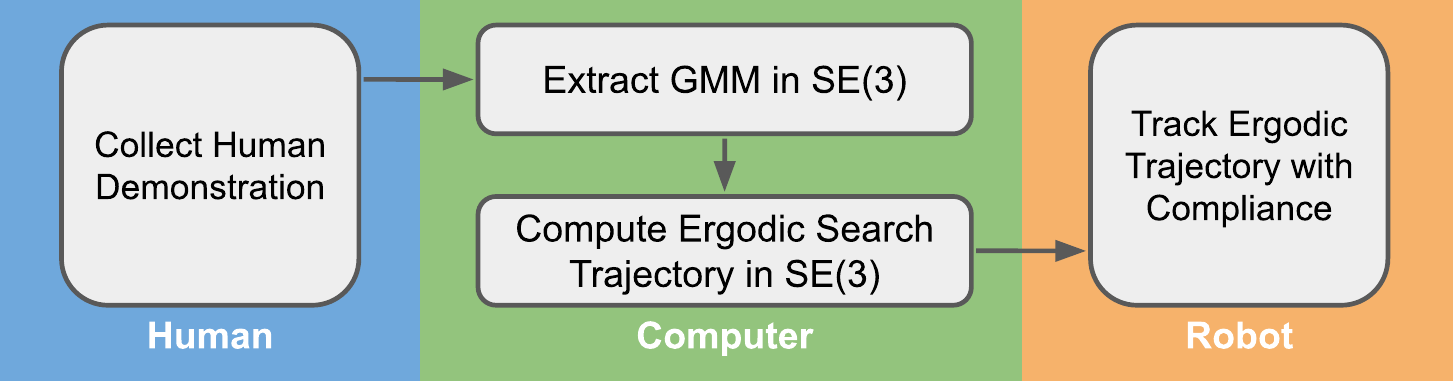}
    \caption{System diagram of acquiring insertion skill from human demonstration.}
    \label{fig:sawyer_system_diagram}
    \vspace{-1em}
\end{figure}

\noindent\textbf{[Task design] } In this task, the robot needs to find successful insertion configurations for cubes with three different geometries from a common shape sorting toy (see Figure~\ref{fig:sawyer_setup}). For each object of interest, a 30-second-long kinesthetic teaching demonstration is conducted, with a human moving the end-effector around the hole of the corresponding shape. The end-effector poses are recorded at 10 Hz in SE(3), providing a 300-time step trajectory recording as the only data available for the robot to acquire the insertion skill. During the task execution, the robot needs to generate insertion motions from a randomly initialized position within the same number of time steps as the demonstration (300 time steps). Furthermore, to demonstrate the method’s robustness to the quality of the demonstration, the demonstrations in this test do not contain successful insertions but only configurations around the corresponding hole, such as what someone attempting the task might do even if they were unsuccessful. Such insufficient demonstrations make it necessary for the robot to adapt beyond repeating the exact human demonstration provided. Some approaches attempt this adaptation through learning, whereas here adaptation is formulated as state space coverage ``near'' the demonstrated distribution of states.

\begin{figure*}[htbp]
    \centering
    \includegraphics[width=0.99\textwidth]{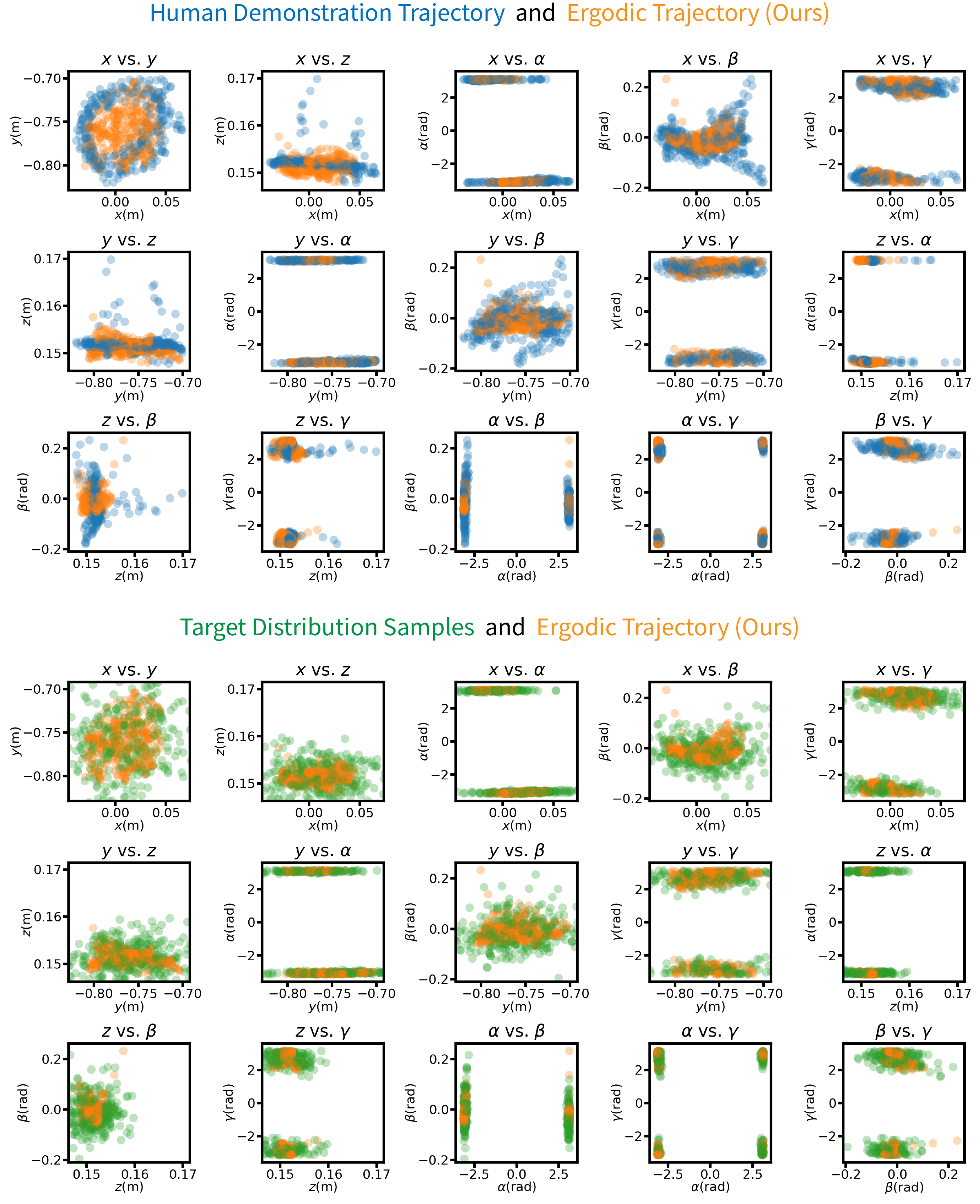}
    \caption{Trajectories from one of the hardware test trials. All trajectories are represented in the space of SE(3) and converted to the coordinate system of 3D Euclidean space $\{x, y, z\}$ for position and Euler angles $\{\alpha, \beta, \gamma\}$ for orientation. The orientations included in human demonstration lie at the boundary of the principle interval $-\pi$ and $\pi$, thus exhibiting large discontinuity in the Euler angle coordinate. The proposed algorithm's capability to directly reason over the Lie group SE(3) inherently overcomes this issue and successfully generates continuous trajectories to cover the distribution of human demonstration.}
    \label{fig:sawyer_traj_overlap}
\end{figure*}

\noindent\textbf{[Implementation details] } We use a Sawyer robot arm for the experiment. Our approach is to generate an ergodic coverage trajectory using the human demonstration as the target distribution, assuming the successful insertion configuration resides within the distribution that governs the human demonstration. The target distribution is modeled as a Lie group Gaussian-mixture (GMM) distribution, which is computed using the expectation maximization (EM) algorithm from the human demonstration. Since the demonstration does not include successful insertion, the target GMM distribution has a height (z-axis value) higher than the box's surface. Thus, we decrease the z-axis value of the GMM means for $2cm$. After the ergodic search trajectory is generated with the given target GMM distribution, the robot tracks the trajectory with online position feedback with waypoint-based tracking. We enable the force compliance feature on the robot arm~\cite{noauthor_arm_nodate}, which ensures safety for both the robot and the object during execution. No other sensor feedback, such as visual or tactile sensing, is used. A system overview is shown in the diagram in Figure~\ref{fig:sawyer_system_diagram}. Note that the waypoint-based control moves the end-effector at a slower speed than the human demonstration; thus, even if the executed search trajectory has the same number of time steps as the demonstration, it would take the robot longer real-world time to execute the trajectory. An end-effector insertion configuration is considered successful when both of the following criteria are met: (1) the end-effector's height, measured through the z-axis value, is near the exact height of the box surface, and (2) the force feedback from the end-effector detects no force along the z-axis. Meeting the first criterion means the end-effector reaches the necessary height for a possible insertion. Meeting the second criterion means the cube goes freely through the hole; it rules out the false-positive scenario where the end-effector forces the part of the cube through the hole when the cube and hole are not aligned. Lastly, the covariance matrices for both the Gaussian mixture model and the kernel function across the tests have sufficiently small eigenvalues, such that the compactness of the SE(3) group does not affect the Gaussian distribution formula, as mentioned in Remark~\ref{remark:compact_gaussian}. The demonstration dataset is available at \url{https://sites.google.com/view/kernel-ergodic/}.

\noindent\textbf{[Results] } We compare our method with a baseline method that repeats the demonstration. We test three objects in total, the shapes being rhombus, trapezoid, and ellipse. A total of 20 trials are conducted for each object; in each trial, we generate a new demonstration shared by both our method and the baseline. Both methods also share the maximum amount of time steps allowed for the insertion, which is the same as the demonstration. We measure the success rate of each method and report the average time steps required to find the successful insertion. Table~\ref{table:insertion_success_rate} shows the success rate for the insertion task across three objects within a limited search time of 300 time steps. The proposed ergodic search method has a success rate higher than or equal to $80\%$ across all three objects, while the baseline method only has a success rate up to $10\%$ across the objects. The baseline method does not have a $0\%$ success rate because of the noise within the motion from the force compliance feature during trajectory tracking. Table~\ref{table:insertion_time_required} shows the average time steps required for successful insertion, where we can see the proposed method can find a successful insertion strategy in SE(3) with significantly less time than the demonstration. Figure~\ref{fig:sawyer_traj_overlap} further shows the end-effector trajectory from the human demonstration and the resulting ergodic search trajectory, as well as how the SE(3) reasoning capability of the proposed algorithm overcomes the Euler angle discontinuity in the human demonstration.

\begin{table}[t!]
    \centering
    \captionsetup{justification=centering}
    \caption{Success rate of hardware insertion test\\(limited search time).}
    \setlength{\tabcolsep}{7.0pt}
    \begin{tabular}{cccc}
        \toprule
        \multirow{2}{*}{\begin{tabular}[c]{@{}c@{}}\textbf{Strategy}\end{tabular}} & \multicolumn{3}{c}{\textbf{Success Rate} (20 trials per object)}  \\
        \cmidrule(lr){2-4} 
        & Rhombus & Trapezoid & Ellipse \\
        \midrule 
        \textbf{Ours} & $80\%$ (16/20) & $80\%$ (16/20) & $90\%$ (18/20) \\
        \midrule 
        Naive & $10\%$ (2/20) & $10\%$ (2/20) & $10\%$ (2/20) \\
        \bottomrule
    \end{tabular}
    \label{table:insertion_success_rate}

    \vspace{+1em}
    
    \centering
    \captionsetup{justification=centering}
    \caption{Average time steps for successful insertion (limited search time).}
    \setlength{\tabcolsep}{5.0pt}
    \begin{tabular}{cccc}
        \toprule
        \multirow{2}{*}{\begin{tabular}[c]{@{}c@{}}\textbf{Strategy}\end{tabular}} & \multicolumn{3}{c}{\textbf{Average Time Steps} (20 trials per object)}  \\
        \cmidrule(lr){2-4} 
        & Rhombus & Trapezoid & Ellipse \\
        \midrule 
        \textbf{Ours} & $106.81{\pm}78.60$ & $128.44{\pm}73.81$ & $103.33{\pm}61.25$ \\
        \midrule 
        Naive & $187.50{\pm}27.50$ & $58.50{\pm}57.50$ & $31.50{\pm}30.50$ \\
        \bottomrule
    \end{tabular}
    \label{table:insertion_time_required}
\end{table}

\begin{figure}[htbp]
    \centering
    \includegraphics[width=0.49\textwidth]{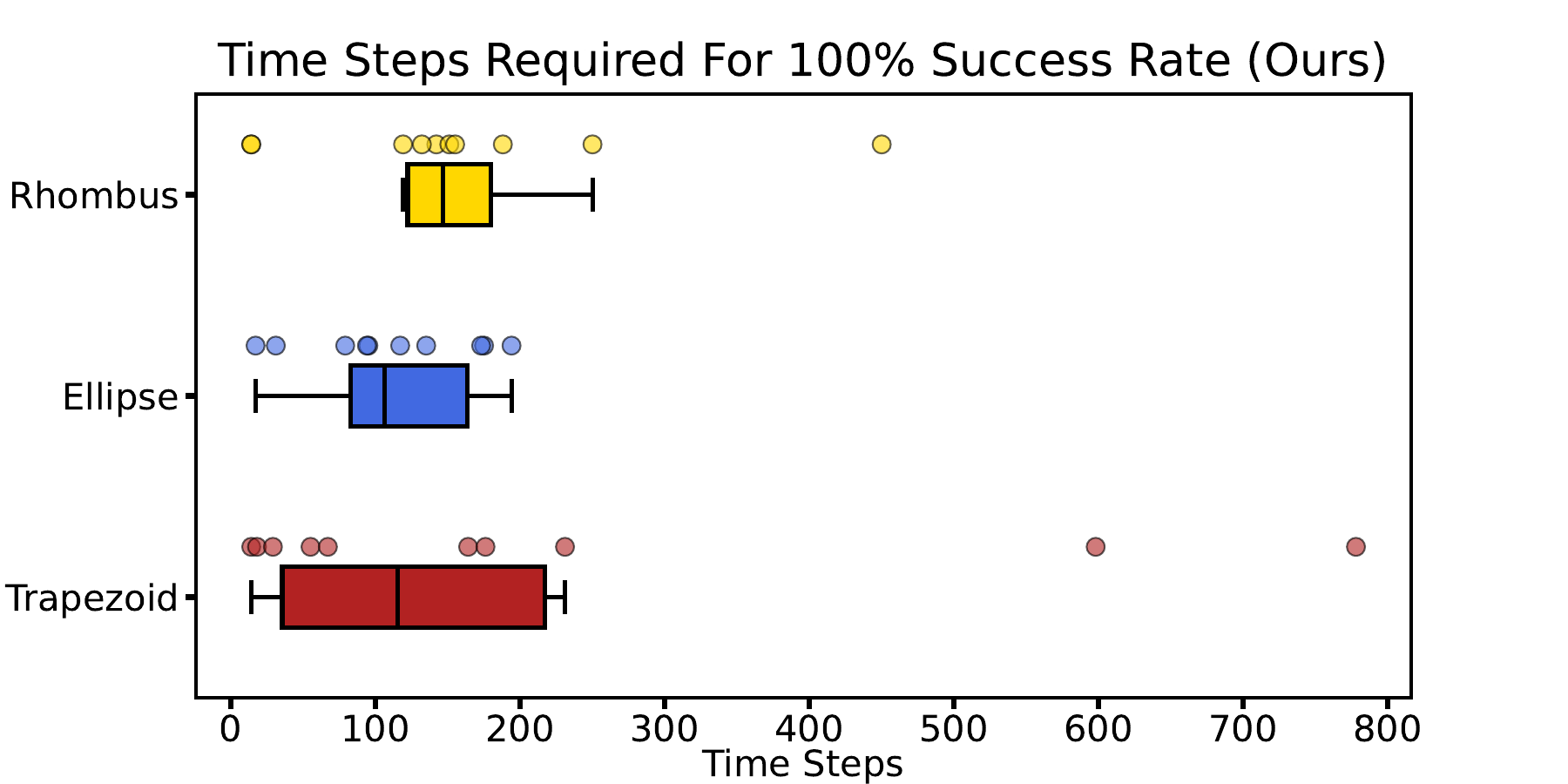}
    \caption{Time steps required for $100\%$ successful insertion with the proposed algorithm.}
    \label{fig:insertion_time_step_plot}
    \vspace{-1em}
\end{figure}

\noindent\textbf{[Asymptotic coverage] } We further demonstrate the asymptotic coverage property of ergodic search, with which the robot is guaranteed to find a successful insertion strategy given enough time, so long as the successful insertion configuration resides in the target distribution. Instead of limiting the search time to 300 time steps, we conduct 10 additional trials on each object (30 in total) with unlimited search time. Our method finds a successful insertion strategy in all 30 trials ($100\%$ success rate). We report the time steps needed for $100\%$ success rate in Figure~\ref{fig:insertion_time_step_plot}.


\section{Conclusion and Discussion} \label{sec:conclusion}

This work introduces a new ergodic search method with significantly improved computation efficiency and generalizability across Euclidean space and Lie groups. Our first contribution is introducing the kernel ergodic metric, which is asymptotically consistent with the Fourier ergodic metric but has better scalability to higher dimensional spaces. Our second contribution is an efficient optimal control method. Combining the kernel ergodic metric with the proposed optimal control method generates ergodic trajectories at least two orders of magnitude faster than the state-of-the-art method. 

We demonstrate the proposed ergodic search method through a peg-in-hole insertion task. We formulate the task as an ergodic coverage problem using a 30-second-long human demonstration as the target distribution. We demonstrate that the asymptotic coverage property of ergodic search leads to a $100\%$ success rate in this task, so long as the success insertion configuration resides within the target distribution. Our framework serves as an alternative approach to learning-from-demonstration methods.

Since our formula is based on kernel functions, it can be flexibly extended with other kernel functions for different tasks. One potential extension is to use the non-stationary attentive kernels~\cite{chen_ak_2022}, which are shown to be more effective in information-gathering tasks compared to the squared exponential kernel used in this work. The trajectory optimization-based formula means the proposed framework could be integrated into reinforcement learning (RL) with techniques such as guided policy search~\cite{levine_guided_2013}. The proposed framework can also be further improved. The evaluation of the proposed metric can be accelerated by exploiting the spatial sparsity of the kernel function evaluation within the trajectory.

{
\section*{Acknowledgments}
The authors would like to acknowledge Allison Pinosky, Davin Landry, and Sylvia Tan for their contributions to the hardware experiment. This material is supported by the Honda Research Institute Grant HRI-001479 and the National Science Foundation Grant CNS-2237576. Any opinions, findings, conclusions, or recommendations expressed in this material are those of the authors and do not necessarily reflect the views of the aforementioned institutions.
}

\bibliographystyle{IEEEtran}
\bibliography{references}

\begin{thebibliography}{10}
\providecommand{\url}[1]{#1}
\csname url@samestyle\endcsname
\providecommand{\newblock}{\relax}
\providecommand{\bibinfo}[2]{#2}
\providecommand{\BIBentrySTDinterwordspacing}{\spaceskip=0pt\relax}
\providecommand{\BIBentryALTinterwordstretchfactor}{4}
\providecommand{\BIBentryALTinterwordspacing}{\spaceskip=\fontdimen2\font plus
\BIBentryALTinterwordstretchfactor\fontdimen3\font minus
  \fontdimen4\font\relax}
\providecommand{\BIBforeignlanguage}[2]{{%
\expandafter\ifx\csname l@#1\endcsname\relax
\typeout{** WARNING: IEEEtran.bst: No hyphenation pattern has been}%
\typeout{** loaded for the language `#1'. Using the pattern for}%
\typeout{** the default language instead.}%
\else
\language=\csname l@#1\endcsname
\fi
#2}}
\providecommand{\BIBdecl}{\relax}
\BIBdecl

\bibitem{murphy_human-robot_2004}
R.~Murphy, ``Human-robot interaction in rescue robotics,'' \emph{IEEE
  Transactions on Systems, Man, and Cybernetics, Part C (Applications and
  Reviews)}, vol.~34, no.~2, pp. 138--153, May 2004, conference Name: IEEE
  Transactions on Systems, Man, and Cybernetics, Part C (Applications and
  Reviews).

\bibitem{shah_multidrone_2020}
K.~Shah, G.~Ballard, A.~Schmidt, and M.~Schwager, ``Multidrone aerial surveys
  of penguin colonies in {Antarctica},'' \emph{Science Robotics}, vol.~5,
  no.~47, p. eabc3000, Oct. 2020, publisher: American Association for the
  Advancement of Science.

\bibitem{abraham_decentralized_2018}
I.~Abraham and T.~D. Murphey, ``Decentralized {Ergodic} {Control}:
  {Distribution}-{Driven} {Sensing} and {Exploration} for {Multiagent}
  {Systems},'' \emph{IEEE Robotics and Automation Letters}, vol.~3, no.~4, pp.
  2987--2994, Oct. 2018, conference Name: IEEE Robotics and Automation Letters.

\bibitem{shetty_ergodic_2022}
S.~Shetty, J.~Silvério, and S.~Calinon, ``Ergodic {Exploration} {Using}
  {Tensor} {Train}: {Applications} in {Insertion} {Tasks},'' \emph{IEEE
  Transactions on Robotics}, vol.~38, no.~2, pp. 906--921, Apr. 2022.

\bibitem{abraham_active_2019}
I.~Abraham and T.~D. Murphey, ``Active {Learning} of {Dynamics} for
  {Data}-{Driven} {Control} {Using} {Koopman} {Operators},'' \emph{IEEE
  Transactions on Robotics}, vol.~35, no.~5, pp. 1071--1083, Oct. 2019.

\bibitem{prabhakar_mechanical_2022}
A.~Prabhakar and T.~Murphey, ``\BIBforeignlanguage{en}{Mechanical intelligence
  for learning embodied sensor-object relationships},''
  \emph{\BIBforeignlanguage{en}{Nature Communications}}, vol.~13, no.~1, p.
  4108, Jul. 2022, number: 1 Publisher: Nature Publishing Group.

\bibitem{mathew_metrics_2011}
G.~Mathew and I.~Mezić, ``\BIBforeignlanguage{en}{Metrics for ergodicity and
  design of ergodic dynamics for multi-agent systems},''
  \emph{\BIBforeignlanguage{en}{Physica D: Nonlinear Phenomena}}, vol. 240, no.
  4-5, pp. 432--442, Feb. 2011.

\bibitem{petersen_ergodic_1989}
K.~E. Petersen, \emph{\BIBforeignlanguage{en}{Ergodic {Theory}}}.\hskip 1em
  plus 0.5em minus 0.4em\relax Cambridge University Press, Nov. 1989,
  google-Books-ID: is\_LCgAAQBAJ.

\bibitem{mathew_multiscale_2005}
G.~Mathew, I.~Mezić, and L.~Petzold, ``A multiscale measure for mixing,''
  \emph{Physica D: Nonlinear Phenomena}, vol. 211, no.~1, pp. 23--46, Nov.
  2005.

\bibitem{chen_tuning_2020}
C.~Chen, T.~D. Murphey, and M.~A. MacIver, ``\BIBforeignlanguage{en}{Tuning
  movement for sensing in an uncertain world},''
  \emph{\BIBforeignlanguage{en}{eLife}}, vol.~9, p. e52371, Sep. 2020.

\bibitem{sun_scale-invariant_2022}
M.~Sun, A.~Pinosky, I.~Abraham, and T.~Murphey,
  ``\BIBforeignlanguage{en}{Scale-{Invariant} {Fast} {Functional}
  {Registration}},'' in \emph{\BIBforeignlanguage{en}{Robotics {Research}}},
  ser. Springer {Proceedings} in {Advanced} {Robotics}.\hskip 1em plus 0.5em
  minus 0.4em\relax Springer International Publishing, 2022.

\bibitem{hauser_projection_2002}
J.~Hauser, ``A {Projection} {Operator} {Approach} to the {Optimization} of
  {Trajectory} {Functionals},'' \emph{IFAC Proceedings Volumes}, vol.~35,
  no.~1, pp. 377--382, Jan. 2002.

\bibitem{miller_trajectory_2013-1}
L.~M. Miller and T.~D. Murphey, ``Trajectory optimization for continuous
  ergodic exploration,'' in \emph{2013 {American} {Control} {Conference}},
  2013, pp. 4196--4201.

\bibitem{miller_trajectory_2013}
------, ``Trajectory optimization for continuous ergodic exploration on the
  motion group {SE}(2),'' in \emph{52nd {IEEE} {Conference} on {Decision} and
  {Control}}, 2013, pp. 4517--4522.

\bibitem{abraham_ergodic_2021}
I.~Abraham, A.~Prabhakar, and T.~D. Murphey, ``An {Ergodic} {Measure} for
  {Active} {Learning} {From} {Equilibrium},'' \emph{IEEE Transactions on
  Automation Science and Engineering}, vol.~18, no.~3, pp. 917--931, Jul. 2021.

\bibitem{tang_note_2023}
Y.~Tang, ``A {Note} on {Monte} {Carlo} {Integration} in {High} {Dimensions},''
  \emph{The American Statistician}, vol.~0, no.~0, pp. 1--7, 2023, publisher:
  Taylor \& Francis \_eprint: https://doi.org/10.1080/00031305.2023.2267637.

\bibitem{walters_introduction_2000}
P.~Walters, \emph{\BIBforeignlanguage{en}{An {Introduction} to {Ergodic}
  {Theory}}}.\hskip 1em plus 0.5em minus 0.4em\relax Springer Science \&
  Business Media, Oct. 2000, google-Books-ID: eCoufOp7ONMC.

\bibitem{mavrommati_real-time_2018}
A.~Mavrommati, E.~Tzorakoleftherakis, I.~Abraham, and T.~D. Murphey,
  ``Real-{Time} {Area} {Coverage} and {Target} {Localization} {Using}
  {Receding}-{Horizon} {Ergodic} {Exploration},'' \emph{IEEE Transactions on
  Robotics}, vol.~34, no.~1, pp. 62--80, Feb. 2018.

\bibitem{kalinowska_ergodic_2021}
A.~Kalinowska, A.~Prabhakar, K.~Fitzsimons, and T.~Murphey, ``Ergodic
  imitation: {Learning} from what to do and what not to do,'' in \emph{2021
  {IEEE} {International} {Conference} on {Robotics} and {Automation} ({ICRA})},
  May 2021, pp. 3648--3654, iSSN: 2577-087X.

\bibitem{ehlers_imitating_2019}
D.~Ehlers, M.~Suomalainen, J.~Lundell, and V.~Kyrki, ``Imitating {Human}
  {Search} {Strategies} for {Assembly},'' in \emph{2019 {International}
  {Conference} on {Robotics} and {Automation} ({ICRA})}, May 2019, pp.
  7821--7827, iSSN: 2577-087X.

\bibitem{lerch_safety-critical_2023}
C.~Lerch, D.~Dong, and I.~Abraham, ``Safety-{Critical} {Ergodic} {Exploration}
  in {Cluttered} {Environments} via {Control} {Barrier} {Functions},'' in
  \emph{2023 {IEEE} {International} {Conference} on {Robotics} and {Automation}
  ({ICRA})}, May 2023, pp. 10\,205--10\,211.

\bibitem{ren_pareto-optimal_2023}
Z.~Ren, A.~K. Srinivasan, B.~Vundurthy, I.~Abraham, and H.~Choset, ``A
  {Pareto}-{Optimal} {Local} {Optimization} {Framework} for {Multiobjective}
  {Ergodic} {Search},'' \emph{IEEE Transactions on Robotics}, vol.~39, no.~5,
  pp. 3452--3463, Oct. 2023.

\bibitem{dong_time_2023}
D.~Dong, H.~Berger, and I.~Abraham, ``\BIBforeignlanguage{en}{Time {Optimal}
  {Ergodic} {Search}},'' in \emph{\BIBforeignlanguage{en}{Robotics: {Science}
  and {Systems} {XIX}}}.\hskip 1em plus 0.5em minus 0.4em\relax Robotics:
  Science and Systems Foundation, Jul. 2023.

\bibitem{mack_fundamental_2007}
C.~Mack, \emph{\BIBforeignlanguage{en}{Fundamental {Principles} of {Optical}
  {Lithography}: {The} {Science} of {Microfabrication}}}.\hskip 1em plus 0.5em
  minus 0.4em\relax John Wiley \& Sons, Dec. 2007.

\bibitem{lighthill_introduction_1958}
S.~M.~J. Lighthill, \emph{\BIBforeignlanguage{en}{An {Introduction} to
  {Fourier} {Analysis} and {Generalised} {Functions}}}.\hskip 1em plus 0.5em
  minus 0.4em\relax Cambridge University Press, 1958.

\bibitem{rudin_functional_1991}
W.~Rudin, \emph{\BIBforeignlanguage{en}{Functional {Analysis}}}.\hskip 1em plus
  0.5em minus 0.4em\relax McGraw-Hill, 1991.

\bibitem{strichartz_guide_1994}
R.~S. Strichartz, \emph{\BIBforeignlanguage{en}{A {Guide} {To} {Distribution}
  {Theory} {And} {Fourier} {Transforms}}}.\hskip 1em plus 0.5em minus
  0.4em\relax World Scientific Publishing Company, 1994.

\bibitem{cohen-tannoudji_quantum_1977}
C.~Cohen-Tannoudji, \emph{\BIBforeignlanguage{eng;fre}{Quantum
  mechanics}}.\hskip 1em plus 0.5em minus 0.4em\relax New York: Wiley, 1977.

\bibitem{rasmussen_gaussian_2006}
C.~E. Rasmussen and C.~K.~I. Williams, \emph{\BIBforeignlanguage{en}{Gaussian
  processes for machine learning}}, ser. Adaptive computation and machine
  learning.\hskip 1em plus 0.5em minus 0.4em\relax Cambridge, Mass: MIT Press,
  2006, oCLC: ocm61285753.

\bibitem{vaart_asymptotic_1998}
A.~W. v.~d. Vaart, \emph{Asymptotic {Statistics}}, ser. Cambridge {Series} in
  {Statistical} and {Probabilistic} {Mathematics}.\hskip 1em plus 0.5em minus
  0.4em\relax Cambridge: Cambridge University Press, 1998.

\bibitem{miller_ergodic_2016}
L.~M. Miller, Y.~Silverman, M.~A. MacIver, and T.~D. Murphey, ``Ergodic
  {Exploration} of {Distributed} {Information},'' \emph{IEEE Transactions on
  Robotics}, vol.~32, no.~1, pp. 36--52, Feb. 2016.

\bibitem{nocedal_numerical_2006}
J.~Nocedal and S.~J. Wright, \emph{\BIBforeignlanguage{en}{Numerical
  optimization}}, 2nd~ed., ser. Springer series in operations research.\hskip
  1em plus 0.5em minus 0.4em\relax New York: Springer, 2006, oCLC: ocm68629100.

\bibitem{rosenkrantz_analysis_1977}
D.~J. Rosenkrantz, R.~E. Stearns, and P.~M. Lewis, II, ``An {Analysis} of
  {Several} {Heuristics} for the {Traveling} {Salesman} {Problem},'' \emph{SIAM
  Journal on Computing}, vol.~6, no.~3, pp. 563--581, Sep. 1977, publisher:
  Society for Industrial and Applied Mathematics.

\bibitem{choset_principles_2005}
H.~Choset, K.~M. Lynch, S.~Hutchinson, G.~A. Kantor, and W.~Burgard,
  \emph{\BIBforeignlanguage{en}{Principles of {Robot} {Motion}: {Theory},
  {Algorithms}, and {Implementations}}}.\hskip 1em plus 0.5em minus 0.4em\relax
  MIT Press, May 2005.

\bibitem{chirikjian_stochastic_2009}
G.~S. Chirikjian, \emph{\BIBforeignlanguage{en}{Stochastic {Models},
  {Information} {Theory}, and {Lie} {Groups}, {Volume} 1: {Classical} {Results}
  and {Geometric} {Methods}}}.\hskip 1em plus 0.5em minus 0.4em\relax Springer
  Science \& Business Media, Sep. 2009.

\bibitem{lynch_modern_2017}
K.~M. Lynch and F.~C. Park, \emph{\BIBforeignlanguage{en}{Modern
  {Robotics}}}.\hskip 1em plus 0.5em minus 0.4em\relax Cambridge University
  Press, May 2017, google-Books-ID: 5NzFDgAAQBAJ.

\bibitem{sola_micro_2021}
J.~Solà, J.~Deray, and D.~Atchuthan, ``A micro {Lie} theory for state
  estimation in robotics,'' Dec. 2021, arXiv:1812.01537 [cs].

\bibitem{boumal_introduction_2023}
N.~Boumal, \emph{An {Introduction} to {Optimization} on {Smooth}
  {Manifolds}}.\hskip 1em plus 0.5em minus 0.4em\relax Cambridge: Cambridge
  University Press, 2023.

\bibitem{fan_online_2016}
T.~Fan and T.~Murphey, ``\BIBforeignlanguage{en}{Online {Feedback} {Control}
  for {Input}-{Saturated} {Robotic} {Systems} on {Lie} {Groups}},'' in
  \emph{\BIBforeignlanguage{en}{Robotics: {Science} and {Systems}
  {XII}}}.\hskip 1em plus 0.5em minus 0.4em\relax Robotics: Science and Systems
  Foundation, 2016.

\bibitem{yunfeng_wang_error_2006}
{Yunfeng Wang} and G.~Chirikjian, ``\BIBforeignlanguage{en}{Error propagation
  on the {Euclidean} group with applications to manipulator kinematics},''
  \emph{\BIBforeignlanguage{en}{IEEE Transactions on Robotics}}, vol.~22,
  no.~4, pp. 591--602, Aug. 2006.

\bibitem{wang_nonparametric_2008}
Y.~Wang and G.~S. Chirikjian, ``\BIBforeignlanguage{en}{Nonparametric
  {Second}-order {Theory} of {Error} {Propagation} on {Motion} {Groups}},''
  \emph{\BIBforeignlanguage{en}{The International Journal of Robotics
  Research}}, vol.~27, no. 11-12, pp. 1258--1273, Nov. 2008, publisher: SAGE
  Publications Ltd STM.

\bibitem{chirikjian_gaussian_2014}
G.~Chirikjian and M.~Kobilarov, ``\BIBforeignlanguage{en}{Gaussian
  approximation of non-linear measurement models on {Lie} groups},'' in
  \emph{\BIBforeignlanguage{en}{53rd {IEEE} {Conference} on {Decision} and
  {Control}}}.\hskip 1em plus 0.5em minus 0.4em\relax Los Angeles, CA, USA:
  IEEE, Dec. 2014, pp. 6401--6406.

\bibitem{chauchat_invariant_2018}
P.~Chauchat, A.~Barrau, and S.~Bonnabel, ``\BIBforeignlanguage{en}{Invariant
  smoothing on {Lie} {Groups}},'' in \emph{\BIBforeignlanguage{en}{2018
  {IEEE}/{RSJ} {International} {Conference} on {Intelligent} {Robots} and
  {Systems} ({IROS})}}.\hskip 1em plus 0.5em minus 0.4em\relax Madrid: IEEE,
  Oct. 2018, pp. 1703--1710.

\bibitem{mangelson_characterizing_2020}
J.~G. Mangelson, M.~Ghaffari, R.~Vasudevan, and R.~M. Eustice,
  ``\BIBforeignlanguage{en}{Characterizing the {Uncertainty} of {Jointly}
  {Distributed} {Poses} in the {Lie} {Algebra}},''
  \emph{\BIBforeignlanguage{en}{IEEE Transactions on Robotics}}, vol.~36,
  no.~5, pp. 1371--1388, Oct. 2020.

\bibitem{hartley_contact-aided_2020}
R.~Hartley, M.~Ghaffari, R.~M. Eustice, and J.~W. Grizzle,
  ``\BIBforeignlanguage{en}{Contact-aided invariant extended {Kalman} filtering
  for robot state estimation},'' \emph{\BIBforeignlanguage{en}{The
  International Journal of Robotics Research}}, vol.~39, no.~4, pp. 402--430,
  Mar. 2020, publisher: SAGE Publications Ltd STM.

\bibitem{saccon_optimal_2013}
A.~Saccon, J.~Hauser, and A.~P. Aguiar, ``Optimal {Control} on {Lie} {Groups}:
  {The} {Projection} {Operator} {Approach},'' \emph{IEEE Transactions on
  Automatic Control}, vol.~58, no.~9, pp. 2230--2245, Sep. 2013.

\bibitem{kobilarov_discrete_2011}
M.~B. Kobilarov and J.~E. Marsden, ``Discrete {Geometric} {Optimal} {Control}
  on {Lie} {Groups},'' \emph{IEEE Transactions on Robotics}, vol.~27, no.~4,
  pp. 641--655, Aug. 2011.

\bibitem{oseledets_tensor-train_2024}
I.~Oseledets, ``Tensor-{Train} {Toolbox} (ttpy),'' Jan. 2024, original-date:
  2012-08-21T18:22:27Z.

\bibitem{dagum_openmp_1998}
L.~Dagum and R.~Menon, ``{OpenMP}: an industry standard {API} for shared-memory
  programming,'' \emph{IEEE Computational Science and Engineering}, vol.~5,
  no.~1, pp. 46--55, Jan. 1998, conference Name: IEEE Computational Science and
  Engineering.

\bibitem{ravichandar_recent_2020}
H.~Ravichandar, A.~S. Polydoros, S.~Chernova, and A.~Billard, ``Recent
  {Advances} in {Robot} {Learning} from {Demonstration},'' \emph{Annual Review
  of Control, Robotics, and Autonomous Systems}, vol.~3, no.~1, pp. 297--330,
  2020, \_eprint: https://doi.org/10.1146/annurev-control-100819-063206.

\bibitem{wu_prim-lafd_2023}
Z.~Wu, W.~Lian, C.~Wang, M.~Li, S.~Schaal, and M.~Tomizuka, ``Prim-{LAfD}: {A}
  {Framework} to {Learn} and {Adapt} {Primitive}-{Based} {Skills} from
  {Demonstrations} for {Insertion} {Tasks},'' \emph{IFAC-PapersOnLine},
  vol.~56, no.~2, pp. 4120--4125, Jan. 2023.

\bibitem{zhang_vision-based_2023}
K.~Zhang, C.~Wang, H.~Chen, J.~Pan, M.~Y. Wang, and W.~Zhang,
  ``\BIBforeignlanguage{en}{Vision-based {Six}-{Dimensional} {Peg}-in-{Hole}
  for {Practical} {Connector} {Insertion}},'' in
  \emph{\BIBforeignlanguage{en}{2023 {IEEE} {International} {Conference} on
  {Robotics} and {Automation} ({ICRA})}}.\hskip 1em plus 0.5em minus
  0.4em\relax London, United Kingdom: IEEE, May 2023, pp. 1771--1777.

\bibitem{wen_you_2022}
B.~Wen, W.~Lian, K.~Bekris, and S.~Schaal, ``\BIBforeignlanguage{en}{You {Only}
  {Demonstrate} {Once}: {Category}-{Level} {Manipulation} from {Single}
  {Visual} {Demonstration}},'' in \emph{\BIBforeignlanguage{en}{Robotics:
  {Science} and {Systems} {XVIII}}}.\hskip 1em plus 0.5em minus 0.4em\relax
  Robotics: Science and Systems Foundation, Jun. 2022.

\bibitem{englert_learning_2018}
P.~Englert and M.~Toussaint, ``Learning manipulation skills from a single
  demonstration,'' \emph{The International Journal of Robotics Research},
  vol.~37, no.~1, pp. 137--154, Jan. 2018, publisher: SAGE Publications Ltd
  STM.

\bibitem{saveriano_dynamic_2023}
M.~Saveriano, F.~J. Abu-Dakka, A.~Kramberger, and L.~Peternel,
  ``\BIBforeignlanguage{en}{Dynamic movement primitives in robotics: {A}
  tutorial survey},'' \emph{\BIBforeignlanguage{en}{The International Journal
  of Robotics Research}}, vol.~42, no.~13, pp. 1133--1184, Nov. 2023,
  publisher: SAGE Publications Ltd STM.

\bibitem{jha_imitation_2022}
D.~K. Jha, D.~Romeres, W.~Yerazunis, and D.~Nikovski,
  ``\BIBforeignlanguage{en}{Imitation and {Supervised} {Learning} of
  {Compliance} for {Robotic} {Assembly}},'' in
  \emph{\BIBforeignlanguage{en}{2022 {European} {Control} {Conference}
  ({ECC})}}.\hskip 1em plus 0.5em minus 0.4em\relax London, United Kingdom:
  IEEE, Jul. 2022, pp. 1882--1889.

\bibitem{davchev_residual_2022}
T.~Davchev, K.~S. Luck, M.~Burke, F.~Meier, S.~Schaal, and S.~Ramamoorthy,
  ``Residual {Learning} {From} {Demonstration}: {Adapting} {DMPs} for
  {Contact}-{Rich} {Manipulation},'' \emph{IEEE Robotics and Automation
  Letters}, vol.~7, no.~2, pp. 4488--4495, Apr. 2022.

\bibitem{luo_robust_2021}
J.~Luo*, O.~Sushkov*, R.~Pevceviciute*, W.~Lian, C.~Su, M.~Vecerik, N.~Ye,
  S.~Schaal, and J.~Scholz, ``\BIBforeignlanguage{en}{Robust {Multi}-{Modal}
  {Policies} for {Industrial} {Assembly} via {Reinforcement} {Learning} and
  {Demonstrations}: {A} {Large}-{Scale} {Study}},'' in
  \emph{\BIBforeignlanguage{en}{Robotics: {Science} and {Systems}
  {XVII}}}.\hskip 1em plus 0.5em minus 0.4em\relax Robotics: Science and
  Systems Foundation, Jul. 2021.

\bibitem{ahn_robotic_2023}
K.-H. Ahn, M.~Na, and J.-B. Song, ``Robotic assembly strategy via reinforcement
  learning based on force and visual information,'' \emph{Robotics and
  Autonomous Systems}, vol. 164, p. 104399, Jun. 2023.

\bibitem{guo_reinforcement_2023}
Y.~Guo, J.~Gao, Z.~Wu, C.~Shi, and J.~Chen,
  ``\BIBforeignlanguage{en}{Reinforcement learning with {Demonstrations} from
  {Mismatched} {Task} under {Sparse} {Reward}},'' in
  \emph{\BIBforeignlanguage{en}{Proceedings of {The} 6th {Conference} on
  {Robot} {Learning}}}.\hskip 1em plus 0.5em minus 0.4em\relax PMLR, Mar. 2023,
  pp. 1146--1156, iSSN: 2640-3498.

\bibitem{noauthor_arm_nodate}
``Arm {Control} {System} — support.rethinkrobotics.com.''

\bibitem{chen_ak_2022}
W.~Chen, R.~Khardon, and L.~Liu, ``\BIBforeignlanguage{en}{{AK}: {Attentive}
  {Kernel} for {Information} {Gathering}},'' in
  \emph{\BIBforeignlanguage{en}{Robotics: {Science} and {Systems}
  {XVIII}}}.\hskip 1em plus 0.5em minus 0.4em\relax Robotics: Science and
  Systems Foundation, Jun. 2022.

\bibitem{levine_guided_2013}
S.~Levine and V.~Koltun, ``\BIBforeignlanguage{en}{Guided {Policy} {Search}},''
  in \emph{\BIBforeignlanguage{en}{Proceedings of the 30th {International}
  {Conference} on {Machine} {Learning}}}.\hskip 1em plus 0.5em minus
  0.4em\relax PMLR, May 2013, pp. 1--9, iSSN: 1938-7228.

\bibitem{gelfand_calculus_2000}
I.~M. Gelfand, S.~V. Fomin, and R.~A. Silverman,
  \emph{\BIBforeignlanguage{en}{Calculus of {Variations}}}.\hskip 1em plus
  0.5em minus 0.4em\relax Courier Corporation, Jan. 2000.

\end{thebibliography}


\appendix

\noindent\textbf{Proof for Lemma~\ref{lemma:minimal_norm_uniformality}}
\begin{proof}
We just need to prove that the trajectory empirical distribution $c_s(x)$ is an uniform distribution when it minimizes $\int_{\mathcal{S}} c_s(x)^2 dx$. This can formulated as the following functional optimization problem:
\begin{align}
    p^*(x) & = \argmin_{p(x)} \int_{\mathcal{S}} p(x)^2 dx,  \text{ s.t. } \int_{\mathcal{S}} p(x) dx = 1 
\end{align}
To solve it, we first formulate the Lagrangian operator:
\begin{align}
    \mathcal{L}(p, \lambda) = \int_{\mathcal{S}} p(x)^2 dx - \lambda \cdot \left(\int_{\mathcal{S}} p(x) dx - 1\right)
\end{align}
The necessary condition for $p^*(x)$ to be an extreme is (Theorem 1, Page 43~\cite{gelfand_calculus_2000}):
\begin{align}
    \frac{\partial\mathcal{L}}{\partial p}(p^*, \lambda) & = 2 p^*(x) - \lambda = 0
\end{align} which gives us $p^*(x) = \frac{\lambda}{2}$. By substituting this equation back to the equality constraint we have:
\begin{align}
    \int_{\mathcal{S}} p^*(x) dx & = \frac{\lambda}{2} \int_{\mathcal{S}} 1\cdot dx = \frac{\lambda}{2} \vert\mathcal{S}\vert = 1
\end{align}
Therefore $\lambda = \frac{2}{\vert\mathcal{S}\vert}$, and we have:
\begin{align}
    p^*(x) = \frac{\lambda}{2} = \frac{1}{\vert\mathcal{S}\vert}
\end{align} which is the probablity density function of a uniform distribution. To show that $p^*(x)$ as an extreme is a minimum instead of a maximum, we just need to find a distribution that has larger norm than $p^*(x)$. To do so, we define a distribution $p^\prime(x)$ that has value $\frac{1}{2\vert\mathcal{S}\vert}$ for half the search space $\mathcal{S}$, and has value of $\frac{3}{2\vert\mathcal{S}\vert}$. It's easy to show that $\Vert p^\prime(x)\Vert>\Vert p^*(x)\Vert$, thus $p^*(x)$ is the global minimum, which completes the proof.
\end{proof}

\noindent\textbf{Proof for Lemma~\ref{lemma:gateaux_kernel}}
\begin{proof}
    Based on the definition of Gateaux derivative, we have:
    \begin{align}
        & D \mathcal{E}_{\theta}(s(t),p(x)) \cdot z(t) \nonumber \\
        & = \lim_{\epsilon\rightarrow 0} \frac{d}{d\epsilon} \mathcal{E}_{\theta}(s(t) + \epsilon\cdot z(t), p(x)) \nonumber \\
        & = - \frac{2}{T} \int_0^T \lim_{\epsilon\rightarrow 0} \left[ \frac{d}{d\epsilon} p\Big(s(t) + \epsilon z(t)\Big) \right] dt + \frac{1}{T^2} {\int_{0}^{T}\int_{0}^{T}} \nonumber \\
        & \quad\quad \lim_{\epsilon\rightarrow 0} \left[ \frac{d}{d\epsilon} \phi_\theta\Big(s(t_1){+}\epsilon z(t_1), s(t_2){+}\epsilon z(t_2)\Big) \right] dt_1 dt_2 \nonumber \\
        & = - \frac{2}{T} \int_0^T \lim_{\epsilon\rightarrow 0} \left[ \frac{d}{d\epsilon} p\Big(s(t) + \epsilon z(t)\Big) \right] dt + \frac{1}{T^2} {\int_{0}^{T}\int_{0}^{T}} \nonumber \\
        & \quad\quad\quad \lim_{\epsilon\rightarrow 0} \left[ \frac{d}{d\epsilon} \phi_\theta\Big(s(t_1){+}\epsilon z(t_1), s(t_2){+}\epsilon z(t_2)\Big) \right] dt_1 dt_2 \nonumber \\
        & = - \frac{2}{T} \int_0^T \frac{d}{ds(t)} p\Big(s(t)\Big) \cdot z(t) dt \nonumber \\
        & \quad + \frac{1}{T^2} {\int_{0}^{T}\int_{0}^{T}} \left[ \frac{d}{ds(t_1)} \phi_\theta\Big(s(t_1), s(t_2)\Big) \right] \cdot z(t_1) \nonumber \\
        & \quad\quad\quad\quad\quad\quad + \left[ \frac{d}{ds(t_2)} \phi_\theta\Big(s(t_1), s(t_2)\Big) \right] \cdot z(t_2) dt_1 dt_2. 
    \end{align} Since the Gaussian kernel function $\phi_\theta(\cdot,\cdot)$ is symmetric and stationary, we have:
    \begin{align}
        & {\int_{0}^{T}\int_{0}^{T}} \left[ \frac{d}{ds(t_1)} \phi_\theta\Big(s(t_1), s(t_2)\Big) \right] \cdot z(t_1) dt_1 dt_2 \nonumber \\
        & = {\int_{0}^{T}\int_{0}^{T}} \left[ \frac{d}{ds(t_2)} \phi_\theta\Big(s(t_1), s(t_2)\Big) \right] \cdot z(t_2) dt_1 dt_2.
    \end{align} Therefore, we have:
    \begin{align}
        & D \mathcal{E}_{\theta}(s(t),p(x)) \cdot z(t) \nonumber \\
        & = - \frac{2}{T} \int_0^T \frac{d}{ds(t)} p\Big(s(t)\Big) \cdot z(t) dt \nonumber \\
        & \quad + \frac{2}{T^2} {\int_{0}^{T}\int_{0}^{T}} \left[ \frac{d}{ds(t_1)} \phi_\theta\Big(s(t_1), s(t_2)\Big) \right] \cdot z(t_1) dt_1 dt_2 \nonumber \\
        & = \int_0^T \Bigg[-\frac{2}{T} \frac{d}{ds(t)} p\Big(s(t)\Big) + \nonumber \\
        & \quad\quad\quad\quad\quad \frac{2}{T^2} \int_0^T \frac{d}{ds(t)} \phi_\theta\Big(s(t), s(\tau) \Big) d\tau \Bigg] \cdot z(t) dt,
    \end{align} which completes the proof.
\end{proof}

\end{document}